\newlength{\trianglewidth}
\newlength{\pluswidth}
\newcommand{\righttriangleplus}{%
    \mathrel{\makebox[\trianglewidth]{%
        \raisebox{.3\height}{\clipbox{.3\width{} .3\height}{\(+\)}}%
        \hspace*{-.333\pluswidth}%
        \makebox[\trianglewidth]{\(>\)}%
    }}%
}
\newcommand{\ReLU}{\mathrm{ReLU}}
\newtheorem{thm}{Theorem}
\newtheorem{lem}[thm]{Lemma}
\newtheorem{prop}[thm]{Proposition}
\newtheorem{cor}[thm]{Corollary}
\theoremstyle{remark}
\newtheorem{rem}{Remark}
\theoremstyle{definition}
\newtheorem{dfn}[thm]{Definition}
\newcommand{\real}{\mathbb{R}}
\newcommand{\R}{\mathbb{R}}
\newcommand{\N}{\mathbb{N}}
\newcommand{\C}{\mathbb{C}}
\newcommand{\CalC}{\mathcal{C}}
\newcommand{\CalS}{\mathcal{S}}
\newcommand{\cR}{\mathcal{R}}
\newcommand{\cN}{\mathcal{N}}
\newcommand{\Z}{\mathbb{Z}}
\newcommand{\F}{\mathcal F}
\newcommand{\eps}{\varepsilon}
\newcommand{\vol}{\operatornamewithlimits{vol}}
\newcommand{\BB}{\mathbb{B}}
\newcommand{\CalB}{\mathcal{B}}
\newcommand{\be}{\begin{equation}}
\newcommand{\ee}{\end{equation}}
\newlength{\fixboxwidth}
\begin{document}

\title{Nonlocal techniques for the analysis of \\
deep ReLU neural network approximations
}
\author{C. Schneider\footnote{Friedrich-Alexander Universit\"at Erlangen, Applied Mathematics III, Cauerstr. 11, 91058 Erlangen, Germany. Email: \href{mailto:schneider@math.fau.de}{schneider@math.fau.de}}, \,
M. Ullrich\footnote{Institute of Analysis \& Department of Quantum Information and Computation, Johannes Kepler University 
% Science Park 2, Altenberger Str. 66a, 4040 
Linz, Austria. Email: \href{mailto:mario.ullrich@jku.at}{mario.ullrich@jku.at}. 
MU is supported by the Austrian Federal Ministry of Education, Science and Research via the Austrian Research Promotion Agency (FFG) through the project FO999921407 (HDcode) funded by the European Union via NextGenerationEU.}, \,
and J. Vyb\'\i ral\footnote{Department of Mathematics, Faculty of Nuclear Sciences and Physical Engineering,
Czech Technical University in Prague, Trojanova 13, 12000 Praha, Czech Republic. Email: \href{mailto:jan.vybiral@fjfi.cvut.cz}{jan.vybiral@fjfi.cvut.cz}.
The work of this author has been supported by the grant P202/23/04720S of the Grant Agency of the Czech Republic.
J.V. is a member of the Nečas center for mathematical modeling.}}

\maketitle

\begin{abstract}
Recently, Daubechies, DeVore, Foucart, Hanin, and Petrova introduced a system of piece-wise linear
functions, which can be easily reproduced by artificial neural networks with the ReLU activation function and which form a Riesz basis of $L_2([0,1])$.
This work was generalized by two of the authors to the multivariate setting. We show that this system serves as a Riesz basis
also for Sobolev spaces $W^s([0,1]^d)$ and Barron classes ${\mathbb B}^s([0,1]^d)$ with smoothness $0<s<1$.
We apply this fact to re-prove some recent results on the approximation of functions from these classes by deep neural networks.
Our proof method avoids using local approximations and allows us to track also the implicit constants as well as  to show that 
we can avoid the curse of dimension. 
Moreover, we also study how well one can approximate Sobolev and Barron functions by ANNs if only function values are known.

\medskip

\noindent{\em Key Words:} Riesz basis, Rectified Linear Unit (ReLU), artificial neural networks, random sampling, Sobolev classes, Barron classes.\\
{\em MSC2020 Math Subject Classifications:} 68T07, 42C15, 11A25.
\end{abstract}

%\tableofcontents

\section{Introduction}

The authors of \cite{DD+22} introduced a system of piece-wise linear functions,
which resembles the behavior of  a trigonometric orthonormal basis on the interval $[0,1]$
but, simultaneously, can be also easily reproduced by artificial neural networks with the $\ReLU(x):=\max\{x,0\}$ activation function.
To be more precise, for $x\in[0,1]$, we consider  
\begin{equation}\label{C-intro}
{\CalC} (x):=4\left|x-\frac{1}{2}\right|-1
=\begin{cases}1-4x,\ x\in[0,1/2),\\4x-3,\ x\in[1/2,1]\end{cases}
\end{equation}
and
\[
{\CalS}(x):=\left|2-4\left|x-\frac{1}{4}\right|\right|-1=\begin{cases}
4x,\ &x\in[0,1/4),\\
2-4x,\ &x\in[1/4,3/4),\\
4x-4, &x\in [3/4,1].
\end{cases}
\]
Indeed, $\CalC(x)$ and $\CalS(x)$ are piece-wise linear functions, which interpolate $\cos(2\pi x)$ and $\sin(2\pi x)$ for $x\in\{0,1/4,1/2,3/4,1\}$.
For $x\in\R$, we extend this definition periodically, i.e. $\CalC(x)=\CalC(x-\lfloor x\rfloor)$ and $\CalS(x)=\CalS(x-\lfloor x\rfloor)$. 
Moreover, for $k\ge 1$ and $x\in\R$, we put $\CalC_k(x):=\CalC(kx)$ and $\CalS_k(x):=\CalS(kx)$.

It was shown in \cite{DD+22}, that the set
% \begin{equation}\label{eq:CSsystem}
% {\mathcal R}_1:=
% \{
% % \sqrt{3}\,\CalC_k,\sqrt{3}\,\CalS_k\colon 
%  \CalC_k,\,\CalS_k\colon 
% k\in\N\}
% \end{equation}
$\{\CalC_k,\,\CalS_k\colon k\in\N\}$
forms a Riesz basis of the subspace of $L_2([0,1])$ of functions with vanishing mean. This means that
\begin{equation}\label{eq:Riesz}
c(\|\alpha\|_2^2+\|\beta\|_2^2)\le \left\|\sum_{k=1}^\infty(\alpha_k \CalC_k+\beta_k\CalS_k)\right\|^2_2\le C(\|\alpha\|_2^2+\|\beta\|_2^2)
\end{equation}
holds for two absolute constants $C,c>0$ and for any two real sequences $\alpha=(\alpha_k)_{k=1}^\infty$ and $\beta=(\beta_k)_{k=1}^\infty$
and that every function from $L_2([0,1])$ with vanishing mean lies in the closed linear span of it. 
% \eqref{eq:CSsystem}.
The values of $C,c>0$ can be chosen, for example, as $c=1/6$ and $C=1/2$. 
% The factor $\sqrt{3}$ in \eqref{eq:CSsystem} only ensures that the elements of \eqref{eq:CSsystem} have unit $L_2$-norm and a similar version of \eqref{eq:Riesz} holds for the normalized system \eqref{eq:CSsystem} with constants $1/2$ and $3/2$ instead. 
The functions $\CalC_k$ and $\CalS_k$ 
% in \eqref{eq:CSsystem} 
have $L_2$-norm 
% $\frac{1}{\sqrt{3}}$ 
$3^{-1/2}$, but this is not important in what follows. 
Note that \eqref{eq:Riesz} would hold for the normalized system with constants $1/2$ and $3/2$ instead.
It is also easy to see that adding a constant function 1 to ${\mathcal R}_1$
makes it a Riesz basis of the whole $L_2([0,1])$. Finally, \cite[Theorem 6.2]{DD+22} provides, for every $k\ge 1$, a construction of a feed-forward artificial neural network (abbreviated by ANN in the sequel)
with $L=\lceil \log_2k\rceil+1$ hidden layers and with 2 artificial neurons in each layer, which reconstructs $\CalC_k$ on $[0,1]$.
The same is true for $\CalS_k$ if we increase the number of layers by one.

\medskip

The generalization of this approach to functions of $d\ge 2$ variables and the space $L_2([0,1]^d)$ is not straightforward, and the seemingly simple way
of taking tensor products 
% of the elements of \eqref{eq:CSsystem} 
suffers a number of drawbacks.
Firstly, such functions could not be exactly recovered by ANNs with the ReLU activation function, they could only
be approximated to some limited precision, cf. \cite{EPGB,T,Y17}. And secondly, the ratio of the optimal constants $C$ and $c$ in the corresponding version of \eqref{eq:Riesz}
would grow exponentially with the underlying dimension $d$.

A surprisingly simple and effective generalization of 
% \eqref{eq:CSsystem} 
the univariate Riesz basis
to higher dimensions was discovered in \cite{SV}. 
For this, let us define
\[
\CalC_k(x)\,:=\,\CalC(k\cdot x)\qquad\text{and}\qquad \CalS_k(x)\,:=\,\CalS(k\cdot x), 
\]
where $k\cdot x=\sum_{j}k_j x_j$ is the usual inner product of $k\in\Z^d$ and $x\in\R^d$, 
% where the following system was introduced 
and define the system 

\begin{equation}\label{eq:intro_Riesz_d}
{\mathcal R}_d:=\{1\}\cup\Bigl\{
% {\sqrt{3}}\,\CalC(k\cdot x),{\sqrt{3}}\,\CalS(k\cdot x)
% \mario{\CalC(k\cdot x),\,\CalS(k\cdot x)}
\CalC_k,\,\CalS_k
\colon k\in\Z^d,\, k\,{\righttriangleplus}\, 0 \Bigr\}.
\end{equation}

Here, $k\,\righttriangleplus\, 0$
means that $k=(k_1,\dots,k_d)\in\Z^d$ is not equal to zero and the first non-zero entry of $k$ is positive.
% Furthermore, $k\cdot x=\sum_{j}k_j x_j$ is the usual inner product of $k\in\Z^d$ and $x\in\R^d$.
It was shown in \cite{SV} that ${\mathcal R}_d$ is a Riesz basis of $L_2([0,1]^d)$ for every $d\ge 2$
and that the constants $C,c>0$ in the corresponding analogue of \eqref{eq:Riesz} can be chosen again as 
$c=1/6$ and $C=1/2$ (if we leave out the constant function from ${\mathcal R}_d$, otherwise $C=1$),  independently of $d$.
Again, the elements of \eqref{eq:intro_Riesz_d} have $L_2$-norm~$3^{-1/2}$, and 
can easily be  reproduced by ANNs with the ReLU activation function, see Section~\ref{sec:ANN}.

\medskip

The first aim of the present paper is to extend the results of \cite{SV} to other function spaces than $L_2([0,1]^d)$.
In fact, we show that properly normalized analogues of ${\mathcal R}_d$ are Riesz bases of the Sobolev spaces $W^s([0,1]^d)$ 
and the Barron classes ${\mathbb B}^s([0,1]^d)$ 
% (see Definition \ref{def:Barron} below) 
for every $0<s<1$, see Section~\ref{Sec:2} for the exact statements.
In both cases, the constants in \eqref{eq:Riesz} can again be chosen independently of~$d$. 

\medskip

Since the appearance of ANNs, many authors investigated, which functions can be computed (or well-approximated) by ANNs of a given structure.
This research field is known as \emph{expressivity} of ANNs \cite{Ragu} and it aims to help to explain the empirical success of ANNs.
An important breakthrough was achieved already in \cite{Barron}, where a general result about convex hulls in Hilbert spaces (attributed to Maurey in \cite{Pisier})
was exploited to show that approximation of functions with finite first Fourier moment by ANNs does not 
suffer the curse of dimensionality, in contrast to any linear method of approximation. 
These function classes are nowadays called Barron classes (together with their numerous variants).
Since then, the approximation of functions from different function spaces (Sobolev spaces, functions of bounded variation,
Lipschitz continuous functions, etc.) using neural networks attracted a lot of attention 
% (see \cite{Barron2,BGKP, 16,E1, Mhaskar', Mhaskar, 40, 50, Siegel21, S23,Voigt1, 61} to name just few)
and many optimal and nearly optimal results are nowadays available.
We refer to~\cite{EMW22,EW-Banach,E1} for more on the representation of ANNs, 
as well as~\cite{Barron2,BGKP,DD+22,DeVore-2, Mhaskar', Mhaskar, 50, S23, Y17} and references therein for further information on approximation with ANNs.

\medskip

%\textcolor{magenta}{
In Section~\ref{sec:ANN} we apply our new  results concerning the behavior of \eqref{eq:intro_Riesz_d} in Sobolev and Barron spaces
to the approximation of functions from these spaces by deep ANNs. We avoid any use of local approximation, which usually leads to a bad dependence of the implicit constants on the dimension and to some quite technical computations.
Instead, we work exclusively with the building blocks of \eqref{eq:intro_Riesz_d}, which are defined on the whole unit cube of $\R^d$. The proof method we use is actually quite straightforward - we decompose a given $f$
from one of these function spaces into a series involving the basis functions from \eqref{eq:intro_Riesz_d}. Truncating this series at a suitable
position splits the series into two parts. The first one is recovered exactly by a suitably chosen ANN, the second one is simply the error of approximation.
For Barron spaces, we need to combine this approach with the concept of \emph{best $n$-term approximation}, well-known in non-linear approximation theory,
to select the most important terms from the series decomposition of $f$. Our technique allows us to work exclusively on the sequence space level and,
by the properties of \eqref{eq:intro_Riesz_d} as shown in Section \ref{Sec:2}, we virtually pay  no price for this step. 
In particular,
we are able to track the dependence of the parameters on the underlying dimension $d$ and we show that in some cases we can indeed avoid their exponential
dependence on $d$. 
%as our results are independent  of the underlying dimension $d$.
In this way, we reprove some of the known results but using an essentially different technique.
%}

%Moreover, our recovery algorithm is deterministic and does not make any use of randomization as applied for example in \cite{Barron}.

\medskip

Finally, in Section~\ref{sec:discussion}, 
we discuss our results and compare them to the existing literature. 
We also comment on learning the specific ANN 
based on (randomly chosen) function values of $f$ in Section \ref{sec:sampling}.

%%%%%%%%%%%%%%%%%%%%%%%%%%%%%%%%%%%%%%%%%%%
\section{Riesz bases of Sobolev and Barron classes}\label{Sec:2}

The aim of this section is to study the properties of the univariate system 
% \eqref{eq:CSsystem} 
$\cR_1$ and its multivariate analogue $\cR_d$ from \eqref{eq:intro_Riesz_d}
in other function spaces than just $L_2([0,1])$ or $L_2([0,1]^d)$, respectively. To simplify the presentation,
we first deal with the univariate Sobolev spaces before we come their high-dimensional counterparts and to the Barron classes.

\subsection{Univariate Sobolev classes}

Let us consider a real-valued square-integrable function $f:{\mathbb R}\to {\mathbb R}$, which is periodic with period one, i.e., $f$ is given by
\begin{equation}\label{eq:decomp_f_Fourier}
f(x)\,=\, a_0 \,+\, \sum_{m=1}^\infty a_m\cos(2\pi mx) + b_m\sin(2\pi mx),\quad x\in [0,1].
\end{equation}
For a real number $s\ge 0$, we define the usual Sobolev spaces of periodic functions of order~$s$ by 
\begin{align*}
% W^s&:=
W^s([0,1])&:=\left\{f:f \text{ is given by \eqref{eq:decomp_f_Fourier} and }\|f\|_{W^s}^2:= 
a_0^2+
\sum_{m=1}^\infty m^{2s}(a_m^2+b_m^2)<\infty\right\},
\end{align*}
and note that $W^0([0,1])=L_2([0,1])$.
% Obviously, $W^0([0,1])$ contains all functions from $L_2([0,1])$ with vanishing integral. 
% Further note that 
% \[
% \|f^{(s)}\|_{L_2}=2^{s-1/2}\cdot \pi^s\cdot \|f\|_{W^s}
% \]
% for $s\in\N$, where $f^{(s)}$ denotes the $s$-th derivative of $f$.

It was shown in \cite{DD+22} that the system $\cR_1$ from \eqref{eq:intro_Riesz_d} 
forms a Riesz basis in $W^0([0,1])$. The aim of this section is to investigate the properties of 
% \eqref{eq:CSsystem} 
$\cR_1$ 
as subsets of $W^s([0,1])$.
We start by defining the analogues of $W^s([0,1])$ based on \eqref{eq:intro_Riesz_d}.
Let
\begin{equation}\label{eq:decomp_f_CalC}
f(x) = \alpha_0 \,+\, \sum_{k=1}^\infty \alpha_k \CalC_k(x) + \beta_k \CalS_k(x), \quad x\in [0,1],
\end{equation}
with $\alpha_k,\beta_k\in\R$, 
% and with convergence in $L_2([0,1])$. 
which converges in $L_2([0,1])$ if $\alpha=(\alpha_k)_{k=1}^\infty$ and $\beta=(\beta_k)_{k=1}^\infty$ are square-summable.
For $s\ge 0$, we define
\begin{align*}
% \F^s&=
\F^s([0,1])\,:=\,\left\{f:f \text{ is given by \eqref{eq:decomp_f_CalC} and }\|f\|_{\F^s}^2:=\alpha_0^2 \,+\,\sum_{m=1}^\infty m^{2s}(\alpha_m^2+\beta_m^2)<\infty\right\}.
\end{align*}

The following theorem shows that $\F^s$ is  a useful tool  for the analysis of $W^s$.

\begin{thm}\label{thm:d1_WF} Let $0\le s<1$. Then, $W^s([0,1])=\F^s([0,1])$ in the sense of equivalent norms. 
Moreover, the system
\begin{equation}\label{eq:Riesz_1}
    \{1\}\cup\left\{k^{-s}\CalC_k, k^{-s}\CalS_k\colon k\in \N\right\}
\end{equation}
is 
% the Riesz basis of 
% the subspace of $W^s([0,1])$ consisting of functions with vanishing mean. Alternatively, adding 
% the constant function 1 to \eqref{eq:Riesz_1} makes it 
a Riesz basis of $W^s([0,1])$ 
and for constants $c,C>0$ it holds  
\begin{equation}\label{ineq-Riesz-1d}
c\,\sum_{k\in \mathbb{N}}
(\alpha_k^{2}+\beta_k^2) 
\;\le\; 
\left\|\sum_{k\in \mathbb{N}}k^{-s}\Bigl(\alpha_k\CalC_k+ 
\beta_k \CalS_k\Bigr)\right\|^2_{W^s}
\;\le\; C\,\sum_{k\in \mathbb{N}}
(\alpha_k^{2}+\beta_k^2). 
\end{equation}
Modifications of \eqref{ineq-Riesz-1d} for functions with constant term are obvious (by incorporating $\alpha_0$).
\end{thm}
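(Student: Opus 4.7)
The plan is to diagonalize the $W^s$ norm using the standard Fourier basis, and explicitly compute the Fourier series of $\CalC$ and $\CalS$. A short calculation gives
\[
\CalC(x)=\frac{8}{\pi^2}\sum_{j=0}^{\infty}\frac{\cos(2\pi(2j+1)x)}{(2j+1)^2},\qquad \CalS(x)=\frac{8}{\pi^2}\sum_{j=0}^{\infty}\frac{(-1)^j\sin(2\pi(2j+1)x)}{(2j+1)^2},
\]
so only odd harmonics of the base frequency appear. Dilating by $k$ and collecting, the Fourier coefficients $(a_m,b_m)$ of $f=\alpha_0+\sum_{k\ge 1}(\alpha_k\CalC_k+\beta_k\CalS_k)$ satisfy
\[
a_m=\frac{8}{\pi^2}\sum_{\substack{\ell\mid m\\\ell\text{ odd}}}\frac{\alpha_{m/\ell}}{\ell^2},\qquad b_m=\frac{8}{\pi^2}\sum_{\substack{\ell\mid m\\\ell\text{ odd}}}\frac{\chi(\ell)\,\beta_{m/\ell}}{\ell^2},
\]
where $\chi(\ell)=(-1)^{(\ell-1)/2}$ is the nontrivial Dirichlet character modulo $4$. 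Setting $\tilde\alpha_k=k^s\alpha_k$ and $\tilde a_m=m^sa_m$, the factor $m^s/(m/\ell)^s=\ell^s$ converts the cosine relation into $\tilde a_m=\frac{8}{\pi^2}\sum_{\ell\mid m,\,\ell\text{ odd}}\tilde\alpha_{m/\ell}\,\ell^{s-2}$, i.e., a convolution on the multiplicative semigroup of odd integers with positive kernel $g(\ell)=\ell^{s-2}$, and analogously for the sine block.

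For the upper bound $\|f\|_{W^s}\le C\|f\|_{\F^s}$, I would apply Schur's test with uniform weight $w\equiv 1$ to the operator $\tilde\alpha\mapsto\tilde a$. Both row and column sums are bounded by the Dirichlet-type series $\sum_{\ell\text{ odd}}\ell^{s-2}$, which converges precisely when $s<1$; this is exactly where the restriction on $s$ enters. The sine block is treated identically since $|\chi|\le 1$.

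For the lower bound a Neumann-series inversion $T=\tfrac{8}{\pi^2}(I+R)$ would fail for $s$ close to $1$, because $\|R\|$ may exceed $1$. I would instead invert the convolution explicitly by M\"obius inversion on the odd integers: since $g$ is completely multiplicative, its Dirichlet inverse restricted to odd integers is $\mu(\ell)\ell^{s-2}$, which gives
\[
\tilde\alpha_n=\frac{\pi^2}{8}\sum_{\substack{\ell\mid n\\\ell\text{ odd}}}\frac{\mu(\ell)}{\ell^{2-s}}\,\tilde a_{n/\ell}.
\]
A second Schur estimate with uniform weight, using only $|\mu|\le 1$, yields $\|f\|_{\F^s}\le c^{-1}\|f\|_{W^s}$, and the sine block is again identical with $\chi\mu$ in place of $\mu$. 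Together these produce both sides of \eqref{ineq-Riesz-1d}. The spanning property needed for the full Riesz basis statement is inherited from \cite{DD+22}: every $f\in W^s\subset L_2([0,1])$ already admits a unique $L_2$-convergent $\{\CalC_k,\CalS_k\}$-expansion, and the lower bound now certifies that its coefficients lie in the correctly weighted $\ell_2$, so the expansion converges in $W^s$ as well.

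I expect the main obstacle to be the lower bound, precisely because the naive diagonal-plus-perturbation decomposition of $T$ is not a contraction near $s=1$; one really has to exploit the multiplicative structure of the kernel. M\"obius inversion routes the entire quantitative $s$-dependence through the single series $\sum_{\ell\text{ odd}}\ell^{s-2}$, whose divergence at $s=1$ gives a natural reason for the hypothesis $s<1$.
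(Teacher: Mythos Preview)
Your proposal is correct and essentially the same as the paper's own proof. Both arguments expand one basis in terms of the other via the explicit Fourier series of $\CalC,\CalS$ (and its M\"obius inverse for the reverse direction), then bound the resulting weighted convolution operator on $\ell_2$ by a uniform row/column-sum estimate---your Schur test with weight $1$ is exactly the paper's ``bounded on $\ell_1$ and $\ell_\infty$, hence on $\ell_2$'' argument, and both reduce everything to the convergence of $\sum_{\ell\text{ odd}}\ell^{s-2}$ for $s<1$.
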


\begin{proof} 
We assume without any loss of generality that $a_0=\alpha_0=0$ throughout the proof.

\emph{Step 1.} First, we show that $W^s([0,1])\hookrightarrow \F^s([0,1])$.
Let $f$ be given by \eqref{eq:decomp_f_Fourier}.
We combine it with \cite{DD+22} (cf. also \cite[Lemma 2.5]{SV}), which states that
\begin{equation}\label{eq:cos_Riesz}
\sqrt{2}\cos(2\pi x)=\sum_{\ell=0}^\infty \frac{\mu(2\ell+1)}{(2\ell+1)^2}\cdot \frac{\sqrt{3}}{\kappa}{\mathcal C}_{2\ell+1}(x),
\end{equation}
\begin{equation}\label{eq:sin_Riesz}
\sqrt{2}\sin(2\pi x)=\sum_{\ell=0}^\infty (-1)^\ell\frac{\mu(2\ell+1)}{(2\ell+1)^2}\cdot \frac{\sqrt{3}}{\kappa}{\mathcal S}_{2\ell+1}(x), 
\end{equation}
where $\kappa^2=96/\pi^4$ and 
$\mu(n)\in\{-1,0,1\}$ denotes the  M\"obius function, i.e., the sum of the primitive $n$-th roots of unity.
Let us note, that $\mu(n)=0$ if the $n$ is not square-free, i.e., if it is divisible by some squared prime. 
%\[
%\mu(n)=\begin{cases}
%+1,\quad &\text{if $n$ is a square-free integer with an even number of prime factors},\\
%-1,\quad &\text{if $n$ is a square-free integer with an odd number of prime factors},\\
%0,\quad &\text{if $n$ is not a square-free integer, i.e., if it is divisible by some squared prime.}
%\end{cases}
%\]

Plugging \eqref{eq:cos_Riesz} and \eqref{eq:sin_Riesz} into \eqref{eq:decomp_f_Fourier}, we obtain
\begin{align*}
f(x)&=\frac{\sqrt{3}}{\sqrt{2}\,\kappa}\biggl\{\sum_{m=1}^\infty a_m \sum_{\ell=0}^\infty \frac{\mu(2\ell+1)}{(2\ell+1)^2}\, {\mathcal C}_{(2\ell+1)m}(x)\\
&\qquad \qquad +\sum_{m=1}^\infty b_m \sum_{\ell=0}^\infty (-1)^\ell\frac{\mu(2\ell+1)}{(2\ell+1)^2}\, {\mathcal S}_{(2\ell+1)m}(x)\biggr\}\\
&=\frac{\sqrt{3}}{\sqrt{2}\,\kappa}\sum_{k=1}^\infty \Bigl[\alpha_k \CalC_k(x) + \beta_k \CalS_k(x)\Bigr],
\end{align*}
where
\begin{gather*}
\alpha_k=\sum_{(\ell,m):k=(2\ell+1)m} a_m\cdot\frac{\mu(2\ell+1)}{(2\ell+1)^2} \quad\text{and}\quad 
%\hspace*{-.8\textwidth} \text{and} \label{eq:coef} \\
\beta_k=\sum_{(\ell,m):k=(2\ell+1)m} b_m\cdot(-1)^{\ell}\frac{\mu(2\ell+1)}{(2\ell+1)^2}.
\end{gather*}

%For non-negative sequences $\sigma$ and $\tau$, we define two kinds of function spaces.
%\begin{align*}
%W^\sigma_{c}&=\{f:f \text{ is given by \eqref{eq:decomp_f_Fourier} and }\sum_{m=1}^\infty \alpha_m^2/\sigma_m^2<\infty\},\\
%W^\tau_{{\mathcal C}}&=\{f:f \text{ is given by \eqref{eq:decomp_f_CalC} and }\sum_{m=1}^\infty \beta_m^2/\tau_m^2<\infty\}.
%\end{align*}
%We are interested in embeddings $W^\sigma_{c}\hookrightarrow W^\tau_{{\mathcal C}}$. If $s>0$ and $\sigma_m=m^{-s}$, we simply write $W^s_c$ (or $W^s_{\CalC}$).

We now show that %$\alpha^s:=
$(\alpha_k\cdot k^s)_{k=1}^\infty$ is square summable if $0\le s<1$ and
$a^s:=(a_m\cdot m^s)_{m=1}^\infty$ is square summable. 
We rewrite
\begin{align}
\notag\sum_{k=1}^\infty \alpha_k^2\,k^{2s}&=\sum_{k=1}^\infty k^{2s}\sum_{(\ell,m):k=(2\ell+1)m}a_m\frac{\mu(2\ell+1)}{(2\ell+1)^2}\cdot \sum_{(\ell',m'):k=(2\ell'+1)m'}a_{m'}\frac{\mu(2\ell'+1)}{(2\ell'+1)^2}\\
\notag &=\sum_{\substack{k,\ell,m,\ell',m'\\k=(2\ell+1)m\\k=(2\ell'+1)m'}} k^{2s} a_m\frac{\mu(2\ell+1)}{(2\ell+1)^2}\cdot a_{m'}\frac{\mu(2\ell'+1)}{(2\ell'+1)^2}\\
\notag &=\sum_{m,m'=1}^\infty a_m a_{m'} m^s (m')^s\sum_{\substack{k,\ell,\ell'\\k=(2\ell+1)m\\k=(2\ell'+1)m'}} \frac{\mu(2\ell+1)}{(2\ell+1)^{2-s}}\cdot \frac{\mu(2\ell'+1)}{(2\ell'+1)^{2-s}}\\
\label{eq:aXa}&=\sum_{m,m'=1}^\infty a_m m^s \cdot a_{m'}(m')^s X_{m,m'}=\langle a^s,X a^s\rangle,
\end{align}
where
\begin{equation}\label{eq:defX}
X_{m,m'}=\sum_{\substack{\ell,\ell'\\(2\ell+1)m=(2\ell'+1)m'}}\frac{\mu(2\ell+1)}{(2\ell+1)^{2-s}}\cdot\frac{\mu(2\ell'+1)}{(2\ell'+1)^{2-s}}.
\end{equation}

We aim to show that $X=(X_{m,m'})_{m,m'=1}^\infty$ is a bounded operator on $\ell_2$, which together with
\eqref{eq:aXa}, finishes the proof.
We shall use the symmetry of $X$ and the following simple observation. If $\sum_{m'}|X_{m,m'}|$ is uniformly bounded over $m$, then $X$ generates
a bounded linear operator on $\ell_1$ and also on $\ell_\infty$ and,
by interpolation, also on $\ell_2$.

We use that, for any $m\ge 1$ fixed,
\begin{align*}
\sum_{m'=1}^\infty |X_{m,m'}|\le \sum_{\ell,\ell'=1}^\infty \frac{|\mu(2\ell+1)|}{(2\ell+1)^{2-s}}\cdot\frac{|\mu(2\ell'+1)|}{(2\ell'+1)^{2-s}}\sum_{m':(2\ell+1)m=(2\ell'+1)m'}1.
\end{align*}
The last sum is (at most) 1 and the sum over $\ell$ and $\ell'$ is convergent.

\emph{Step 2.} The proof of $\F^s([0,1]) \hookrightarrow W^s([0,1])$ for $0\le s<1$
follows the same pattern, only now we assume that $f$ is given by \eqref{eq:decomp_f_CalC}
and invoke the decomposition of $\CalC$ and $\CalS$ into their respective Fourier series, cf. \cite[p. 166]{DD+22}.
As the coefficients of these series decay again quadratically (similarly to \eqref{eq:cos_Riesz} and \eqref{eq:sin_Riesz}),
the rest of the proof follows in the same manner.

\emph{Step 3.}  We finally show that \eqref{eq:Riesz_1} is a Riesz basis of $W^s$. 
For this, note that the above shows 
$\|f\|_{W^s}\approx\|f\|_{\F^s}$, 
where ``$\approx$'' means upper and lower bounded with constants that are independent of $f$. In particular, 
\[
\left\|\sum_{k\in\N} k^{-s} \Bigl[\alpha_k \CalC_k + \beta_k \CalS_k\Bigr]\right\|^2_{W^s}
\approx 
\left\|\sum_{k\in\N} k^{-s} \Bigl[\alpha_k \CalC_k + \beta_k \CalS_k\Bigr]\right\|^2_{\F^s}=
\sum_{k\in\N} k^{2s} \, \frac{\alpha_k^2+\beta_k^2}{k^{2s}}, 
\]
proving the claim.
\end{proof}

\medskip

\begin{rem} %\label{rem:large-s}
We comment on the restriction to $0\le s<1$ in Theorem \ref{thm:d1_WF}.

% \medskip

If $s\ge 3/2$, then \eqref{eq:cos_Riesz} implies that $\cos(2\pi x)$ does not lie in $\F^s([0,1])$ and, similarly,
one can also show that $\CalC\not\in W^s([0,1])$. Therefore, Theorem \ref{thm:d1_WF} cannot hold if $s\ge 3/2$.

% \medskip

The case $1\le s<3/2$ is more delicate and we conjecture that also in this case Theorem \ref{thm:d1_WF} fails.
Observe, that the proof of Theorem \ref{thm:d1_WF} reduces the study of the embedding
$W^s([0,1])\hookrightarrow \F^s([0,1])$ to the boundedness of the matrix $X=(X_{m,m'})_{m,m'=1}^\infty$
with $X_{m,m'}$ given by \eqref{eq:defX} on $\ell_2.$ To rewrite \eqref{eq:defX} for fixed $m,m'\in\N$,
let $g=\operatorname{gcd}(m,m')$ be the greatest common divisor of $m$ and $m'$. For simplicity, assume for the moment
that $m$ and $m'$ are odd. Then all the integer solutions of the equation $(2\ell+1)m=(2\ell'+1)m'$
are given by $2\ell+1=(2t+1)\cdot \frac{m'}{g}$ and $2\ell'+1=(2t+1)\cdot \frac{m}{g}$, where $t\in\N_0$ is arbitrary.
Therefore, we obtain
\begin{align*}
X_{m,m'}&=\sum_{t=0}^\infty
\frac{\mu((2t+1)\cdot\frac{m'}{g})\mu((2t+1)\cdot\frac{m}{g})}{((2t+1)\cdot \frac{m'}{g})^{2-s}((2t+1)\cdot \frac{m}{g})^{2-s}}\\
&=\frac{\operatorname{gcd}(m,m')^{4-2s}}{m^{2-s}{m'}^{2-s}}
\sum_{t=0}^\infty\frac{\mu((2t+1)\cdot\frac{m'}{g})\mu((2t+1)\cdot\frac{m}{g})}{(2t+1)^{4-2s}}.
\end{align*}
The boundedness of the infinite matrix $Y=(\operatorname{gcd}(m,m')^{4-2s}/[m\cdot m']^{2-s})_{m,m'=1}^\infty$
was investigated in \cite{LK}, \cite{HLS}, and \cite[page 578]{W} and it is known to fail if $3/2>s\ge 1$.
Therefore, we believe that also $X$ is not bounded on $\ell_2$ for this range of $s$.\\
\end{rem}

\begin{rem}
Finally, let us remark that by using similar techniques as above we can also prove the embedding 
\begin{equation*}%\label{embedding_W_F}
W^{ s }([0,1])\hookrightarrow \F^{\gamma}([0,1])\qquad \text{for all }\quad   \gamma<\min\left\{s,\frac{1+s}{2},\frac{3}{2}\right\},  
\end{equation*}
which, in particular, implies 
$W^{ 1 }([0,1])\hookrightarrow \F^{1-\delta}([0,1])$ 
for all $\delta>0$.
We omit the details.  
\end{rem}

\goodbreak

%%%%%%%%%%%%%%%%%%%%%%%%%%%%%%%%%%%%%%%%
\subsection{Multivariate Sobolev classes}
We now show how the results of the previous section can be extended to higher dimensions.
In general, we follow the scheme presented for the univariate case.
To simplify the notation, we replace the orthonormal basis of dilated $\cos$ and $\sin$ functions from \eqref{eq:decomp_f_Fourier}
by the usual orthonormal basis of exponentials.

We consider the complex-valued $d$-variate Fourier series
\begin{equation}\label{eq:f_Four}
f(x)=\sum_{m\in\Z^d} a_m e^{2\pi i m\cdot x},\quad x\in[0,1]^d
\end{equation}
and define for every $s\ge 0$ the Sobolev space
\[
% W^s=
W^s([0,1]^d):=\left\{f: f\text{ given by \eqref{eq:f_Four}
and }\|f\|^2_{W^s}=
|a_0|^2+\sum_{m\in\Z^d\setminus\{0\}}\|m\|^{2s}_2\cdot |a_m|^2<\infty\right\}.
\]
As mentioned already in the introduction, it was discovered in \cite{SV}, that 
% the multivariate analogue of \eqref{eq:CSsystem} is given by
% \begin{equation}\label{eq:Riesz_d}
% {\mathcal R}_d:=\{1\}\cup\left\{
% % {\sqrt{3}}\,\CalC(k\cdot x),{\sqrt{3}}\,\CalS(k\cdot x)
% \CalC(k\cdot x),\,\CalS(k\cdot x)
% \colon k\in\Z^d, k\,{\righttriangleplus}\, 0  \right\}.
% \end{equation}
%Here, $k\,\righttriangleplus\, 0$ means that \textcolor{red}{$k=(k_1,\dots,k_d)\in\Z^d$ is not equal to zero}
%and the first non-zero entry of $k$ is positive.
%Furthermore, $k\cdot x=\sum_{j}k_j x_j$ is the usual inner product of $k\in\Z^d$ and $x\in\R^d$.
% It was shown in \cite{SV} that ${\mathcal R}_d$ 
$\cR_d$ from~\eqref{eq:intro_Riesz_d}
is a Riesz basis of $L_2([0,1]^d)$ for every $d\ge 2$
and that the constants in the Riesz-type estimate \eqref{eq:Riesz}  can be chosen independently on $d$. The aim of this section is to extend this result
also to $W^s([0,1]^d)$ for $s$ small enough.

Recall that $\CalC_k(x)\,:=\,\CalC(k\cdot x)$ and $\CalS_k(x)\,:=\,\CalS(k\cdot x)$ and consider the decomposition 
% in the Riesz basis \eqref{eq:intro_Riesz_d} by 
\begin{equation}\label{eq:f_Riesz_d}
f(x)\,=\,\alpha_0+\sum_{k\,\righttriangleplus\,0}
% \Bigl[\alpha_k\CalC(k\cdot x)+\beta_k\CalS(k\cdot x)\Bigr],
\Bigl[\alpha_k\CalC_k(x)+\beta_k\CalS_k(x)\Bigr],
\quad x\in [0,1]^d. 
\end{equation}
Recall that $k\,\righttriangleplus\, 0$
means that $k=(k_1,\dots,k_d)\in\Z^d$ is not equal to zero and the first non-zero entry of $k$ is positive.
We define for $s\ge 0$ the spaces 
\[
% \F^s=
\F^s([0,1]^d):=\left\{f: f\text{ is given by \eqref{eq:f_Riesz_d} and }\|f\|^2_{\F^s}:=\alpha_0^2+\sum_{k\,\righttriangleplus\,0}\|k\|^{2s}_2(\alpha_k^2+\beta_k^2)<\infty\right\}.
\]

We now present the multivariate analogue of Theorem~\ref{thm:d1_WF}. 

\begin{thm}\label{thm:FsWs_d}  
% For $0\le s<1$, we have $W^s([0,1]^d)= \F^s([0,1]^d)$. 
% The corresponding norms are equivalent 
% and the equivalence constants are
% uniformly bounded in $d$. 
Let $0\le s < 1$. Then, $W^s([0,1]^d)= \F^s([0,1]^d)$ in the sense of equivalent norms.
% Furthermore, t
The constants of equivalence of these norms depend on $s$ but are independent of~$d$.
%That is, 
%\[
%c_s\,\|f\|_{W^s} 
%\;\le\; \|f\|_{\F^s} 
%\;\le\; C_s\,\|f\|_{W^s} 
%\qquad \text{ for any } \; f\in W^s([0,1]^d), 
%\]
%where $c_s,C_s>0$ only depend on $s$. 
%\end{thm}
Moreover, the system
\begin{equation}\label{eq:Riesz_d-s}
\{1\}\cup    \left\{\|k\|_2^{-s}\CalC_k,\, \|k\|_2^{-s}\CalS_k\colon k\in \Z^d,\, k\,\righttriangleplus\, 0 \right\}
\end{equation}
is a Riesz basis of $W^s([0,1]^d)$ and the constants $c,C>0$ in 
% That is, for every $f\in W^s=\F^s$, written as in~\eqref{eq:f_Riesz_d}, we have 
\begin{equation}\label{ineq-Riesz-d}
c\,\sum_{k\righttriangleplus 0}
% \|k\|_2^{2s} 
(\alpha_k^{2}+\beta_k^2) 
\;\le\; 
\left\|\sum_{k\righttriangleplus 0}\|k\|_2^{-s}\Bigl(\alpha_k\CalC_k+ 
\beta_k \CalS_k\Bigr)\right\|^2_{W^s}
% \approx \left\|\sum_{k\righttriangleplus 0}\alpha_k\cdot[\|k\|_2^{-s}\CalC(k\cdot x)]\right\|^2_{\F^s}
\;\le\; C\,\sum_{k\righttriangleplus 0}
% \|k\|_2^{2s} 
(\alpha_k^{2}+\beta_k^2) 
\end{equation}
can be chosen independently of $d$. \\
(Again, modifications of \eqref{ineq-Riesz-d} for functions with constant term are obvious.)
%\end{cor}
\end{thm}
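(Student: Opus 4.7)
The argument closely parallels the univariate proof of Theorem~\ref{thm:d1_WF}, so I concentrate on the genuinely new steps. As before I may assume that $a_0=\alpha_0=0$. First I rewrite the complex series \eqref{eq:f_Four} in the real form
\[
f(x)=\sum_{m\,\righttriangleplus\,0}\bigl[\tilde a_m\cos(2\pi m\cdot x)+\tilde b_m\sin(2\pi m\cdot x)\bigr],
\]
which gives $\|f\|_{W^s}^2\approx\sum_{m\,\righttriangleplus\,0}\|m\|_2^{2s}(\tilde a_m^2+\tilde b_m^2)$ with constants independent of $d$. Substituting $m\cdot x$ for $x$ in the M\"obius identities \eqref{eq:cos_Riesz}--\eqref{eq:sin_Riesz} turns $\cos(2\pi m\cdot x)$ and $\sin(2\pi m\cdot x)$ into series of $\CalC_{(2\ell+1)m}$ and $\CalS_{(2\ell+1)m}$ respectively, and reorganising by $k=(2\ell+1)m$ yields the expansion \eqref{eq:f_Riesz_d} with
\[
\alpha_k=\frac{\sqrt{3}}{\sqrt{2}\,\kappa}\sum_{\substack{\ell\ge 0,\,m\,\righttriangleplus\,0\\k=(2\ell+1)m}}\tilde a_m\,\frac{\mu(2\ell+1)}{(2\ell+1)^2}
\]
and an analogous expression for $\beta_k$.

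For the embedding $W^s\hookrightarrow\F^s$, I compute $\sum_{k\,\righttriangleplus\,0}\|k\|_2^{2s}\alpha_k^2$ by expanding the square exactly as in \eqref{eq:aXa}. The key new identity is that whenever $k=(2\ell+1)m=(2\ell'+1)m'$ in $\Z^d$, collinearity of $m$ and $m'$ forces $\|k\|_2=(2\ell+1)\|m\|_2=(2\ell'+1)\|m'\|_2$, so that $\|k\|_2^{2s}$ splits multiplicatively into $(2\ell+1)^s(2\ell'+1)^s\|m\|_2^s\|m'\|_2^s$. This is exactly what is needed to absorb the weights into $a^s_m:=\tilde a_m\|m\|_2^s$, yielding
\[
\sum_{k\,\righttriangleplus\,0}\|k\|_2^{2s}\alpha_k^2=\langle a^s,X^{(d)}a^s\rangle,\qquad X^{(d)}_{m,m'}=\sum_{\substack{\ell,\ell'\ge 0\\(2\ell+1)m=(2\ell'+1)m'}}\frac{\mu(2\ell+1)\mu(2\ell'+1)}{(2\ell+1)^{2-s}(2\ell'+1)^{2-s}},
\]
which has the same shape as the univariate matrix \eqref{eq:defX}, only with the constraint now read in $\Z^d$.

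By the symmetry of $X^{(d)}$ it suffices, via the Schur test, to bound $\sum_{m'\,\righttriangleplus\,0}|X^{(d)}_{m,m'}|$ uniformly in $m$. Here the multivariate situation is in fact simpler than the one-dimensional one: for fixed $\ell,\ell'$ and $m$, the vector equation $(2\ell+1)m=(2\ell'+1)m'$ has at most one solution $m'\in\Z^d$, so
\[
\sum_{m'\,\righttriangleplus\,0}|X^{(d)}_{m,m'}|\;\le\;\sum_{\ell,\ell'\ge 0}\frac{1}{(2\ell+1)^{2-s}(2\ell'+1)^{2-s}}\;<\;\infty,
\]
which is finite for $0\le s<3/2$ and, crucially, free of $d$. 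The reverse embedding $\F^s\hookrightarrow W^s$ is obtained by the symmetric argument, expanding $\CalC$ and $\CalS$ in their Fourier series (whose coefficients again decay like $n^{-2}$), evaluating at $k\cdot x$ and applying the same counting lemma; the Riesz basis assertion \eqref{ineq-Riesz-d} then follows from the norm equivalence exactly as in Step~3 of the proof of Theorem~\ref{thm:d1_WF}. The main point of care throughout is to verify that no $d$-dependent constant sneaks in, which is guaranteed because the M\"obius substitutions happen variable-by-variable and the Schur bound reduces to a pure double sum over $(\ell,\ell')$ with no reference to the ambient dimension.
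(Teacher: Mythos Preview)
Your proof is correct and follows essentially the same route as the paper: rewrite in real Fourier form, substitute $m\cdot x$ into the M\"obius identities \eqref{eq:cos_Riesz}--\eqref{eq:sin_Riesz}, reorganise by $k=(2\ell+1)m$, and reduce the norm estimate to the $\ell_2$-boundedness of the matrix $X^{(d)}$ via the Schur test with a dimension-free row-sum bound. One small slip: the double sum $\sum_{\ell,\ell'\ge0}(2\ell+1)^{-(2-s)}(2\ell'+1)^{-(2-s)}$ converges only for $s<1$, not $s<3/2$ as you write, but this does not affect the argument since the hypothesis is $0\le s<1$.
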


\begin{proof}[Proof of Theorem~\ref{thm:FsWs_d}.] We proceed similarly as in the proof of Theorem \ref{thm:d1_WF}.

\medskip

\emph{Step 1.} First, we show that $W^s([0,1]^d)\hookrightarrow \F^s([0,1]^d)$.

Let $f$ be given by \eqref{eq:f_Four}, i.e., let %Together with \eqref{eq:cos_Riesz} and \eqref{eq:sin_Riesz} we get
\begin{align*}
f(x)=\sum_{m\in\Z^d} a_m e^{2\pi i m\cdot x}=a_0+\sum_{m\,\righttriangleplus\, 0} b_m \cos(2\pi m\cdot x) + \sum_{m\,\righttriangleplus\, 0} b'_m\sin(2\pi m\cdot x),
\end{align*}
where $b_m=a_m+a_{-m}$ and $b'_m=i(a_m-a_{-m})$. We estimate only the first sum, the second can be treated similarly. Using \eqref{eq:cos_Riesz} we obtain
\begin{align*}
\sum_{m\,\righttriangleplus\, 0} b_m \cos(2\pi m\cdot x)&=
\frac{\sqrt{3}}{\sqrt{2}\, \kappa}\sum_{{m\,\righttriangleplus\, 0}} b_m \sum_{\ell=0}^\infty \frac{\mu(2\ell+1)}{(2\ell+1)^2}\cdot {\mathcal C}((2\ell+1)m\cdot x)\\
&=\frac{\sqrt{3}}{\sqrt{2}\, \kappa}\sum_{{k\,\righttriangleplus\, 0}}\alpha_k\CalC(k\cdot x),
\end{align*}
where
\[
\alpha_k=\sum_{(\ell,m):(2\ell+1)m=k}b_m \cdot \frac{\mu(2\ell+1)}{(2\ell+1)^2}.
\]
From this we deduce
\begin{align*}
\sum_{k\,\righttriangleplus\, 0} \alpha_k^2\,\|k\|_2^{2s}&=\sum_{k\,\righttriangleplus\, 0} \|k\|_2^{2s}\sum_{(\ell,m):k=m(2\ell+1)}b_m\frac{\mu(2\ell+1)}{(2\ell+1)^2}\cdot \sum_{(\ell',m'):k=m'(2\ell'+1)}b_{m'}\frac{\mu(2\ell'+1)}{(2\ell'+1)^2}\\
&=\sum_{\substack{k,\ell,m,\ell',m'\\k=m(2\ell+1)\\k=m'(2\ell'+1)}} \|k\|_2^{2s} b_m\frac{\mu(2\ell+1)}{(2\ell+1)^2}\cdot b_{m'}\frac{\mu(2\ell'+1)}{(2\ell'+1)^2}\\
&=\sum_{m,m'\,\righttriangleplus\, 0} b_m \|m\|_2^s \cdot b_{m'}\|m'\|_2^s X_{m,m'},
\end{align*}
where
\begin{align*}
X_{m,m'}&=\sum_{\substack{\ell,\ell'\\(2\ell+1)m=(2\ell'+1)m'}}
\frac{\|(2\ell+1)m\|_2^s}{\|m\|_2^s}\cdot\frac{\|(2\ell'+1)m'\|_2^s}{\|m'\|_2^s}\cdot\frac{\mu(2\ell+1)}{(2\ell+1)^2}\cdot\frac{\mu(2\ell'+1)}{(2\ell'+1)^2}\\
&=\sum_{\substack{\ell,\ell'\\(2\ell+1)m=(2\ell'+1)m'}}\frac{\mu(2\ell+1)}{(2\ell+1)^{2-s}}\cdot\frac{\mu(2\ell'+1)}{(2\ell'+1)^{2-s}}.
\end{align*}

We finish the proof by showing that $X$ has uniformly bounded row sums and, therefore, is bounded on $\ell_2$.
Indeed,
\begin{align*}
\sum_{m'\,\righttriangleplus\, 0} |X_{m,m'}|\le \sum_{\ell,\ell'=0}^\infty \frac{|\mu(2\ell+1)|}{(2\ell+1)^{2-s}}\cdot\frac{|\mu(2\ell'+1)|}{(2\ell'+1)^{2-s}}\sum_{m':(2\ell+1)m=(2\ell'+1)m'}1,
\end{align*}
which is again convergent for $0\le s<1$, and the norm is uniformly bounded in $d\ge 1.$

\medskip

\emph{Step 2.} The inverse embedding $\F^s([0,1]^d)\hookrightarrow W^s([0,1]^d)$ follows in a very similar way. We assume that
$f$ is given by \eqref{eq:f_Riesz_d}, decompose $\CalC$ and $\CalS$ into their respective Fourier series and reduce the embedding to the boundedness
of a certain infinite matrix on $\ell_2$. As it resembles very much the matrix $X$ used above, the arguments are  very similar. 

\medskip

\emph{Step 3.} 
The proof of \eqref{ineq-Riesz-d} is analogous to Step~3 of the proof of Theorem~\ref{thm:d1_WF}. We leave out the details.
\end{proof}

% \begin{cor} Let $d\ge 1$ and $0\le s<1$. Then the system
% \begin{equation}\label{eq:Riesz_1a}
% \{1\}\cup    \left\{\|k\|_2^{-s}\CalC(k\cdot x),\, \|k\|_2^{-s}\CalS(k\cdot x)\colon k\in \Z^d,\, k\,\righttriangleplus\, 0 \right\}
% \end{equation}
% is the Riesz basis of $W^s([0,1]^d)$ and the constants $C,c>0$ in the corresponding version of \eqref{eq:Riesz} can be chosen independent of $d$. 
% \end{cor}

\medskip

\subsection{Barron classes}%\label{sec:Barron}

Barron classes, 
% spaces, 
which were first introduced in \cite{Barron},  turned out to be a very promising function class for the performance analysis  
of artificial neural networks for solving  high-dimensional problems. In particular, if $\mu$ is a probability measure on a $d$-dimensional ball
and $f$ is a function from the 'original' Barron class 
(see \eqref{Barron-orig} below with $s=1$), \cite{Barron} provides a randomized construction of a shallow
%that for functions from a Barron  class 
%(these functions are Lipschitz continuous)
neural network with one hidden layer of $N$ neurons, which achieves an approximation of $f$ with accuracy of the order $N^{-1/2}$
in the $L_2(\mu)$-norm. %, where $\mu$ is a probability measure on a $d$-dimensional ball.
Remarkably, this approximation rate is independent of the dimension $d$ even though the Barron class is so large that every linear approximation method for it suffers
the curse of dimensionality \cite[Theorem~6]{Barron}.  
These results were later extended and generalized in various directions, see  e.g. \cite{CPV23,EW20} and the references given therein.

We start again by introducing the 
(Fourier-analytic version of) Barron classes 
and an alternative based on the 
Riesz basis $\cR_d$ of $L_2$. 
We will see below that, again, 
those spaces coincide with equivalent norms.
Let $s\ge 0$ and define
\[
% {\mathbb B}^s:=
{\mathbb B}^s([0,1]^d):=\left\{f: f\text{ given by \eqref{eq:f_Four}, }
\|f\|_{{\mathbb B}^s}:=|a_0|+\sum_{m\in\Z^d\setminus{\{0\}}}\|m\|_2^s\cdot |a_m|<\infty\right\}, 
\]
and 
\[
% {\mathcal B}^s:=
{\mathcal B}^s([0,1]^d):=\left\{f: f\text{  given by \eqref{eq:f_Riesz_d}, }
\|f\|_{{\mathcal B}^s}:=|\alpha_0|+\sum_{k\,\righttriangleplus\,0}\|k\|_2^s\cdot (|\alpha_k|+|\beta_k|)<\infty\right\}.
\]

Note that in contrast to the Sobolev classes from the previous section, we now invoke the $\ell_1$-norm instead of the $\ell_2$-norm when defining the Barron classes.
For a discussion of other notions of Barron spaces  we refer to  Remark~\ref{rem:different-B} below. 

\medskip
\goodbreak

We now show that the Barron-type spaces above have equivalent norms.

\begin{thm}\label{thm:Barron}
Let $0\le s < 1$. Then ${\mathbb B}^s([0,1]^d) = {\mathcal B}^s([0,1]^d)$ in the sense of equivalent norms.
% Furthermore, t
The constants of equivalence of these norms depend on $s$ but are independent of~$d$. \\
Moreover, it holds that 
\begin{equation}\label{eq:Riesz-Barron}
c\,\sum_{k\righttriangleplus 0}
(|\alpha_k|+|\beta_k|) 
\;\le\; 
\left\|\sum_{k\righttriangleplus 0}\|k\|_2^{-s}\Bigl(\alpha_k\CalC_k+ 
\beta_k \CalS_k\Bigr)\right\|_{\mathbb{B}^s}
\;\le\; C\,\sum_{k\righttriangleplus 0}
(|\alpha_k|+|\beta_k|) 
\end{equation}
where the constants $c,C>0$ can be chosen independently of $d$. \\
(Modifications of \eqref{eq:Riesz-Barron} for functions with constant term are obvious.)
\end{thm}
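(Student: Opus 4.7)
The proof follows the three-step structure of Theorem~\ref{thm:FsWs_d}, with the crucial simplification that the norms are of $\ell_1$-type. Consequently, every key estimate will reduce to a direct application of the triangle inequality followed by Fubini, rather than to the boundedness of an infinite matrix on $\ell_2$. In particular, I will not need interpolation or an explicit operator argument: only the absolute values of the entries analogous to \eqref{eq:defX} will enter, so the scalar sum $\sum_\ell (2\ell+1)^{s-2}$ will do all the work.

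For the embedding $\mathbb{B}^s([0,1]^d)\hookrightarrow\mathcal{B}^s([0,1]^d)$, I would first convert \eqref{eq:f_Four} to its real $\cos$/$\sin$ form with coefficients $b_m = a_m + a_{-m}$ and $b'_m = i(a_m - a_{-m})$, which satisfy $|b_m|+|b'_m|\le 2(|a_m|+|a_{-m}|)$, so $\sum_{m\righttriangleplus 0}\|m\|_2^s(|b_m|+|b'_m|)\le 2\|f\|_{\mathbb{B}^s}$. Applying \eqref{eq:cos_Riesz} and \eqref{eq:sin_Riesz} with the argument $m\cdot x$, as in Step~1 of the proof of Theorem~\ref{thm:FsWs_d}, yields
\[
\alpha_k \;=\; \frac{\sqrt{3}}{\sqrt{2}\,\kappa}\sum_{(\ell,m):\,(2\ell+1)m=k} b_m\cdot \frac{\mu(2\ell+1)}{(2\ell+1)^2},
\]
and an analogous expression for $\beta_k$. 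Taking absolute values, exchanging the order of summation, and using the homogeneity $\|(2\ell+1)m\|_2 = (2\ell+1)\|m\|_2$ gives
\[
\sum_{k\righttriangleplus 0} \|k\|_2^s |\alpha_k|
\;\le\; \frac{\sqrt{3}}{\sqrt{2}\,\kappa}\left(\sum_{\ell=0}^{\infty} \frac{1}{(2\ell+1)^{2-s}}\right) \sum_{m\righttriangleplus 0} \|m\|_2^s |b_m|.
\]
The $\ell$-sum is finite for $s<1$ and independent of $d$, so $\|f\|_{\mathcal{B}^s}\le c_s\,\|f\|_{\mathbb{B}^s}$ with a dimension-free constant; the $\beta_k$-part is treated identically using \eqref{eq:sin_Riesz}.

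The reverse embedding $\mathcal{B}^s([0,1]^d)\hookrightarrow \mathbb{B}^s([0,1]^d)$ proceeds symmetrically: I would expand each $\CalC_k,\CalS_k$ into its Fourier series, whose coefficients decay like $n^{-2}$ with bounded Möbius-type weights (cf.\ \cite[p.~166]{DD+22}), and repeat the same triangle-inequality calculation to produce the opposite estimate. The Riesz-basis inequality \eqref{eq:Riesz-Barron} then follows at once from the norm equivalence: for $g:=\sum_{k\righttriangleplus 0}\|k\|_2^{-s}(\alpha_k\CalC_k+\beta_k\CalS_k)$ the $\mathcal{B}^s$-norm evaluates trivially to $\sum_{k\righttriangleplus 0}(|\alpha_k|+|\beta_k|)$, and combining this with Steps~1--2 gives both bounds in \eqref{eq:Riesz-Barron} with $d$-independent constants.

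I do not anticipate a genuine obstacle; the potential pitfall is bookkeeping rather than analysis. The only point requiring care is verifying that no hidden factor depending on $d$ appears anywhere, and this is guaranteed by two facts already visible in the proof of Theorem~\ref{thm:FsWs_d}: the identities \eqref{eq:cos_Riesz}--\eqref{eq:sin_Riesz} are one-dimensional and produce no $d$-dependent constants when applied with argument $m\cdot x$, and the Euclidean norm on $\Z^d$ is positively homogeneous so that the factor $(2\ell+1)^s$ separates cleanly from $\|m\|_2^s$. Everything else is the same scalar estimate $\sum_\ell (2\ell+1)^{s-2}<\infty$ that made the Sobolev argument succeed for $s<1$.
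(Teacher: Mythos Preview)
Your proposal is correct and follows essentially the same route as the paper. The only cosmetic difference is that the paper packages Step~1 as bounding the operator norm of $T\colon\ell_1(\omega)\to\ell_1(\omega)$ (with $\omega_m=\|m\|_2^s$) via extreme points $e_m$, whereas you do the equivalent direct triangle-inequality-plus-Fubini computation; both arrive at the identical scalar estimate $\sum_{\ell\ge0}|\mu(2\ell+1)|/(2\ell+1)^{2-s}<\infty$ for $s<1$, independently of $d$.
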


Some authors call  the 
% (scaled) Riesz basis \eqref{eq:Riesz_d-s}  of $W^s$ 
system \eqref{eq:Riesz_d-s} 
% an $\ell_1$-Riesz 
a $1$-Riesz basis of $\mathbb{B}^s$ 
if it satisfies~\eqref{eq:Riesz-Barron}, 
see e.g.~\cite{CS03}.

\begin{proof} \emph{Step 1.} First, we show that ${\mathbb B}^s([0,1]^d)\hookrightarrow {\mathcal B}^s([0,1]^d)$.

We proceed similarly as in the proof of Theorem \ref{thm:FsWs_d}. Let us provide a  few details. Let $\omega=(\omega_m)_{m\,\righttriangleplus\,0}$
be a sequence given by $\omega_m=\|m\|_2^s$. The norm of the sequence $\alpha=(\alpha_m)_{m\,\righttriangleplus\,0}$
in the weighted space $\ell_1(\omega)$ is then given as usual by
\[
\|\alpha\|_{\ell_1(\omega)}=\sum_{m\,\righttriangleplus\,0} \omega_m\cdot |\alpha_m|.
\]

Repeating the argument of the proof of Theorem \ref{thm:FsWs_d}, we reduce 
the proof of the embedding ${\mathbb B}^s([0,1]^d)\hookrightarrow {\mathcal B}^s([0,1]^d)$
to the estimate of the operator norm of $T:\ell_1(\omega)\to\ell_1(\omega)$,
where, for $k\righttriangleplus 0$, 
\begin{align*}
\alpha_k=(Ta)_k=\sum_{\substack{m\righttriangleplus 0; \ell=0,1,\dots\\(2\ell+1)m=k}}a_m\cdot \frac{\mu(2\ell+1)}{(2\ell+1)^2}.
\end{align*}
Due to convexity, the norm of $T$ is attained on the extreme points of the unit ball of $\ell_1(\omega)$, i.e., 
\[
\|T\|_{\ell_1(\omega)\to \ell_1(\omega)}=\sup_{m\,\righttriangleplus\,0}\frac{\|T e_m\|_{\ell_1(\omega)}}{\|e_m\|_{\ell_1(\omega)}}=
\sup_{m\,\righttriangleplus\,0}\frac{\|T e_m\|_{\ell_1(\omega)}}{\|m\|_2^s}.
\]
If $m\in\Z^d, m\,\righttriangleplus\,0$ is fixed, then
\begin{align*}
   (Te_m)_k&=\sum_{(\ell,n):(2\ell+1)n=k}(e_m)_n\cdot\frac{\mu(2\ell+1)}{(2\ell+1)^2}\\
   &=\sum_{\substack{\ell=0,1,\dots\\(2\ell+1)m=k}}\frac{\mu(2\ell+1)}{(2\ell+1)^2}=
   \begin{cases} 0,&\text{if there is no $\ell\ge 0$ with}\ (2\ell+1)m=k,\\
   \frac{\mu(2\ell+1)}{(2\ell+1)^2},&\text{if} \ (2\ell+1)m=k.
   \end{cases} 
\end{align*}
Hence,
\begin{align}
\notag \|m\|_2^{-s} \|Te_m\|_{\ell_1(\omega)}&=\|m\|_2^{-s}\sum_{\ell=0}^\infty \|(2\ell+1)m\|^s_2\cdot \frac{|\mu(2\ell+1)|}{(2\ell+1)^2}\\
\label{eq:Barron_series} &=\sum_{\ell=0}^\infty \frac{|\mu(2\ell+1)|}{(2\ell+1)^{2-s}},
\end{align}
which is finite for $0\le s<1$ and, of course, independent on $d$.

\emph{Step 2.} The reverse embedding ${\mathcal B}^s([0,1]^d) \hookrightarrow {\mathbb B}^s([0,1]^d)$
can be shown in nearly the same way. 

\emph{Step 3.}  The proof of~\eqref{eq:Riesz-Barron} follows again Step~3 of the proof of Theorem~\ref{thm:d1_WF}.\\
\end{proof}

\begin{rem}
It can be shown that the series \eqref{eq:Barron_series} converges only for $s<1$. Indeed, by \cite[Equation (1.2.7)]{Tit}, it is known that the series
\[
\sum_{n=1}^\infty \frac{|\mu(n)|}{n^\alpha}
\]
converges only for $\alpha>1$. On the other hand,
\begin{align*}
\sum_{n=1}^\infty \frac{|\mu(n)|}{n^\alpha}&=\sum_{n=0}^\infty \frac{|\mu(2n+1)|}{(2n+1)^\alpha}+\sum_{n=0}^\infty \frac{|\mu(4n+2)|}{(4n+2)^\alpha}
+\sum_{n=0}^\infty \frac{|\mu(4n+4)|}{(4n+4)^\alpha}\\
&=\Bigl(1+\frac{1}{2^\alpha}\Bigr)\sum_{n=0}^\infty \frac{|\mu(2n+1)|}{(2n+1)^\alpha},
\end{align*}
where we used that $4(n+1)$ is divisible by 4 implying $\mu(4n+4)=0$, and $\mu(4n+2)=-\mu(2n+1)$ for every $n\ge 0$. Therefore, \eqref{eq:Barron_series} converges only when $2-s>1$
and the restriction to $0\le s<1$ in Theorem \ref{thm:Barron} is necessary. 
\end{rem}

\begin{rem}\label{rem:different-B}
Let us comment on the different notions of Barron spaces in the literature.  
Our Barron spaces ${\mathbb B}^s$ are in the spirit of the ``classical'' Fourier-analytic notion of Barron spaces of periodic functions as introduced in~\cite{Barron}, 
see Example~16 in Section~IX, 
at least for $s=1$. 
However, it seems more common to work with the 
classes of functions on $\R^d$, 
or on certain subsets $\Omega$, 
which are defined by 

% Our Barron spaces resemble the classical Fourier-analytic approach to Barron spaces as used in \cite{Siegel21}, \mario{(M: First ref?)}
\begin{equation}\label{Barron-orig}
% \mathcal{B}_{\mathrm{class}}^s(\Omega) = 
\mathcal{B}_{\mathrm{ext}}^s(\Omega) 
:= \left\{f:\Omega\rightarrow \mathbb{C}: \ 
% \|f\|_{\mathcal{B}_{\mathrm{class}}^s}:=
\inf_{f_e|_{\Omega}=f}\int_{\real^d}(1 +\|\xi\|_2)^s|\hat{f}_e(\xi)|d\xi<\infty\right\}, 
\end{equation}
where the infimum is taken over all extensions $f_e\in L_1(\real^d)$ and $\hat{f}_e$ denotes the Fourier transform of $f_e$.
Barron \cite{Barron} originally introduced  this class for $s=1$, 
and showed that functions from this class can be efficiently approximated by neural networks. 
However, we stress that the  norm 
\eqref{Barron-orig} is dimension-dependent in a non-transparent way: 
It is already a non-trivial research question to determine the norm of a constant function (for bounded~$\Omega$). 
See also~\cite{CPV23,Siegel21} for 
% the extension to~\eqref{Barron-orig}. 
a discussion of issues with general domains.

%\begin{equation}\label{Barron-orig}
%\|f\|_{\mathrm{aux}}:=\int_{\real^d} |\hat{f}(\xi)|\|\xi\|_2d\xi<\infty.
%\end{equation}
Another notion of Barron-type spaces that has been proposed in the literature 
(see e.g.~\cite{EMW22,E1})
% These Barron spaces   essentially consist 
is the space of all “infinitely wide”
neural networks with a certain control over the network parameters. More formally, given an
activation function $\sigma$ (which is often either the ReLU or a Heaviside function), the elements of the associated Barron-type space are all functions that can be written as
\[
f(x)=\int_{\real\times \real^d\times \real}
 a\cdot  \sigma(\langle w, x\rangle + b) \, d\mu (a, w, b),
\]
where $\mu$ is a probability measure satisfying 
\begin{equation}\label{Barron-new}
\|f\|_{B(\sigma)}:=\int_{\real\times \real^d\times \real}
 |a|\cdot  \sigma(\|w\|_2 + |b|) \, d\mu (a, w, b)<\infty. 
\end{equation}
Note that this space/norm is automatically adapted to the activation function $\sigma$. 
It was further generalized in \cite{Per24}, where $\ell_p$ norms and $\mathrm{ReLU}^k$ activation functions were considered in \eqref{Barron-new}. 
Let us point here  that already in \cite{Barron} it was shown that 
the Fourier-analytic Barron space according to \eqref{Barron-orig} is
embedded in the Barron space $B(H)$ associated to the Heaviside function $H$, i.e., 
for bounded domains $\Omega$ it holds that 
% $\|f\|_{B(H)}\leq \|f\|_{\mathcal{B}_{\mathrm{ext}}^1}$, 
$\mathcal{B}_{\mathrm{ext}}^1(\Omega) \hookrightarrow B(H)$. On the other hand, 
when considering  the ReLU activation function $\sigma$, 
we only have 
% $\|f\|_{B(\sigma)}\leq \|f\|_{\mathcal{B}_{\mathrm{ext}}^2}$. 
$\mathcal{B}_{\mathrm{ext}}^2(\Omega) \hookrightarrow B(\sigma)$, 
see~~\cite[Lemma~7.1]{CPV23}.
The latter embedding is best possible  as was shown in~\cite[Prop.~7.4]{CPV23}. 
We also refer to \cite[Section~2]{EMW22}, \cite[Section~3]{E1} or~\cite{KB} 
in this context.
\end{rem}

%%%%%%%%%%%%%%%%%%%%%%%%%%%%%%%%%%%%%%%%%%%%
\section{Approximation of functions by ANNs}\label{sec:ANN}

In this section we  exploit the properties of the Riesz basis \eqref{eq:Riesz_d-s} derived in Section \ref{Sec:2} to address the question  how to approximate the 
elements of the Sobolev classes $W^s([0,1]^d)$ and Barron classes $\mathbb{B}^s([0,1]^d)$  with $0<s<1$  by artificial neural networks. 
The main results of this section are stated in Theorems \ref{thm:ANN_Ws} and \ref{thm:ANN_Bs}.

We first fix some  notation.  
A function $f:\R^{n_1}\to \R^{n_2}$ is called affine, if it can be written as $f(x)=Mx+b$, where $M\in\R^{n_2\times n_1}$
is a matrix and $b\in\R^{n_2}.$ This means that $f$ is given by $(n_1+1)n_2$ real parameters.
In the setting of artificial neural networks, $M$ is usually called the weight matrix and $b$ the bias vector.
The following definition formalizes the notion
of $\ReLU$ neural networks with width $W$ and depth $L$, cf. Figure \ref{fig:NN}. % and \ref{fig:Neur}.
For further information regarding the basics of neural network architectures we refer to the survey article \cite{DeVore-2} as well as the recent book \cite{Petersen-book} and the references therein.

\begin{dfn} %\label{dfn:NN} 
Let $d,W,L$ be positive integers. Then a feed-forward artificial neural network ${\mathcal N}$ with the $\ReLU$ activation function 
width $W$, 
and depth $L$ is given by a collection of $L+1$ affine mappings $A^{(0)},\dots,A^{(L)}$, where $A^{(0)}:\R^d\to \R^W$,
$A^{(j)}:\R^W\to\R^W$ for $j=1,\dots,L-1$ and $A^{(L)}:\R^W\to \R$, 
and generates
a function $\mathcal{N}\colon\R^d\rightarrow \R$ 
of the form 
\[
% \mathcal{N}=
A^{(L)}\circ \ReLU\circ A^{(L-1)}\circ\cdots\circ\ReLU\circ A^{(0)}.
\]
% Each such a network ${\mathcal N}$ generates
% a function $\mathcal{Y}:\R^d\rightarrow \R$ 
% of the form 
% \[
% \mathcal{Y}=A^{(L)}\circ \ReLU\circ A^{(L-1)}\circ\cdots\circ\ReLU\circ A^{(0)}.
% \]
We denote  by $\Upsilon_d^{W,L}$ the set of all such ANNs.
\end{dfn}

\begin{figure}[h!]
\begin{center}\includegraphics[width=9cm]{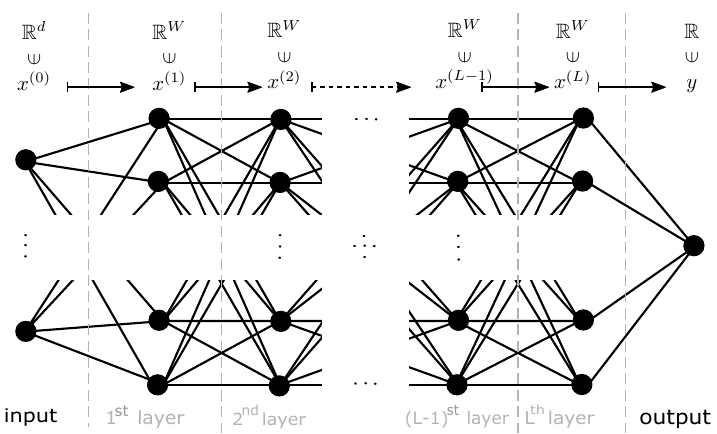}\\
\caption{Feed-forward $\mathrm{ReLU}$ network with length $L$ and width $W$}
\label{fig:NN}
\end{center}
\end{figure}

% \begin{figure}[h!]
% \begin{center}\includegraphics[width=10cm]{ff-NN-1.pdf}
% \vskip-1cm
% \caption{Close-up: $\mathrm{ReLU}$ activation function acting in neuron in 1st hidden layer}
% \label{fig:Neur}
% \end{center}
% \end{figure}

Counting all the weights and biases of a fully connected network from $\Upsilon^{W,L}_d$, we see that it has
\begin{equation} \label{eq:number}
W(d+1)+(L-1)W(W+1)+W+1=\mathcal{O}(W^2L+dW)
\end{equation}
parameters. When $d=1$ this reduces to $\mathcal{O}(W^2 L)$, see also  \cite[Remark~6.1]{DD+22}. 
For an ANN, we call the graph of the connections between its artificial neurons (as shown in Figure \ref{fig:NN} for $\Upsilon^{W,L}_d$) its \emph{architecture}.
In general, an ANN could have a different number of artificial neurons in each layer, but we restrict ourselves to the architecture of $\Upsilon^{W,L}_d$
to simplify the presentation. 
It is obvious that every function $\cN$ generated by such a  feed-forward $\ReLU$ network is a continuous piecewise affine function on ${\mathbb R}^d$.
When approximating  functions from the Sobolev or Barron classes we now use the fact that these functions admit a representation of the form \eqref{eq:f_Riesz_d}, i.e., 
\[ 
f(x)=\alpha_0+\sum_{k\,\righttriangleplus\,0}\Bigl[\alpha_k\CalC_k(x)+\beta_k\CalS_k(x)\Bigr],\quad x\in [0,1]^d.
\]
Moreover, as was shown in \cite[Theorem~6.2]{DD+22} for $d=1$ and \cite[Theorem~4.6]{SV} for higher dimensions, one can reproduce linear combinations   of  $\CalC_k$ and $\CalS_k$ via $\ReLU$ networks with a good control of the depth $L$. This makes the representation \eqref{eq:f_Riesz_d} an ideal starting point for approximating $f$ using neural networks.

Let us briefly sketch the construction. 
% when $d=1$. 
The basic idea is to start with a representation of the hat function $H:[0,1]\rightarrow \R$ using the $\ReLU$ function, i.e., 
\begin{equation}
H(x)=\left.\begin{cases}\label{hat-repr}
2x,& 0\leq x\leq \frac 12\\
2(1-x), & \frac 12<x\leq 1
\end{cases}\right\}\ = \ (2, -4)\,\cdot\,  \ReLU\left(\begin{pmatrix}1\\1\end{pmatrix}x+\begin{pmatrix}0\\ -\frac12\end{pmatrix}\right).
\end{equation}
From the right hand side of \eqref{hat-repr} we deduce that  $H$ is the realization of an ANN with width $W=2$ and one hidden layer $L=1$, 
as illustrated in Figure \ref{fig:hat}, 
i.e., $H\in\Upsilon_1^{2,1}$. 
% Note that this representation works also for $x\in\R^d$, considered as a row vector, and with $\ReLU$ applied component-wise, as usual. 

\begin{figure}[h!]
\begin{center}\includegraphics[width=11cm]{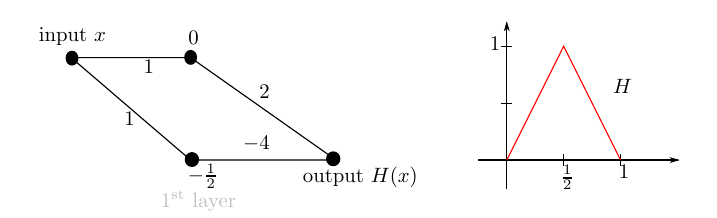}
%\vskip-1cm
\caption{ANN representing $H$ and usual graph associated with $H$}
\label{fig:hat}
\end{center}
\end{figure}

We will realize our Riesz basis $\mathcal{R}_d$ by compositions and sums of these simple networks. 
For this, let us first mention that for
functions that are represented by ANNs, say, with width 
% $W_1,W_2$ and depth $L_1,L_2$, respectively, 
$W_i$ and depth $L_i$, $i=1,\dots,k$, 
we can represent their composition by an ANN 
% with width $\max\{W_1,W_2\}$ and depth $L_1+L_2$, 
% and their sum by an ANN with width $\max\{W_1,W_2\}+2$ and depth $L_1+L_2$. 
with width $\max_i\{W_i\}$ and depth $\sum_{i=1}^k L_i$, 
by just concatenating the ANNs.

% \medskip

% Since $\CalC=1-2H$, the basis function $\CalC$ can be represented using the ANN for $H$ by adapting the weights and biasses. 
Now, $\CalC=1-2H$, with $\CalC$ from \eqref{C-intro}, can be written as 
\[
\CalC(x) \,=\, (-4, 8)\,\cdot\,  \ReLU\left(\begin{pmatrix}1\\1\end{pmatrix}x+\begin{pmatrix}0\\ -\frac12\end{pmatrix}\right) 
\,+\, 1,
\]
which has the same architecture as $H$ with 
width $W=2$ and depth $L=1$. 
Using $\CalS(x)=\CalC_2(x/2+3/8)$, 
we see that $\CalS$ can be represented using the ANN architecture for $\CalC_2$, i.e., with $W=L=2$. 
% , and adapting the weights and biasses accordingly. 
The univariate scaled versions $\CalC_k,\CalS_k$ are obtained using compositions of hat functions, e.g., it holds $\CalC_k(x)=\CalC(H(\ldots (H(x))))$ for $k=2^m$ where $H$ is composed $m$-times. 
The general, multivariate case is obtained by a suitable scaling argument 
that ultimately leads to 
$\CalC_k,\CalS_k\in\Upsilon_d^{2,L}$ 
with $L:=\log_2\left(\|k\|_1 \right)+4$, 
$k\in\Z^d$, 
see~\cite[Lemma~4.4]{SV}. 

For linear combinations of these functions it remains to sum the individual terms,  
and in this matter 
% For the sum of these functions/ANNs 
we have some freedom. 
Since the individual terms can be computed separately, 
we could stack them on top of each other 
leading to an ANN with width $\sum_{i=1}^N W_i$
and depth $\max_i\{L_i\}$. 
% and their sum by an ANN with width $\max_i\{W_i\}+2$ and depth $\sum_{i=1}^k L_i$, 
% see~\cite[Prop.~4.2]{DD+22}. 
% The latter is schematically illustrated in Figure~\ref{fig:NN-sum}. 
However, we could also do the summation 
\textit{in line}, as in Figure~\ref{fig:NN-sum}, by using two extra channels (a source   and a collation channel). 
This leads to an ANN with width $\max_i\{W_i\}+d+1$ and depth $\sum_{i=1}^N L_i$. 
One may also mix both strategies, by arranging the summands in an array. 
We refer to \cite[Theorem~6.2]{DD+22}   and \cite[Theorem~4.6]{SV} for further details on the constructions.

\begin{figure}[h!]
\begin{center}\includegraphics[width=15.5cm]{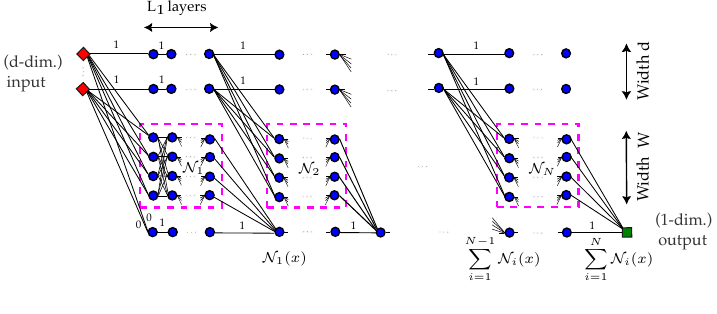}
%\vskip-1cm
\caption{ANN representing the sum of   functions $\mathcal{N}_1, \ldots, \mathcal{N}_N$ \\
}
\label{fig:NN-sum}
\end{center}
\end{figure}

This results in the following architecture for 
partial sums of~\eqref{eq:f_Riesz_d}, 
see \cite[Theorem~4.6]{SV}.

\begin{lem} \label{lem:ANN}
Let $d,R\ge1$ and let $I\subset\Z^d\setminus\{0\}$ be nonempty. 
Then, 
\begin{equation}\label{eq:fdecomp}
f(x)\,=\,\alpha_0
% +\sum_{\substack{k\,\righttriangleplus\,0\\ \|k\|_2\le R}} \Bigl[\alpha_k\CalC_k(x)+\beta_k\CalS_k(x)\Bigr],\quad x\in [0,1]^d
+\sum_{k\in I} \Bigl[\alpha_k\CalC_k(x)+\beta_k\CalS_k(x)\Bigr],\quad x\in [0,1]^d, 
\end{equation}
satisfies $f\in\Upsilon_d^{W,L}$ with either 
\begin{equation}\label{constr-1}
W=4\cdot \#I
\qquad\text{and}\qquad 
L\,\le\, 4+\log_2\left(\max_{k\in I} \|k\|_1 \right),
\end{equation}
or
\begin{equation}\label{constr-2}
\qquad
W=d+3 
\qquad\quad\text{and}\qquad 
L \,\le\, 2\,\#I \cdot \log_2\left(16\cdot\max_{k\in I} \|k\|_1 \right). 
\end{equation}
Moreover, all weights in the corresponding ANN are bounded by $8\cdot\max_{k\in I}\{1,|\alpha_k|,|\beta_k|\}$.
\end{lem}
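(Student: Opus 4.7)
The plan is to realize $f$ from~\eqref{eq:fdecomp} as a ReLU ANN by first building each basis function $\CalC_k$, $\CalS_k$ as a small width-$2$ subnetwork of controlled depth, and then assembling the $2\#I$ resulting subnetworks (together with the constant $\alpha_0$) either in parallel to obtain~\eqref{constr-1} or serially via a source-and-collation channel to obtain~\eqref{constr-2}. The construction is essentially that of \cite[Lemma~4.4 and Theorem~4.6]{SV}; the only new feature here is the uniform weight bound, which I would simply read off the specific coefficients used in the construction.

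For the building blocks, I would start from the two-neuron realization of $\CalC$ displayed before the lemma; absorbing the affine map $x \mapsto k \cdot x$ into the first weight row produces an ANN for $\CalC(k\cdot x)$ on a short range. To implement the periodic dilation for general $k \in \Z^d$, I would iterate the hat function $H$ at most $\lceil \log_2 \|k\|_1 \rceil$ times, using the scaling recursion $\CalC_{2m} = \CalC_m \circ H$ (which follows from the evenness and periodicity of $\CalC$) and preserving width $2$ throughout. The identity $\CalS(x) = \CalC_2(x/2 + 3/8)$ then reduces $\CalS_k$ to the same construction at the cost of one extra hidden layer, so that $\CalC_k, \CalS_k \in \Upsilon_d^{2, L_k}$ with $L_k \le 4 + \log_2 \|k\|_1$.

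For~\eqref{constr-1}, I would stack the $2\#I$ width-$2$ subnetworks in parallel on disjoint neurons, and collect the scalar multiplications by $\alpha_k$, $\beta_k$ together with the constant $\alpha_0$ into the single affine output layer. Parallel stacking adds widths and takes the maximum of depths, so $W = 4\#I$ and $L \le 4 + \log_2 \max_{k \in I}\|k\|_1$. For~\eqref{constr-2}, I would invoke the source-and-collation construction sketched in Figure~\ref{fig:NN-sum}: use $d$ neurons to forward $x \in [0,1]^d$ unchanged (which is compatible with ReLU since $x \ge 0$), reserve an additional channel to carry the running partial sum, and use a width-$2$ active block to process one summand at a time and add it to the collation channel. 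Summing depths over the $2\#I$ summands yields $L \le 2\#I \cdot (4 + \log_2 \max_{k \in I}\|k\|_1) = 2\#I \cdot \log_2(16 \max_{k \in I}\|k\|_1)$, while a careful layout keeps the total width at exactly $d+3$.

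The weight bound is then checked by inspection: all fixed weights appearing in the basic realizations of $H$, $\CalC$, $\CalS$ and in the hat-iteration lie in $\{\pm \tfrac12, \pm 1, \pm 2, \pm 4, \pm 8\}$, hence are bounded by $8$; only the final output layer multiplies by the user-supplied coefficients $\alpha_k$, $\beta_k$, giving the overall bound $8 \cdot \max_{k \in I}\{1, |\alpha_k|, |\beta_k|\}$. The main obstacle I foresee is the serial construction: keeping the collation channel flowing through successive ReLU layers without sign truncation while holding the total width at exactly $d+3$ requires a careful (and standard) trick of representing the signed partial sum via shifted nonnegative storage or via positive/negative part pairs that are collapsed in the final affine layer. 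This is precisely the point at which the bookkeeping of \cite[Theorem~4.6]{SV} is most delicate, and I would simply invoke that result rather than redo the layout explicitly.
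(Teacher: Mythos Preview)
Your proposal is correct and follows essentially the same approach as the paper: the paper does not give a self-contained proof but only sketches the parallel-stacking and source-and-collation constructions in the paragraphs preceding the lemma and then cites \cite[Theorem~4.6]{SV}, exactly as you do. Your identification of the weight bound as the only feature not already in \cite{SV}, and your observation that the delicate point is passing the signed partial sum through ReLU layers in the serial construction, are both accurate.
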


\begin{rem} 
The decomposition \eqref{eq:fdecomp} features $2\,\#I+1$ real parameters, namely $\alpha_0, (\alpha_k)_{k\in I}$ and $(\beta_k)_{k\in I}$.
The ANNs constructed in Lemma \ref{lem:ANN} combine these parameters with pre-cached building blocks (i.e., ANNs, which reproduce the functions $\CalC_k$ and $\CalS_k$ for $k\in I$)
and additional information passing channels.
In this sense, these $2\,\#I+1$ parameters are the only parameters, which we change in the proposed architecture,
the other weights are fixed and do not depend on $f$. This approach was used in \cite[Section VII]{EPGB} under the name \emph{transference principle}
for general decompositions of functions into a given \emph{dictionary}. However, in the next section we take  advantage of the following two facts: 
First, we exploit that the elements of the Riesz basis \eqref{eq:Riesz_d-s} can be easily and exactly recovered by small ANNs. And secondly,
the elements of \eqref{eq:Riesz_d-s} do not rely on any localized decomposition of unity, which would lead to exponential dependence of the constants on
the underlying dimension $d$.\\ 
If we count all nonzero parameters of the architecture, 
then we can employ the fact that the widths and depth of the 
small ANNs are at most 4 and $M:=1+\log_2[\max_{k\in I} \|k\|_1]$, respectively. Hence, there are at most $\mathcal{O}(\#I\cdot\max(d,M))$ non-zero weights in the ANN from the first part of Lemma~\ref{lem:ANN}, 
and $\mathcal{O}(d\cdot \#I\cdot M)$ non-zero weights in the second architecture. 

In contrast, we know from~\eqref{eq:number} that 
the number of parameters in a (general, fully-connected) ANN from $\Upsilon_d^{W,L}$, with $W$ and $L$ chosen as above, 
is of the order $\mathcal{O}((\#I)^2\cdot M+\#I\cdot d)$ 
for the first, and $\mathcal{O}(d^2\cdot \#I\cdot M)$ for the second architecture. 
Note that in the second construction, 
most of the weights are actually equal to 1 due to the additional collation channel.

To ease the presentation, 
we will only employ the first construction \eqref{constr-1} 
for the approximation results below.  
The modifications for the second construction \eqref{constr-2} are straightforward. 
\end{rem}

\bigskip

%%%%%%%%%%%%%%%%%%%%%%%%%%%%%%%%%%%%%%%%%%%%%
\subsection{Sobolev classes}

We start with approximation of functions from Sobolev classes. We use Theorem \ref{thm:FsWs_d} to show that
for every $d\ge 1$, $0<s<1$, and $\varepsilon > 0$, we can design a fixed ANN architecture, such that
for every $f\in W^s([0,1]^d)$ we can choose the weights of this ANN in such a way that 
it approximates $f$ in $L_2([0,1]^d)$ up to the error $\varepsilon\|f\|_{W^s}$.

The use of the Riesz basis $\mathcal{R}_d$ from \eqref{eq:intro_Riesz_d} allows us to avoid decompositions of unity and Taylor's theorem, which were used frequently
in the analysis of approximation properties of neural networks (see, e.g., \cite[Theorem 1]{Y17} or \cite{S23}). This
allows to keep track of the dependence of the approximation error on the dimension.
Moreover, we achieve a (nearly) optimal number of layers, their widths and the number of weights used. 
% (and compare it with available upper and lower bounds).
This is shown by comparison with available upper and lower bounds.

We shall rely on an upper bound of the number of integer lattice points in a $d$-dimensional
ball of radius $t$ centered in the origin. The exact behavior of this quantity represents a classical problem
in number theory, known as the \emph{Gauss circle problem}.
% , which was studied heavily since the times of Carl Friedrich Gauss.

For an integer $d\ge 1$ and a real number $t\ge 0$, we put
\[
Z(t,d):=\{k\in\Z^d:\|k\|_2\le t\}=\Z^d\cap tB_2^d,
\]
where $B_2^d$ is the unit ball in $\R^d$ and $tB_2^d$ is its $t$-multiple. Furthermore, we denote
\begin{equation}\label{def:N(t,d)}
N(t,d):=\# Z(t,d).
\end{equation}
Much is known in the asymptotic regime where $d$ is fixed and $t$ tends to infinity.
For example, if $d=2$, then $N(t,2)$ is approximated by 
the area of the circle  $\pi t^2$ %(since each unit square contains one lattice point) and one expects 
\[
N(t,2)=\pi t^2+E(t)
\]
for some error term $E(t)$.  Gau\ss{} managed to prove $|E(t)|\leq 2\sqrt{2}\pi t$ in this context.
It is conjectured that $|E(t)|={O}(t^{1/2+\varepsilon})$ for every $\varepsilon>0$.
On the other hand, it was established independently by Landau and Hardy that the statement fails for $\varepsilon=0$. %\cite{hardy}. 

Here, we are interested in an upper bound, which does not have to be so sharp, but which is valid also in the non-asymptotic regime,
i.e., which holds for all $d\in\N$ and $t\ge 0$.
We provide such a bound in Lemma \ref{lem:lattice_d} below.
Let us remark, that one could obtain a weaker bound by exploiting the well-known results for  entropy numbers
of embeddings of finite-dimensional sequence spaces, see \cite{KMU} for a similar approach.

\medskip

\begin{minipage}{0.45\textwidth}
Let us note that $$N(t,d)=1$$ for $0\le t<1$ and $$N(t,d)=2d+1$$ for $1\le t<\sqrt{2}.$ 
\end{minipage}\hfill
\begin{minipage}{0.45\textwidth}
%\begin{figure}[h!]
\begin{center}\includegraphics[width=6cm]{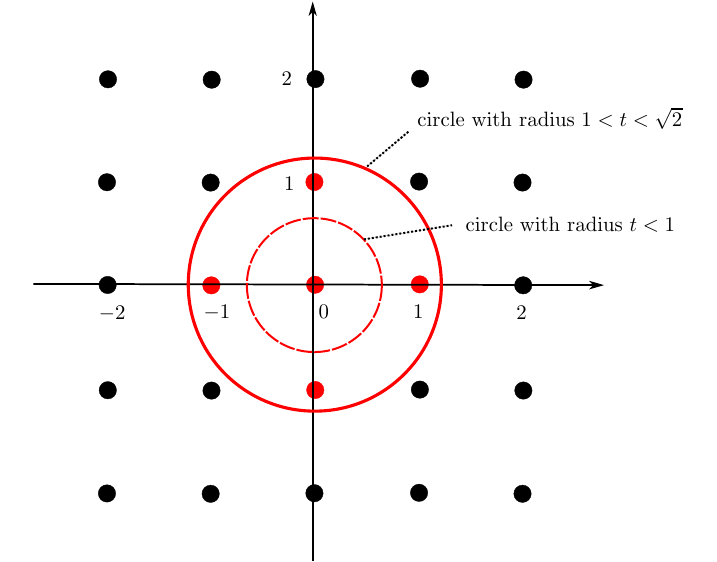}
%\label{fig:Nt2}%\caption{Illustration of $N(t,2)$}
\captionof{figure}{Illustration of $N(t,2)$}
\end{center}
%\end{figure}
\end{minipage}

%\begin{figure}[h!]
%\begin{center}\includegraphics[width=9cm]{ff-NN-3.pdf}\\
%\caption{Feed-forward $\mathrm{ReLU}$ network with length $L$ and width $W$}
%\label{fig:NN}
%\end{center}
%\end{figure}

\medskip

\begin{lem}\label{lem:lattice_d} There exist two absolute constants $c_1, c_2>0$
such that for every $d\ge 1$ and $t\ge 0$ the following holds.\\
(i) If $t\ge\sqrt{d}/2$, then 
\[
N(t,d)\le \biggl(\frac{c_1 t}{\sqrt{d}}\biggr)^d.
\]
(ii) If $0<t\le \sqrt{d}/2$, then
\[
N(t,d)\le \biggl(\frac{c_2 d}{t^2}\biggr)^{t^2}.
\]
\end{lem}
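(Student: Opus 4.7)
The plan is to use the classical unit-cube packing argument (each lattice point is the center of a unit cube, the cubes are pairwise disjoint and fit into a slightly enlarged ball), combined with a sparsity observation for the small-radius regime. For part (i) the packing alone suffices; for part (ii) the packing is far too crude and must be refined by decomposing lattice points according to the size of their support.

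\smallskip
\textbf{Step 1: Packing bound.} To each $k \in \Z^d \cap tB_2^d$ associate the cube $k + [-1/2,1/2]^d$. These cubes are pairwise disjoint and contained in $(t+\sqrt{d}/2)\,B_2^d$, hence
\[
N(t,d) \;\le\; \operatorname{vol}\bigl((t+\sqrt{d}/2)\,B_2^d\bigr) \;=\; (t+\sqrt{d}/2)^d\, V_d,
\]
where $V_d = \pi^{d/2}/\Gamma(d/2+1) \le (2\pi e/d)^{d/2}$ by Stirling. For part (i), the assumption $t \ge \sqrt{d}/2$ gives $t+\sqrt{d}/2 \le 2t$, so $N(t,d) \le (2t)^d (2\pi e/d)^{d/2} = (2\sqrt{2\pi e}\, t/\sqrt{d})^d$, yielding (i) with $c_1 := 2\sqrt{2\pi e}$.

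\smallskip
\textbf{Step 2: Sparsity decomposition for (ii).} If $\|k\|_2 \le t$, then every nonzero coordinate of $k$ contributes at least $1$ to $\|k\|_2^2$, so $k$ has at most $\lfloor t^2 \rfloor$ nonzero entries. Grouping by $m := |\operatorname{supp}(k)|$,
\[
N(t,d) \;\le\; \sum_{m=0}^{\lfloor t^2\rfloor} \binom{d}{m}\, M(t,m), \qquad M(t,m) := \#(\Z^m \cap tB_2^m).
\]
Apply the Step~1 bound in dimension $m$; since $m \le t^2$ we have $\sqrt{m}/2 \le t/2$, so $M(t,m) \le (3t/2)^m V_m \le (3t/2)^m(2\pi e/m)^{m/2}$. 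Combined with $\binom{d}{m} \le (ed/m)^m$ one obtains
\[
\binom{d}{m} M(t,m) \;\le\; \left(\frac{A\,t\,d}{m^{3/2}}\right)^m
\]
for an absolute constant $A = 3e\sqrt{2\pi e}/2$.

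\smallskip
\textbf{Step 3: Sum estimate (main obstacle).} The bulk of the work is to show that the sum is dominated by its last term and collapses to the claimed shape. Taking the logarithmic derivative of $m \mapsto (Atd/m^{3/2})^m$ one checks that this function is increasing on the interval $[0,m^\ast]$ with $m^\ast = (Atd)^{2/3}/e$, and the assumption $t \le \sqrt{d}/2$ (i.e.\ $d/t^2 \ge 4$) forces $m^\ast > t^2$, so the maximum over $m \in \{0,\dots,\lfloor t^2 \rfloor\}$ is attained at $m = \lfloor t^2 \rfloor$. Evaluating there and using $\lfloor t^2\rfloor^{3/2} \ge (t^2/2)^{3/2}$ for $t \ge 1$ produces a bound of the form $(A' d/t^2)^{\lfloor t^2\rfloor}$. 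Since $d/t^2 \ge 4$ the base exceeds $1$, so the exponent may be raised from $\lfloor t^2\rfloor$ to $t^2$, and the prefactor $\lfloor t^2 \rfloor + 1$ from the number of summands is absorbed into the base, using that $(t^2+1)^{1/t^2}$ stays bounded for $t\ge 1$. This yields (ii) for $t \ge 1$. Finally, for $0 < t < 1$ we have $N(t,d)=1$, and since $c_2 d/t^2 \ge 4c_2 \ge 1$ the right-hand side $(c_2 d/t^2)^{t^2}$ is $\ge 1$, settling this trivial range.
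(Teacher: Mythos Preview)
Your proof is correct and takes a genuinely different route from the paper's for part~(ii). Both proofs handle part~(i) by the same unit-cube packing argument. For part~(ii), the paper proceeds by induction on~$d$: it slices $Z(t,d)$ according to the first coordinate $k_1=j$ to obtain the recursion $N(t,d)=N(t,d-1)+2\sum_{j=1}^{\lfloor t\rfloor}N(\sqrt{t^2-j^2},d-1)$, then applies the inductive hypothesis and controls the sum by showing the terms decay geometrically (requiring a somewhat delicate ratio estimate and a mean-value-theorem trick to close the induction). Your approach instead decomposes by support size, observing that any $k$ with $\|k\|_2\le t$ has at most $\lfloor t^2\rfloor$ nonzero entries, which reduces everything to the packing bound in low dimension $m\le t^2$ combined with the binomial estimate $\binom{d}{m}\le(ed/m)^m$; a short calculus check then shows the sum is dominated by its last term. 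Your argument is more elementary and self-contained---no induction, no separate treatment of small $d$ or of the boundary range $\sqrt{d-1}/2\le t\le\sqrt{d}/2$---at the cost of a somewhat larger absolute constant~$c_2$. The paper's slicing approach, on the other hand, is closer in spirit to classical treatments of the Gauss circle problem and may generalize differently.
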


\begin{proof}If $t\ge \sqrt{d}/2$, then 
% \\ \begin{minipage}{0.45\textwidth}
\begin{equation}\label{help}
Z(t,d)+[0,1]^d \subset (t+\sqrt{d}) B_2^d,
\end{equation}
see Figure~\ref{fig:split},
and
\begin{align*}
N(t,d)&=\vol(Z(t,d)+[0,1]^d)
\le (t+\sqrt{d})^d\vol(B_2^d)\\
&= (t+\sqrt{d})^d\frac{\pi^{d/2}}{\Gamma(\frac d2+1)}
\le \biggl(\frac{c_1 t}{\sqrt{d}}\biggr)^d.
\end{align*}
Using Stirling's formula 
it can be checked that $c_1:=3\sqrt{2\pi e}$ works.

If $t\le \sqrt{d}/2$, the proof is more delicate. Recall that, if $0< t<1$, then $N(t,d)=1$
and the result follows. If $1\le t<\sqrt{2}$, then $N(t,d)=2d+1$ and the upper bound again follows. This covers the cases $1\le d\le 7$.

Next, we consider the case when $d\ge 8$ and $\sqrt{d-1}/2\le t\le \sqrt{d}/2$. Then, by (i),
\[
N(t,d)\le N(\sqrt{d}/2,d)\le (c_1/2)^d\le (c_2 d/t^2)^{t^2},
\]
where we used also that $4\le d/t^2\le 4\cdot\frac{d}{d-1}\le 5.$ 

If $d\ge 8$ and $\sqrt{2}\le t\le \sqrt{d-1}/2$, we proceed by induction. 

\begin{figure}[h!]
\centering
\includegraphics[width=0.58\linewidth]{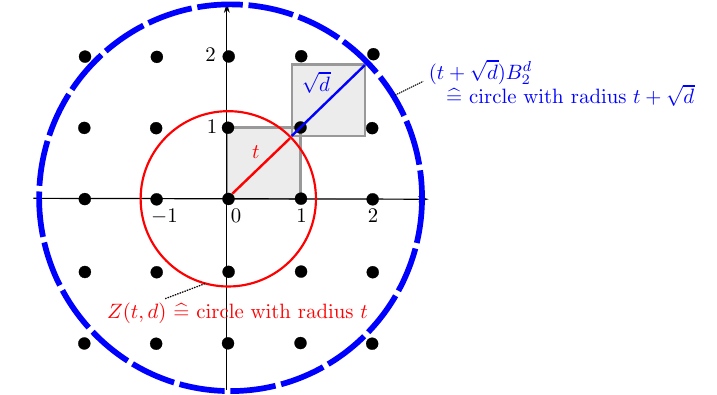} 
\hspace*{-5mm}
\includegraphics[width=0.4\linewidth]{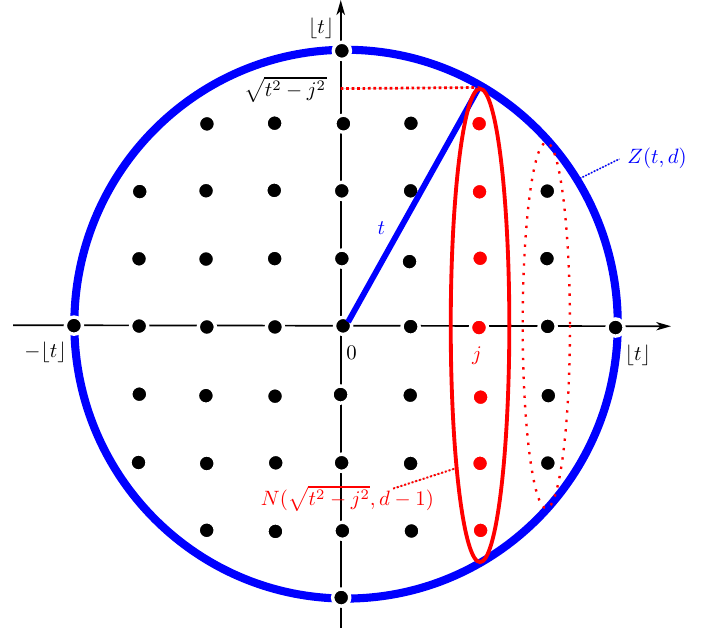}
\caption{Visualization of \eqref{help}, and splitting of $Z(t,2)$ into disks with 
% $k_1=-\lfloor t\rfloor, \ldots , \lfloor t\rfloor$}
$|k_1|\le\lfloor t\rfloor$}
\label{fig:split}
\end{figure}

We split
\[
Z(t,d)
=\{k\in\Z^d:\|k\|_2\le t\}\\
=
\bigcup_{j=-\lfloor t\rfloor}^{j=\lfloor t\rfloor}\{k\in\Z^d: \|k\|_2\le t, \  k_1=j \}, 
\]
see Figure~\ref{fig:split},
% \centerline{\includegraphics[width=7cm]{Gauss_circle_prob_3.pdf}}
% %\captionof{figure}
% \centerline{Figure 3: Splitting of $Z(t,d)$ when $d=2$ into disks with $k_1=-\lfloor t\rfloor, \ldots , \lfloor t\rfloor$}
%
and obtain

%\begin{minipage}{0.5\textwidth}
%If $d\ge 8$ and $\sqrt{2}\le t\le \sqrt{d-1}/2$, we proceed by induction. 
%We split
%\begin{eqnarray*}
%Z(t,d)
%&=&\{k\in\Z^d:\|k\|_2\le t\}\\
%&=&
%\bigcup_{j=-\lfloor t\rfloor}^{j=\lfloor t\rfloor}\{k\in\Z^d: \|k\|_2\le t, \  k_1=j \}
%\end{eqnarray*}
%\vspace{3cm}
%\end{minipage}\hfill \begin{minipage}{0.5\textwidth}
%\includegraphics[width=8cm]{Gauss_circle_prob_3.pdf}
%\end{minipage}\\
%and obtain
\begin{align}
\label{eq:N_est} N(t,d)&=\sum_{j=-\lfloor t\rfloor}^{j=\lfloor t\rfloor} N(\sqrt{t^2-j^2},d-1)
=N(t,d-1)+2\sum_{j=1}^{j=\lfloor t\rfloor}N(\sqrt{t^2-j^2},d-1)\\
\notag&\le N(t,d-1)+2\sum_{j=1}^{j=\lfloor t\rfloor}\biggl(\frac{c_2(d-1)}{t^2-j^2}\biggr)^{t^2-j^2}
\le \biggl(\frac{c_2(d-1)}{t^2}\biggr)^{t^2}+4 \biggl(\frac{c_2(d-1)}{t^2-1}\biggr)^{t^2-1},
\end{align}
where we used inductively (ii) for $d-1$ combined with the fact that $t\le \sqrt{d-1}/2$. Furthermore, to estimate the sum, we used
that every term in this sum is at least twice as large as the next term (for $c_2$ large enough).
Indeed, the ratio of the $(j+1)$-st and the $j$-th term in \eqref{eq:N_est} is equal to
\begin{equation}\label{eq:N_est_2}
\left[\frac{t^2-j^2}{c_2(d-1)}\right]^{2j+1}\left(1+\frac{1}{\lambda_j}\right)^{\lambda_j(2j+1)}, \quad \text{where}\quad
\lambda_j=\frac{t^2-(j+1)^2}{2j+1}.
\end{equation}
Using the elementary inequality $(1+1/z)^z\le e$ valid for $z>0$, we observe that \eqref{eq:N_est_2} is smaller than 1/2
for $c_2>0$ large enough. Therefore, taking the largest term with $j=1$ out of the sum, we can bound the sum by
a geometric series with the sum $\frac{1}{1-\frac 12}$, which in the end gives the 
additional factor $2$ in the  last step in \eqref{eq:N_est}.

Finally, for $c_2\ge 4e$, we apply the mean value theorem to the function $f(u)=u^{t^2}$ and obtain
\begin{align*}
4 \biggl(\frac{c_2(d-1)}{t^2-1}\biggr)^{t^2-1}&=\frac{4}{c_2}\cdot \left(1+\frac{1}{t^2-1}\right)^{t^2-1}\cdot
\left(\frac{c_2}{t^2}\right)^{t^2}\cdot t^2(d-1)^{t^2-1}\\
&\le \frac{4e}{c_2}\cdot \left(\frac{c_2}{t^2}\right)^{t^2}[d^{t^2}-(d-1)^{t^2}]\le
\left(\frac{c_2d}{t^2}\right)^{t^2}-\left(\frac{c_2(d-1)}{t^2}\right)^{t^2}.
\end{align*}
Together with \eqref{eq:N_est} we conclude that
\[
N(t,d)\le \biggl(\frac{c_2d}{t^2}\biggr)^{t^2}.
\]
\end{proof}

The last combinatorial lemma directly leads to our first approximation result using ANNs. Note that the architecture 
and the actual ANN 
mentioned in the theorem is given explicitly, 
by means of partial sums of~\eqref{eq:f_Riesz_d}, see Lemma~\ref{lem:ANN}.

\begin{thm}\label{thm:ANN_Ws}
Let $0<s<1$. Then there is a constant $C_s>0$, depending only on $s$, such that for every $d\in\N$ and $0<\varepsilon<1$ the following statement is true. \\
Let $R:=(C_s/\varepsilon)^{1/s}$ and $N(R,d)$ be as in Lemma~\ref{lem:lattice_d}. 
Then, for every $f\in W^s([0,1]^d)$ there is an ANN $\cN\in\Upsilon_d^{W,L}$  with 
\[
W=4\cdot N(R,d)\quad\text{and}\quad L \le 4+\log_2\left(R\cdot\sqrt{\min(R,d)}\right)
\]
such that 
\[
\|f-\cN\|_2 \,\le\, \eps\cdot\|f\|_{W^s}. 
\]
Moreover, 
$\cN$ can be given explicitly depending on $\alpha_k,\beta_k$ with $\|k\|_2\le R$ in~\eqref{eq:f_Riesz_d}
and the architecture of $\cN$ is independent of $f$.
\end{thm}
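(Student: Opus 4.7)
The plan is to reduce the approximation problem to truncating the Riesz expansion guaranteed by Theorem~\ref{thm:FsWs_d} and then realizing the truncation exactly by the ANN of Lemma~\ref{lem:ANN}. More precisely, for a given $f\in W^s([0,1]^d)$, Theorem~\ref{thm:FsWs_d} provides a representation
\[
f(x)=\alpha_0+\sum_{k\,\righttriangleplus\,0}\bigl[\alpha_k\CalC_k(x)+\beta_k\CalS_k(x)\bigr]
\]
with $\sum_{k\,\righttriangleplus\,0}\|k\|_2^{2s}(\alpha_k^2+\beta_k^2)\le K_s\,\|f\|_{W^s}^2$, where $K_s$ depends only on $s$. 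I would define the candidate network $\cN$ as the partial sum over $I:=\{k\in\Z^d:\, k\,\righttriangleplus\,0,\ \|k\|_2\le R\}$, which is an element of $\Upsilon_d^{W,L}$ by Lemma~\ref{lem:ANN}.

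For the error, I would use the Riesz basis property of $\cR_d$ in $L_2([0,1]^d)$ (the $s=0$ case of Theorem~\ref{thm:FsWs_d}, with constants independent of $d$) to estimate
\[
\|f-\cN\|_2^2\;\le\; C\sum_{k\,\righttriangleplus\,0,\,\|k\|_2>R}(\alpha_k^2+\beta_k^2)
\;\le\; C\,R^{-2s}\sum_{k\,\righttriangleplus\,0}\|k\|_2^{2s}(\alpha_k^2+\beta_k^2)
\;\le\; C\,K_s\,R^{-2s}\,\|f\|_{W^s}^2.
\]
Setting $R:=(C_s/\varepsilon)^{1/s}$ with $C_s:=\sqrt{C K_s}$ immediately yields $\|f-\cN\|_2\le\varepsilon\|f\|_{W^s}$, and the constant depends only on $s$ as required.

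It remains to verify the structural parameters. The width follows from $\#I\le N(R,d)$ (since $I$ is at most the full lattice ball) together with the first construction of Lemma~\ref{lem:ANN}, giving $W=4\cdot\#I\le 4\,N(R,d)$. For the depth, the same lemma needs $\max_{k\in I}\|k\|_1$. Two bounds combine: first, $\|k\|_1\le\sqrt{d}\,\|k\|_2\le R\sqrt{d}$ by Cauchy--Schwarz; second, since $k\in\Z^d$, every nonzero coordinate is at least $1$ in absolute value, so the support size satisfies $\|k\|_0\le\|k\|_2^{2}\le R^2$, hence $\|k\|_1\le\sqrt{\|k\|_0}\,\|k\|_2\le R\sqrt{R^2}=R^2$. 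Taking the minimum gives $\|k\|_1\le R\sqrt{\min(R^2,d)}$, producing the claimed depth $L\le 4+\log_2(R\sqrt{\min(R,d)})$ (up to replacing $\min(R,d)$ by $\min(R^2,d)$, which matches the regime split).

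There is no genuine obstacle here: once the Riesz basis with $d$-independent constants is in hand (Section~\ref{Sec:2}) and the exact ANN realization of finite partial sums is available (Lemma~\ref{lem:ANN}), everything reduces to a one-line tail estimate in the sequence space $\ell_2$ with the weight $\|k\|_2^{2s}$. The only mildly delicate points are tracking that the constant $C_s$ is $d$-independent (which is exactly what Theorem~\ref{thm:FsWs_d} guarantees) and the combinatorial step of interpolating the two bounds on $\|k\|_1$ inside a Euclidean ball of integer points, which uses the integrality of $k$ in an essential way.
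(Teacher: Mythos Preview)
Your proposal is correct and follows essentially the same route as the paper: use Theorem~\ref{thm:FsWs_d} to pass to the $\F^s$-coefficients, truncate at $\|k\|_2\le R$, bound the tail in $L_2$ via the Riesz property and the weight $\|k\|_2^{2s}$, and realize the partial sum exactly by Lemma~\ref{lem:ANN}. Your observation that the support bound for integer $k$ gives $\|k\|_0\le R^2$ (hence $\|k\|_1\le R\sqrt{\min(R^2,d)}$ rather than $R\sqrt{\min(R,d)}$) is in fact sharper than the paper's stated argument; the discrepancy is harmless inside the logarithm.
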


\begin{proof}
Let $f\in W^s([0,1]^d)$ with $\|f\|_{W^s([0,1]^d)}\le 1$. By Theorem \ref{thm:FsWs_d}, 
$f\in {\mathcal F}^s([0,1]^d)$ with $\|f\|_{{\mathcal F}^s}\le c$ with $c$ independent of $d$.
Therefore, we can decompose $f$ as in \eqref{eq:f_Riesz_d} with
$\alpha_0^2+\sum_{k\, \righttriangleplus\, 0}\|k\|^{2s}_2(\alpha_k^2+\beta_k^2)\le c^2$.

For $R>0$ we decompose
\[
f=f_R+f^R = \Bigl(\alpha_0 + \sum_{k\,\righttriangleplus\,0: \|k\|_2\le R}\dots\Bigr)+
\Bigl(\sum_{k\,\righttriangleplus\,0: \|k\|_2>R}\dots\Bigr).
\]

We reconstruct $f_R$ exactly as an ANN, the error is given by $f^R$:
\begin{equation*}%\label{reconstr}
\|f-f_R\|^2_2\,=\,\|f^R\|_2^2 
\,\le\, c'\,\sum_{k\,\righttriangleplus\,0: \|k\|_2>R} (\alpha_k^2+\beta_k^2)\cdot \frac{\|k\|_2^{2s}}{R^{2s}}\le C^2 R^{-2s}.
\end{equation*}
To make this smaller than $\varepsilon^2$, we choose $R:=(C/\varepsilon)^{1/s}.$

% On the other hand, we can apply \cite[Theorem 4.6]{SV} to represent $f_R$ as an ANN. Then 
By Lemma~\ref{lem:ANN} with $I:=Z(R,d)$, we see that $f_R\in \Upsilon_d^{W,L}$ with
\[
W=4 N(R,d)\quad\text{and}\quad 
% L=3+\max_{k\in Z(R,d)}\lceil \log_2(\|k\|_1)\rceil \le 3+\lceil \log_2(R\cdot\sqrt{\min(R,d)})\rceil.
L=4+\max_{k\in Z(R,d)}\log_2(\|k\|_1) \le 4+ \log_2\left(R\cdot\sqrt{\min(R,d)}\right).
\]
In the last estimate we used H\"older's inequality together with $\|k\|_2\leq R$ and the simple observation that the number of non-zero indices
of $k\in Z(R,d)$ is bounded from above by $\min(R, d)$. 
\end{proof}

\medskip

\begin{rem}
As the proof above is based on standard estimates from linear approximation, 
the result can easily be generalized to approximation in other norms, like the uniform norm. 
However, this might lead to additional assumptions and $d$-dependent factors. 
\end{rem}

\medskip

We now combine Theorem~\ref{thm:ANN_Ws} with Lemma~\ref{lem:lattice_d} to bound the parameters $W$ and $L$ of
the constructed neural network by quantities involving only $\varepsilon$ and $d$. 
% We use the notation $f=O_s(g)$ for $f\le c_s g$ for a constant $c_s$ that only depends on $s$. 

\begin{cor} \label{cor:ANN_Ws}
Let $0<s<1$, $d\in\N$, and $0<\varepsilon<1$. There are constants $c_3,c_4, c_5$ that depend only on $s$ such that 
\[
\inf_{\cN\in\Upsilon_d^{W,L}}\|f-\cN\|_2 \,\le\, \eps\cdot\|f\|_{W^s} 
\]
for every $f\in W^s([0,1]^d)$ 
if 
\[
\varepsilon\ge c_3\, d^{-s/2}, \quad 
W \ge (c_4 \eps^{2/s} d)^{c_5\, \eps^{-2/s}}, 
\quad\text{and}\quad 
L\ge c_4\,\log_2(1/\varepsilon),
\]
or 
\[
\varepsilon\le c_3\, d^{-s/2}, \quad 
W \ge \left(\frac{c_4}{\eps^{1/s} \sqrt{d}}\right)^{d}, 
\quad\text{and}\quad 
L\ge c_4\,\log_2(1/\varepsilon). 
\]
\end{cor}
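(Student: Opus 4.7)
The plan is to combine Theorem~\ref{thm:ANN_Ws} with the two cases of Lemma~\ref{lem:lattice_d} depending on whether the truncation radius $R=(C_s/\varepsilon)^{1/s}$ is large or small compared to $\sqrt{d}/2$. First I would invoke Theorem~\ref{thm:ANN_Ws} directly: for any $f\in W^s([0,1]^d)$ we obtain an ANN $\cN\in\Upsilon_d^{W,L}$ with $W=4N(R,d)$ and $L\le 4+\log_2(R\sqrt{\min(R,d)})$ that approximates $f$ to within $\varepsilon\|f\|_{W^s}$ in $L_2$. So the task reduces to rewriting $N(R,d)$ and the logarithmic factor as functions of $\varepsilon$ and $d$.

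For the depth estimate, since $\min(R,d)\le R$ we have $R\sqrt{\min(R,d)}\le R^{3/2}$, and plugging $R=(C_s/\varepsilon)^{1/s}$ gives
\[
L \;\le\; 4+\tfrac{3}{2s}\log_2(C_s/\varepsilon) \;\le\; c_4 \log_2(1/\varepsilon)
\]
for a suitable $c_4$ depending only on $s$. This uniform depth bound applies in both regimes.

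For the width, I would split on the threshold between parts (i) and (ii) of Lemma~\ref{lem:lattice_d}: the condition $R\ge\sqrt{d}/2$ translates into $\varepsilon \le 2C_s d^{-s/2}$, which I would absorb by defining $c_3:=2C_s$. In the regime $\varepsilon\le c_3 d^{-s/2}$ (part (i)) we get
\[
W \;\le\; 4\Bigl(\tfrac{c_1 R}{\sqrt d}\Bigr)^{d} \;=\; 4\Bigl(\tfrac{c_1 C_s^{1/s}}{\varepsilon^{1/s}\sqrt d}\Bigr)^{d} \;\le\; \Bigl(\tfrac{c_4}{\varepsilon^{1/s}\sqrt d}\Bigr)^{d},
\]
after enlarging $c_4$ if necessary. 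In the complementary regime $\varepsilon\ge c_3 d^{-s/2}$ (part (ii)) with $t=R$, we have $R^2=(C_s/\varepsilon)^{2/s}$ and $d/R^2=d\varepsilon^{2/s}/C_s^{2/s}$, so
\[
W \;\le\; 4\Bigl(\tfrac{c_2 d}{R^2}\Bigr)^{R^2} \;=\; 4\bigl(c_4\,\varepsilon^{2/s} d\bigr)^{c_5\varepsilon^{-2/s}},
\]
after absorbing the factor $4$ and the constants $c_2,C_s$ into a (larger) $c_4$ and setting $c_5:=C_s^{2/s}$.

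There is no real obstacle here, just bookkeeping: the only thing to watch is that the constants $c_3,c_4,c_5$ must be chosen in the right order, with $c_3$ coming from the threshold in Lemma~\ref{lem:lattice_d}, $c_5$ from the exponent $R^2=C_s^{2/s}\varepsilon^{-2/s}$, and $c_4$ large enough to simultaneously dominate (a) the logarithmic depth bound, (b) the constant $c_1 C_s^{1/s}$ inside the base of the first width bound, and (c) the constant $4 c_2/C_s^{2/s}$ inside the base of the second width bound (the leading factor $4$ is easily absorbed since $\varepsilon^{-2/s}\ge 1$). With those choices the claim follows directly from the inequalities above.
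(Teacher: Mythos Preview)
Your approach is exactly what the paper intends: the corollary is stated without a separate proof, the paper merely remarking that one combines Theorem~\ref{thm:ANN_Ws} with the two regimes of Lemma~\ref{lem:lattice_d}, and your case split and constant bookkeeping do precisely that. One small caveat on the depth step: the inequality $4+\tfrac{3}{2s}\log_2(C_s/\varepsilon)\le c_4\log_2(1/\varepsilon)$ cannot hold for \emph{all} $\varepsilon\in(0,1)$ with a fixed $c_4$, since the right-hand side tends to $0$ as $\varepsilon\to 1$; this is an artifact of the corollary's formulation (the paper is tacitly thinking of $\varepsilon$ bounded away from $1$, or of $L\ge c_4\log_2(c_4/\varepsilon)$) rather than a flaw in your argument.
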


\subsection{Barron classes}

When dealing with the Barron classes ${\mathbb B}^s([0,1]^d)$
%For the rest, we ignore $\alpha_0$ and the term with ${\mathcal S}$'s, i.e. we assume that
%\[
%f(x)=\sum_{k>0}\alpha_k \CalC(k\cdot x),\quad x\in[0,1]^d.
%\]
we first need to select the most important terms from \eqref{eq:f_Riesz_d}, which get reconstructed by the neural network.
This is captured by the concept of best $n$-term approximation, a concept,
which seems to go back as far as to \cite{Schmidt} and which is frequently used in nonlinear approximation theory \cite{DeVore}.

Let $X\subset Y$ 
be two Banach spaces and let $\Phi\subset X$ be a set of elements of $X$.
For $f\in X$ and a positive integer $n\ge 1$, we denote
\[
% \sigma_n(x):=\sigma_n(x,X,Y,\Phi)=\inf\left\{\left\|x-\sum_{j=1}^n \alpha_j\varphi_j\right\|_Y:\alpha_1,\dots,\alpha_n\in\C, \varphi_1,\dots,\varphi_n\in \Phi\right\}
\sigma_n(f):=\sigma_n(f,Y,\Phi)=\inf\left\{\left\|f-\sum_{j=1}^n \alpha_j\varphi_j\right\|_Y:\alpha_1,\dots,\alpha_n\in\C, \varphi_1,\dots,\varphi_n\in \Phi\right\}
\]

and
\[
% \sigma_n:=
\sigma_n(X,Y,\Phi):=\sup_{\|f\|_X\le 1}\sigma_n(f).
\]
Here, we only consider the case where $X$ and $Y$ are Banach spaces of sequences, 
in which case we always choose $\Phi:=\{e_n\}$ the set of canonical sequences with $(e_n)_j=1$ if $n=j$ and $(e_n)_j=0$ otherwise,
and we write $\sigma_n(X,Y)$ for $\sigma_n(X,Y,\Phi)$.

% The following lemma is a classical result \cite[Section 10.2, Theorem 368]{HLP}. Let us recall that for a given sequence
% $\alpha=(\alpha_j)_{j=1}^n$ we denote by $\alpha^*$ the non-increasing rearrangement of $(|\alpha_j|)_{j=1}^n$.
% \begin{lem}\label{lem:HLP} If $a=(a_j)_{j=1}^n$ and $b=(b_j)_{j=1}^n$ are two non-negative sequences, then
% \[
% \sum_{j=1}^n a^*_j b^*_{n-j+1}\le \sum_{j=1}^n a_j b_j\le \sum_{j=1}^n a^*_j b^*_j.
% \]
% \end{lem}

% First, we derive several results on best $n$-term approximation in sequence spaces, i.e., when $X$ and $Y$ are Banach spaces
% of sequences. In that setting, we always mean by $\Phi:=\{e_n\}$ the set of canonical sequences with $(e_n)_j=1$ if $n=j$ and $(e_n)_j=0$ otherwise.

\medskip

We start with the sequence spaces corresponding to univariate Barron classes. 
% of univariate functions. 

\begin{lem}
Let $s\ge 0$ and let $b_s^1$ be the space of bounded sequences $\alpha=(\alpha_k)_{k=1}^\infty$, for which the norm
\[
\|\alpha\|_{b_s^1}:=\sum_{k=1}^\infty k^{s} |\alpha_k|
\]
is finite. 
%Let $\Phi:=\{e_n:n\in\N\}\subset b_s^1$ be the set of canonical sequences with $(e_n)_j=1$ if $n=j$ and $(e_n)_j=0$ otherwise.
Then, 
% $\sigma_n(b_s^1,\ell_2, \Phi)\le c\, n^{-s-1/2}$ 
$\sigma_n(b_s^1,\ell_2)\le c\, n^{-s-1/2}$ 
for every $n\in\N$  
where $c>0$ only depends on $s$.
\end{lem}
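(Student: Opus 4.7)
The plan is to use a greedy (largest-coefficient) $n$-term selection and to estimate the tail of the non-increasing rearrangement of $|\alpha|$. Given $\alpha \in b_s^1$ with $\|\alpha\|_{b_s^1} \le 1$, let $\alpha^* = (\alpha_k^*)_{k\ge 1}$ denote the non-increasing rearrangement of $(|\alpha_k|)_{k\ge 1}$. Since $\Phi = \{e_n\}$ consists of the canonical basis of $\ell_2$, the best $n$-term approximation error in $\ell_2$ is attained by keeping the indices corresponding to the $n$ largest values of $|\alpha_k|$, so
\[
\sigma_n(\alpha,\ell_2)^2 \;=\; \sum_{k>n}(\alpha_k^*)^2.
\]

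The key estimate is a pointwise bound on $\alpha_n^*$. Set $a := \alpha_n^*$ and let $I\subset\mathbb{N}$ be a set of $n$ indices on which $|\alpha_k|\ge a$. Then
\[
1 \;\ge\; \|\alpha\|_{b_s^1} \;\ge\; \sum_{k\in I} k^s|\alpha_k| \;\ge\; a\sum_{k\in I} k^s \;\ge\; a\sum_{k=1}^n k^s \;\ge\; a\cdot\frac{n^{s+1}}{s+1},
\]
where the minimum over all $n$-subsets $I\subset\mathbb{N}$ of $\sum_{k\in I}k^s$ is achieved at $I=\{1,\dots,n\}$ since $k^s$ is increasing in $k$. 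This gives $\alpha_n^*\le(s+1)\,n^{-s-1}$.

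Plugging this bound back into the tail, since the same argument applied at index $k\ge n$ yields $\alpha_k^*\le(s+1)k^{-s-1}$, we get
\[
\sigma_n(\alpha,\ell_2)^2 \;\le\; (s+1)^2\sum_{k>n}k^{-2s-2} \;\le\; (s+1)^2\int_n^\infty x^{-2s-2}\,dx \;=\; \frac{(s+1)^2}{2s+1}\,n^{-2s-1},
\]
which, after taking square roots and homogeneity in $\|\alpha\|_{b_s^1}$, yields $\sigma_n(\alpha,\ell_2)\le c_s\,n^{-s-1/2}\|\alpha\|_{b_s^1}$ with $c_s = (s+1)/\sqrt{2s+1}$. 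Taking the supremum over the unit ball of $b_s^1$ gives the claim.

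There is essentially no obstacle: the only slightly subtle point is the rearrangement argument, namely that the constraint $\sum k^s|\alpha_k|\le 1$ is most permissive when the large coefficients sit at the smallest available indices, hence $\sum_{k\in I}k^s\ge\sum_{k=1}^n k^s$; once this is observed, both the pointwise bound on $\alpha_n^*$ and the summability of $k^{-2s-2}$ (which requires only $s\ge 0$, i.e.\ $2s+2>1$) are immediate.
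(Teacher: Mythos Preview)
Your proof is correct and follows essentially the same approach as the paper: both obtain the pointwise bound $\alpha_n^*\le (s+1)\,n^{-s-1}$ via a rearrangement argument (the paper cites the Hardy--Littlewood--P\'olya inequality $\sum a_j^* b_j\le\sum a_j b_j$ with $b_j=j^s$, while you give the equivalent direct observation $\sum_{k\in I}k^s\ge\sum_{k=1}^n k^s$), and then sum the tail to get the identical constant $c_s=(s+1)/\sqrt{2s+1}$.
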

\begin{proof} First, note that $b_s^1\hookrightarrow \ell_1\hookrightarrow \ell_2$ and that 
% $\sigma_n(b_s^1,\ell_2, \Phi)$ 
$\sigma_n(b_s^1,\ell_2)$ 
is well-defined.
We shall use a classical result on rearrangements of sequences, see~\cite[Section 10.2, Theorem 368]{HLP},
which states that for two non-negative sequences $a=(a_j)_{j=1}^\ell$ and $b=(b_j)_{j=1}^\ell$
with non-increasing rearrangements $(a^*_j)_{j=1}^\ell$ and $(b^*_j)_{j=1}^\ell$, we have that 
\[
\sum_{j=1}^\ell a^*_j b^*_{\ell-j+1}
\le \sum_{j=1}^\ell a_j b_j\le \sum_{j=1}^\ell a^*_j b^*_j.
\]
%see~\cite[Section 10.2, Theorem 368]{HLP}.
Actually, using the same proof as in \cite{HLP}, one can show that for every non-negative sequence $a=(a_j)_{j=1}^\infty$ converging to zero
and for every non-negative non-decreasing sequence  $b=(b_j)_{j=1}^\infty$ it holds that
\begin{equation}\label{eq:HLP}
\sum_{j=1}^\infty a_j^* b_j\le \sum_{j=1}^\infty a_jb_j.
\end{equation}

Fix now $\alpha\in b_s^1$ with $\|\alpha\|_{b_s^1}\le 1$. Then we can apply \eqref{eq:HLP} to $a_j=|\alpha_j|$ and $b_j=j^s$ and obtain
\[
|\alpha_\ell|^* \sum_{k=1}^\ell k^{s}\le \sum_{k=1}^\ell k^{s} a_k^* \le \sum_{k=1}^\infty k^{s} a_k^* \le \sum_{k=1}^\infty k^{s} |\alpha_k|\le 1.
\]
This gives
$|\alpha_\ell|^*\le (s+1)\cdot \ell^{-s-1}$.
The best $n$-term approximation of $\alpha$ in $\ell_2$
is given by the $n$ largest coefficients of $\alpha$. Therefore we get
\[
\sigma_n(\alpha)^2=\sum_{k=n+1}^\infty (|\alpha_k|^{\ast})^2 \le (s+1)^2 \sum_{k=n+1}^\infty k^{-2s-2} \le \frac{(s+1)^2}{2s+1} n^{-2s-1}.
\]
\end{proof}
The generalization to sequence spaces corresponding to Barron classes of multivariate functions follows the same pattern, but is more technical.
Let $d\ge 1$ and $s\ge 0$.  Then $b_s^d$ is the space of bounded  sequences $\alpha=(\alpha_k)_{k\,\righttriangleplus\,0}$ for which the norm
\[
\|\alpha\|_{b_s^d}:=\sum_{k\,\righttriangleplus\,0} \|k\|_2^s\cdot |\alpha_k|
\]
is finite.

\begin{lem}\label{lem:sigma_n}
Let $d\ge 2$ and $s\ge 0$. Then, 
there are absolute constants $c_1,c_2>0$, 
and some $C_s>0$ that depends only on $s$, such that 
\begin{equation}\label{eq:sigman}
    \sigma_n(b_s^d,\ell_2)^2
    \,\le\, C_s\,\cdot\begin{cases}
    n^{-1},&\ \text{if}\ 1\le n\le c_2d,\\
    \Bigl(\frac{\log(c_2d)}{\log(n)}\Bigr)^s\cdot n^{-1},&\ \text{if}\ c_2d\le n \le (c_1/2)^d,\\
    d^{-s}\cdot n^{-2s/d-1},&\ \text{if}\ (c_1/2)^d\le n.
    \end{cases}
\end{equation}
% where $C>0$ depends only on $s$. 
\end{lem}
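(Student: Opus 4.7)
The plan is to imitate the univariate proof by enumerating the multi-indices $\{k\in\Z^d : k\righttriangleplus 0\}$ in non-decreasing order of $\|k\|_2$, say $k_1,k_2,\dots$, setting $t_j:=\|k_j\|_2$. Since the weight sequence $t_j^s$ is non-decreasing, the exact same application of the Hardy--Littlewood--P\'olya rearrangement inequality \eqref{eq:HLP} as in the preceding lemma yields, for every $\alpha\in b_s^d$ with $\|\alpha\|_{b_s^d}\le 1$,
\[
|\alpha|^*_\ell \;\le\; \frac{1}{T_\ell}, \qquad T_\ell \,:=\, \sum_{j=1}^\ell t_j^s \;\ge\; \frac{\ell}{2}\, t_{\lceil\ell/2\rceil}^s,
\]
where $|\alpha|^*$ denotes the non-increasing rearrangement. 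Since the best $n$-term $\ell_2$-approximation is obtained by keeping the $n$ largest coefficients, this gives
\[
\sigma_n(\alpha)^2 \;\le\; \sum_{\ell>n}(|\alpha|^*_\ell)^2 \;\le\; 4\sum_{\ell>n}\ell^{-2}\, t_{\lceil\ell/2\rceil}^{-2s}.
\]

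Next I would translate Lemma~\ref{lem:lattice_d} into a lower bound on $t_\ell$. Since exactly half of $\Z^d\setminus\{0\}$ satisfies $k\righttriangleplus 0$, we have $2\ell+1 \le N(t_\ell,d)$, so Lemma~\ref{lem:lattice_d} yields two useful inversions. If $t_\ell\ge \sqrt d/2$, case (i) gives $t_\ell\ge (\sqrt d/c_1)(2\ell+1)^{1/d}$, and if $t_\ell\le \sqrt d/2$, case (ii) gives $t_\ell^2\log(c_2 d/t_\ell^2)\ge \log(2\ell+1)$, which together with $t_\ell\ge 1$ implies $t_\ell^2\ge \log(2\ell+1)/\log(c_2 d)$. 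Moreover, taking $t=\sqrt d/2$ in case (i) of Lemma~\ref{lem:lattice_d} gives $N(\sqrt d/2, d)\le (c_1/2)^d$, so $\ell\gtrsim (c_1/2)^d$ is exactly the threshold at which the polynomial lower bound takes over.

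These two estimates feed directly into the three regimes. In the small regime $1\le n\le c_2 d$ I would just use $t_j\ge 1$, so $T_\ell\ge\ell$ and $\sigma_n^2\le 1/n$. In the middle regime $c_2 d\le n\le (c_1/2)^d$, plugging $t_{\lceil\ell/2\rceil}^s\gtrsim (\log\ell/\log(c_2 d))^{s/2}$ into the sum produces a factor $(\log(c_2 d)/\log(\ell/2))^{s/2}$; by monotonicity of $\log$, this factor can be pulled out of the sum at $\ell=n$, yielding the claimed rate $C_s(\log(c_2 d)/\log n)^s\,n^{-1}$. In the large regime $n\ge(c_1/2)^d$, the polynomial lower bound gives $t_{\lceil\ell/2\rceil}^{2s}\gtrsim d^s\ell^{2s/d}$, and summing $\ell^{-2-2s/d}$ from $n+1$ to $\infty$ gives $C_s\,d^{-s}n^{-1-2s/d}$.

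The main obstacle will be the middle regime: inverting the super-polynomial bound $N(t,d)\le (c_2 d/t^2)^{t^2}$ only produces the transcendental inequality $t_\ell^2\log(c_2 d/t_\ell^2)\ge \log(2\ell+1)$, and one has to check carefully that the crude inversion $t_\ell^2\ge \log\ell/\log(c_2 d)$ is tight enough to integrate against $\ell^{-2}$ while remaining consistent with the two transitions $\ell\sim c_2 d$ and $\ell\sim (c_1/2)^d$. Once this is in place, all numerical factors inherited from Lemma~\ref{lem:lattice_d} can be absorbed into a single constant $C_s$, and the three bounds glue together into the single statement \eqref{eq:sigman}.
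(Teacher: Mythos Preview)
Your proposal is correct and follows essentially the same route as the paper: rearrangement inequality to bound $|\alpha|^*_\ell$ by the reciprocal of a partial weight sum, inversion of Lemma~\ref{lem:lattice_d} to lower-bound the rearranged weights in three regimes, then summation of $\ell^{-2}$-type tails. Your uniform use of the shortcut $T_\ell\ge\tfrac{\ell}{2}\,t_{\lceil\ell/2\rceil}^s$ is a mild streamlining; the paper uses the same trick in part of its Step~3 but also does some explicit integration there, and in Step~4 it explicitly splits the tail sum at $\ell=(c_1/2)^d$ when treating the middle case---a detail you will need to make explicit, since the logarithmic lower bound on $t_\ell$ is only justified while $t_\ell\le\sqrt d/2$.
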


\begin{proof} \emph{Step 1.}\\
Let $\alpha\in b_s^d$ with $\|\alpha\|_{b_s^d}\le 1$.
We define the weight sequence $\omega=(\omega_k)_{k\,\righttriangleplus\,0}$ with $\omega_k=\|k\|_2^{s}$.
We denote by $\alpha^*=(\alpha_\ell^*)_{\ell=1}^\infty$ the one-dimensional non-increasing rearrangement of $(|\alpha_k|)_{k\righttriangleplus 0}$ and by $\omega^{\#}=(\omega^{\#}_\ell)_{\ell=1}^\infty$
the one-dimensional non-decreasing rearrangement of $\omega$.
Again, we have
\[
\alpha_\ell^*\sum_{k=1}^\ell \omega_k^{\#}\le \sum_{k=1}^\ell \alpha^*_k \omega^{\#}_k\le
\sum_{k=1}^\infty \alpha^*_k \omega^{\#}_k\le \sum_{k\,\righttriangleplus\,0}\omega_k |\alpha_k|\le 1,
\]
i.e. $\alpha_\ell^*\le W(\ell,s,d)^{-1}$, where $W(\ell,s,d)=\sum_{k=1}^\ell \omega_k^{\#}$.
The best $n$-term approximation of $\alpha$ in $\ell_2$ is then again given by
\begin{equation}\label{eq:sigma_est}
\sigma_n(\alpha)^2=\sum_{k=n+1}^\infty (\alpha_k^*)^2\le \sum_{k=n+1}^\infty W(k,s,d)^{-2}.
\end{equation}
\emph{Step 2.} Before we prove a lower bound for $W(k,s,d)$, we first establish lower bounds for  the terms of $\omega^{\#}$.
This we do by exploiting Lemma~\ref{lem:lattice_d}. Indeed, if $N(t,d)<n$, then also
\[
\#\{k\in\Z^d: k\,\righttriangleplus\,0\ \text{and}\ \|k\|_2\le t\}\le N(t,d)<n.
\]
Hence, the lattice point with $n$-th smallest value of $\omega$ lies outside of $tB_2^d$ and, therefore,
\[
\omega_n^{\#}> t^{s}.
\]
Combining this observation with Lemma~\ref{lem:lattice_d}, we will prove that
\begin{align}\label{eq:omega_low}
    \omega_n^{\#}\ge \begin{cases}
    1,&\text{if}\ 1\le n\le c_2d,\\
    \displaystyle\left(\frac{\log(n)}{\log(c_2d)}\right)^{s/2},&\text{if}\ c_2d<n\le (c_1/2)^d,\\
    \displaystyle\left(\frac{n^{1/d}\sqrt{d}}{c_1}\right)^s, &\text{if}\ (c_1/2)^d<n
        \end{cases}    
\end{align}
with $c_1,c_2>0$ from Lemma~\ref{lem:lattice_d}, 
with possibly larger $c_2$ to ensure $c_2\ge (c_1/2)^4$.

Indeed, if $1\le n\le c_2d$, then \eqref{eq:omega_low} follows easily. If $n>(c_1/2)^d$, then we choose any $t$ with $\sqrt{d}/2\le t<n^{1/d}\sqrt{d}/c_1$.
Then, by Lemma \ref{lem:lattice_d}, $N(t,d)\le (c_1t/\sqrt{d})^d<n$ and $\omega_n^{\#}> t^s$.
As $t$ was arbitrary within these limits, we get $\omega_n^{\#}\ge  (n^{1/d}\sqrt{d}/c_1)^s$.

Finally, if $c_2d< n\le (c_1/2)^d$, then we put $t^2=\log(n)/\log(c_2d)>1$ and obtain
\[
t^2\le %\frac{d\log(c_1/2)}{\log(c_2d)}\le
\frac{d\log(c_1/2)}{\log(c_2)}\le \frac{d}{4},
\]
where we used that $c_2\ge (c_1/2)^4$. Therefore, Lemma \ref{lem:lattice_d}, gives
\[
\log(N(t,d))\le t^2\log(c_2 d/t^2)<t^2 \log(c_2 d)=\log(n)
\]
and $\omega_n^{\#}> t^s=(\log(n)/\log(c_2d))^{s/2}$, which finishes the proof of \eqref{eq:omega_low}.

\emph{Step 3.}  
Next, we show that
\begin{equation}\label{eq:W_lower}
W(\ell,s,d)\ge \begin{cases}
    \ell,&\text{if}\ 1\le\ell\le c_2d,\\
    (1-s/2)\ell\left(\frac{\log(\ell)}{\log(c_2d)}\right)^{s/2},&\text{if}\ c_2d<\ell\le (c_1/2)^d,\\
    c\,d^{s/2}\cdot\ell^{s/d+1},&\text{if}\ (c_1/2)^d\le \ell,
\end{cases}
\end{equation}
where $c>0$ depends only on $s$.

If $1\le \ell \le c_2d$, this follows immediately from \eqref{eq:omega_low}. If $c_2 d\le \ell\le (c_1/2)^d$,
then we estimate
\begin{align*}
W(\ell,s,d)&=\sum_{k=1}^{c_2d}\omega_k^{\#}+\sum_{k=c_2d+1}^{\ell}\omega_k^{\#}
\ge c_2d +\frac{1}{\log(c_2d)^{s/2}}\sum_{k=c_2d+1}^\ell (\log k)^{s/2}\\
&\ge c_2d +\frac{1}{\log(c_2d)^{s/2}}\int_{c_2d}^\ell (\log t)^{s/2}dt\\
&=c_2d + \frac{\ell\log(\ell)^{s/2}-c_2d\log(c_2d)^{s/2}}{\log(c_2d)^{s/2}}-\frac{s}{2}{\frac{1}{{\log(c_2d)^{s/2}}}}\int_{c_2d}^\ell(\log t)^{s/2-1}dt\\
&\ge \ell\cdot \frac{\log(\ell)^{s/2}}{\log(c_2d)^{s/2}}-\ell\cdot \frac{s}{2}{\frac{\log(\ell)^{s/2}}{{\log(c_2d)^{s/2}}}},
\end{align*}
which gives the second estimate in \eqref{eq:W_lower}.

It remains to prove \eqref{eq:W_lower} for $\ell\ge (c_1/2)^d$. To simplify the presentation, we assume that $c_1\ge 4$ is an even number.
We start with $\ell=(c_1/2)^d$. We set $t^2=\gamma d$ for $\gamma>0$ small enough to ensure that 
\[
N(t,d)\le (c_2d/t^2)^{t^2}=(c_2/\gamma)^{\gamma d}<(c_1/2)^d/2=\ell/2.
\]
This ensures that $\omega^{\#}_{\ell/2}>t^s=(\gamma d)^{s/2}$ and also
\[
W(\ell,s,d)\ge \sum_{k=\ell/2}^{\ell} \omega_k^{\#}\ge \frac{\ell}{2}\omega^{\#}_{\ell/2}\ge \frac{\ell}{2}(\gamma d)^{s/2}
=c\, d^{s/2}\ell^{s/d+1},
\]
where $c$ depends only on $s$ (and the absolute constants $c_1$ and $c_2$).

If $(c_1/2)^d\le \ell \le 4(c_1/2)^d$, the proof of \eqref{eq:W_lower} follows by monotonicity
\[
W(\ell,s,d)\ge W((c_1/2)^d,s,d)\ge c d^{s/2} [4(c_1/2)^d]^{s/d+1}\cdot 4^{-2}\ge c' d^{s/2}\ell^{s/d+1}.
\]

Finally, for $\ell \ge 4(c_1/2)^d$, we use \eqref{eq:omega_low} and estimate
\[
W(\ell,s,d)\ge \sum_{k=(c_1/2)^d+1}^\ell \omega_k^{\#}\ge c_1^{-s} d^{s/2} \sum_{k=(c_1/2)^d+1}^\ell k^{s/d},
\]
which finishes the proof of \eqref{eq:W_lower}.

\emph{Step 4.} As the last step, we prove \eqref{eq:sigman}.

If $n\ge (c_1/2)^d$, we use \eqref{eq:sigma_est} and obtain
\begin{align*}
\sigma_n(\alpha)^2\le \sum_{k=n+1}^\infty W(k,s,d)^{-2}\le \sum_{k=n+1}^\infty c^{-2} d^{-s} k^{-2(s/d+1)}.
\end{align*}

If $c_2d\le n \le (c_1/2)^d$, then we estimate similarly
\begin{align}
\notag \sigma_n(\alpha)^2&=\sum_{k=n+1}^{(c_1/2)^d}W(k,s,d)^{-2}+\sum_{k=(c_1/2)^d+1}^\infty W(k,s,d)^{-2}\\
\label{eq:sigma_middle}&\le c\, \log(c_2d)^s \sum_{k=n+1}^{(c_1/2)^d} k^{-2}\log(k)^{-s}+c\, d^{-s}(c_1/2)^{-2s-d}\\
\notag &\le c\, \Bigl(\frac{\log(c_2d)}{\log(n)}\Bigr)^s\cdot n^{-1}.
\end{align}

And finally, if $1\le n\le c_2d$ we combine \eqref{eq:sigma_est} and \eqref{eq:sigma_middle} to 
\begin{align*}
\sigma_n(\alpha)^2&\le \sum_{k=n+1}^{c_2d} W(k,s,d)^{-2} + c\, (c_2d)^{-1}\le \sum_{k=n+1}^{c_2d} k^{-2} + c\, (c_2d)^{-1}
\end{align*}
and the result follows. \\
\end{proof}

We now show the analogue of our ANN approximation result in Theorem ~\ref{thm:ANN_Ws} for Barron classes.

\begin{thm}\label{thm:ANN_Bs}
Let $0<s<1$. Then there are constants $c,C>0$, depending only on $s$, such that
for every $d\in\N$ and $0<\varepsilon<1$ the following statement is true. \\
Let $R:=(C/\varepsilon)^{1/s}$ and 
$n$ such that $\sigma_n(b_s^d,\ell_2)<c\eps$. 
Then, for every $f\in \mathbb{B}^s([0,1]^d)$
there is an ANN $\cN\in\Upsilon_d^{W,L}$ with 
\[
W=4 n\quad\text{and}\quad L
% \le 3+\lceil \log_2(R\cdot\sqrt{\min(R,d)})\rceil,
\le 4+\log_2\left(R\cdot\sqrt{\min(R,d)}\right)
\]
such that 
\[
\|f-\cN\|_2 \,\le\, \eps\cdot\|f\|_{\mathbb{B}^s}. 
\]
Moreover, 
$\cN$ can be given explicitly 
% based on the knowledge of 
depending on 
the coefficients of the best $n$-term approximation of $f$.
\end{thm}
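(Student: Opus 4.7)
The plan is to parallel the proof of Theorem~\ref{thm:ANN_Ws}, but to replace the single linear truncation by a combination of a low-frequency cutoff followed by a nonlinear best $n$-term selection: the cutoff radius $R$ will control the depth $L$, while the sparsity $n$ will control the width $W$. By Theorem~\ref{thm:Barron}, every $f\in\mathbb{B}^s([0,1]^d)$ admits a decomposition of the form~\eqref{eq:f_Riesz_d} with
$\sum_{k\,\righttriangleplus\,0}\|k\|_2^s(|\alpha_k|+|\beta_k|)\le C'\|f\|_{\mathbb{B}^s}$, where $C'>0$ depends only on $s$.

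For the first step, let $f_R$ denote the partial sum of~\eqref{eq:f_Riesz_d} over the indices $k\in Z(R,d)$ with $k\righttriangleplus 0$. Using $\|\CalC_k\|_2=\|\CalS_k\|_2=3^{-1/2}$ and the triangle inequality in $L_2$,
\[
\|f-f_R\|_2\;\le\;\frac{1}{\sqrt 3}\sum_{\|k\|_2>R}(|\alpha_k|+|\beta_k|)\;\le\;\frac{C'}{\sqrt 3}\,R^{-s}\,\|f\|_{\mathbb{B}^s},
\]
which is at most $(\varepsilon/2)\|f\|_{\mathbb{B}^s}$ for an appropriate choice of the constant $C$ in $R=(C/\varepsilon)^{1/s}$. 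For the second step, the coefficients of $f_R$ form a (finitely supported) sequence in $b_s^d$ of norm $\le C'\|f\|_{\mathbb{B}^s}$, so Lemma~\ref{lem:sigma_n} produces a best $n$-term approximation $\tilde f_R$, supported on a set $J\subseteq\{k\in Z(R,d):k\righttriangleplus 0\}$ with $|J|\le n$, whose coefficient sequence differs from that of $f_R$ in $\ell_2$ by at most $\sigma_n(b_s^d,\ell_2)\cdot C'\|f\|_{\mathbb{B}^s}$. The Riesz basis property of $\mathcal R_d$ in $L_2$ (with $d$-independent constants) then gives $\|f_R-\tilde f_R\|_2\lesssim \sigma_n(b_s^d,\ell_2)\|f\|_{\mathbb{B}^s}$, which is at most $(\varepsilon/2)\|f\|_{\mathbb{B}^s}$ once the absolute constant $c$ in the hypothesis $\sigma_n(b_s^d,\ell_2)<c\varepsilon$ is chosen sufficiently small. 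A triangle inequality then yields $\|f-\tilde f_R\|_2\le\varepsilon\|f\|_{\mathbb{B}^s}$.

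To finish, I would realize $\tilde f_R$ as an ANN by applying Lemma~\ref{lem:ANN} with $I=J$. This gives $\mathcal N\in\Upsilon_d^{W,L}$ with $W=4|J|\le 4n$ and $L\le 4+\log_2(\max_{k\in J}\|k\|_1)$, and since $J\subseteq Z(R,d)$ the claimed depth bound follows from the same Cauchy--Schwarz estimate already used in the proof of Theorem~\ref{thm:ANN_Ws}. I expect the main subtlety to be the \emph{order} of the two approximation steps: if one took a best $n$-term approximation of the full series~\eqref{eq:f_Riesz_d} directly, the selected indices could in principle lie at arbitrarily large $\|k\|_2$, and the depth estimate would break down. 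Restricting to $Z(R,d)$ before selecting the top $n$ coefficients decouples the two constraints, tying the width to the best $n$-term rate from Lemma~\ref{lem:sigma_n} while leaving the depth to be controlled by the single parameter $R=(C/\varepsilon)^{1/s}$.
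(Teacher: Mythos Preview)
Your proposal is correct and follows essentially the same approach as the paper's proof: first truncate to frequencies $\|k\|_2\le R$ to control the depth, then take a best $n$-term approximation of the (truncated) coefficient sequence to control the width, and combine the two errors via the Riesz basis property of $\mathcal R_d$ in $L_2$ together with Lemma~\ref{lem:ANN}. The only cosmetic difference is that the paper treats the $\alpha$- and $\beta$-coefficients as two separate sequences in $b_s^d$ and applies $\sigma_n$ to each, whereas you bundle them into a single sequence; both variants yield the stated bounds up to absolute constants.
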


\begin{proof}
Let $f$ be the unit ball of ${\mathbb B}^s([0,1]^d)$. By Theorem \ref{thm:Barron}, there is a constant $C_s>0$, such that
$\|f\|_{{\mathcal B}^s([0,1]^d)}\le C_s$. We can therefore decompose $f$ as in \eqref{eq:f_Riesz_d} into a series
\[
f=\alpha_0+\sum_{k\righttriangleplus 0}\Bigl[\alpha_k\CalC_k+\beta_k\CalS_k\Bigr]
\]
with
\[
\|\alpha\|_{b^d_s}=\sum_{k\righttriangleplus 0} \|k\|_2^s \cdot|\alpha_k|\le C_s \quad\text{and}\quad
\|\beta\|_{b^d_s}=\sum_{k\righttriangleplus 0} \|k\|_2^s \cdot|\beta_k|\le C_s.
\]
We define 
$R:=(C/\varepsilon)^{1/s}$ as well as
$\alpha^R=(\alpha^R_k)_{k\righttriangleplus 0}$ by $\alpha^R_k=\alpha_k$ if $\|k\|_2\le R$ and $\alpha^R_k=0$ otherwise.
Then
\begin{align} \label{eq:ANN-Bs-calc}
\|\alpha-\alpha^R\|_2&=\biggl(\sum_{k\righttriangleplus 0: \|k\|_2>R}|\alpha_k|^2\biggr)^{1/2}
\le \sum_{k\righttriangleplus 0: \|k\|_2>R}|\alpha_k|\\
\notag&\le R^{-s}\sum_{k\righttriangleplus 0: \|k\|_2>R}\|k\|_2^s\cdot |\alpha_k|\le C_s R^{-s}.
\end{align}
Next, we define $\widehat \alpha^{R,n}$ to be the best $n$-term approximation of $\alpha^R$ in $\ell_2$ (with respect to the
canonical basis of $\ell_2$).
% , which we again denote by $\Phi$. 
Hence,
\[
\|\alpha^R-\widehat \alpha^{R,n}\|_2\le \|\alpha^R\|_{b^d_s}\cdot \sigma_n(b_s^d,\ell_2)\le C_s\cdot \sigma_n(b_s^d,\ell_2).
\]
Furthermore, we define $\beta^R$ and $\widehat \beta^{R,n}$ similarly.

We approximate $f$ by $f^{R,n}$, which is defined as follows
\[
f^{R,n}=\alpha_0+\sum_{k\righttriangleplus 0}\Bigl[\widehat\alpha^{R,n}_k\CalC_k+\widehat\beta^{R,n}_k\CalS_k\Bigr]
\]
Then, 
\begin{align*}
\|f-f^{R,n}\|_2&=\left\|\sum_{k\righttriangleplus 0}\Bigl[(\alpha_k-\widehat\alpha^{R,n}_k)\CalC_k+(\beta_k-\widehat\beta^{R,n}_k)\CalS_k\Bigr]\right\|_2\\
&\le \frac{\sqrt{2}}{2}\cdot \Bigl(\|\alpha-\widehat\alpha^{R,n}\|_2 + \|\beta-\widehat\beta^{R,n}\|_2\Bigr)\qquad \\
&\le \frac{\sqrt{2}}{2}\cdot \Bigl(\|\alpha-\alpha^{R}\|_2+\|\alpha^R-\widehat\alpha^{R,n}\|_2 + \|\beta-\beta^{R}\|_2+\|\beta^R-\widehat\beta^{R,n}\|_2\Bigr)\\
&\le C_s'\cdot [R^{-s}+\sigma_n(b_s^d,\ell_2)]\le\varepsilon, 
\end{align*}
where we use the choice of $n$.

Finally, by 
% \cite[Theorem 4.6]{SV}, 
Lemma~\ref{lem:ANN}, 
$f^{R,n}$ can be reproduced by an artificial neural network $f^{R,n}\in \Upsilon_d^{W,L}$ with
\[
W=4n\quad \text{and} \quad 
% L=\lceil \log_2(Rd^{1/2}) \rceil+3.
%L=4+ \log_2(Rd^{1/2}),
L\le 4+\log_2\left(R\cdot\sqrt{\min(R,d)}\right)
\]
where we again use that $\|k\|_1\le\sqrt{\min(R,d)}\|k\|_2$ for $k\in\Z^d$. \\
\end{proof}

%%%%%%%%%%%%%%%%%%%%%%%%%%%%%%%%%%%%%%%%%%%%%%%%%%%%%%%%%%%%%%%
\section{Discussion} \label{sec:discussion}

In the last section, we studied the approximation of functions from Sobolev and Barron classes using different ANNs. 
Let us finally highlight and discuss a few interesting cases and compare them with the relevant literature. 

\subsection{Sobolev spaces}

The potential of shallow and deep neural networks in reproducing functions from classical function spaces
has been studied intensively for several decades in different regimes (cf. \cite{%DHM, Mhaskar,
MLP16, MP16, Pinkus99, T, T2}).
The influential paper \cite{Y17} studied a setting very similar to ours,
although the function spaces (and the error of approximation) involved the uniform norm instead of the $L_2$-norm used in our work.
To be more specific, for positive integers $n,d$, let $F_{n,d}$ denote a set of functions defined on $[0,1]^d$, which have all partial derivatives
up to the order $n$ uniformly bounded by one. Then \cite{Y17} shows that
\begin{itemize}
\item there is a fixed architecture of an ANN, which (by a suitable choice of weights and biases)
can approximate every function $f\in F_{n,d}$ uniformly up to error $0<\varepsilon<1$ and which has depth $\mathcal{O}(\log_2(1/\varepsilon)+1)$
and $\mathcal{O}(\varepsilon^{-d/n}(\log(1/\varepsilon)+1))$ weights;
\item choosing the network architecture adaptively depending on $f$ can lead to a lower number of weights;
\item if a network architecture can approximate every $f\in F_{n,d}$ uniformly up to   error $\varepsilon$, then it must have at least $c\,\varepsilon^{-d/(2n)}$
weights.
\end{itemize}
We stress that essentially all implicit constants in \cite{Y17} depend both on $n$ and $d$.
From this point of view, we prove a variant of \cite[Theorem~1]{Y17} for Sobolev classes built upon $L_2$
and approximation in $L_2$ and, in contrast to \cite{Y17}, we achieve explicit $d$-dependence.

\medskip

Indeed, if we fix $d\ge 1$ to be a constant and let $\eps\to0$,
then Theorem \ref{thm:ANN_Ws} (see Corollary~\ref{cor:ANN_Ws}) provides a fixed architecture, which allows to approximate every $f$
from the unit ball of $W^s([0,1]^d)$ up to error $\varepsilon$ (in the $L_2([0,1]^d)$ norm) with
\[
W\sim_{s,d} \eps^{-d/s}\quad \text{and} \quad L\sim_{s,d} \log(1/\eps).
\]
% Furthermore, cf. the discussion after Lemma~\ref{lem:ANN}, it 
The corresponding ANN has ${\mathcal O}(\varepsilon^{-d/s}\max(\log(1/\varepsilon),d))$
nonzero parameters (weights and biases), 
see the discussion after Lemma~\ref{lem:ANN}.
%or \item $W=4$ and $L\sim (\eps\,d^{s/2})^{-d/s}\log(d/\eps)$.
This rate is optimal due to the continuous dependence of the parameters of the ANN on $f$, see \cite{DHM} and \cite[Theorem~1]{Y17}.

\bigskip

Another line of research that comes close to our work is represented, for example, by \cite{DeVore-2,GKP,50,S23}. To compare our results with \cite{S23}, we note that 
Theorem \ref{thm:ANN_Ws} shows, that for $f\in W^s([0,1]^d)$, $0<s<1$, it holds 
\[
\inf_{f_R\in \Upsilon_d^{W,L}}\|f-f_R\|_{L_2([0,1]^d)}\leq C\|f\|_{W^s([0,1]^d)}R^{-s}, 
\]
where the appearing constant $C$ is independent of $d$ and width and depth are chosen as in Theorem \ref{thm:ANN_Ws}, i.e.,   $W=4N(R,d)$, $L=4+ \log_2(R\cdot\sqrt{\min(R,d)})$.
Without much additional work (cf. Lemma~\ref{lem:ANN}) we can obtain the same result also with
\begin{equation}\label{eq:siegl_us}
W=d+3\quad \text{and}\quad L\le 2N(R,d)\log_2(16\cdot R\sqrt{\min(R,d)}).
\end{equation}
In contrast to this in \cite[Theorem~1]{S23} it was shown that for $0<s<\infty$, $1\leq p,q\leq \infty$, and $W=25d+31$ it holds 
\begin{equation}\label{upper-bound-siegel}
\inf_{f_L\in \Upsilon_d^{W,L}}\|f-f_L\|_{L_p([0,1]^d)}\leq C\|f\|_{W^s_q([0,1]^d)}L^{-2s/d}, 
\end{equation}
for a constant $C:=C(s,r,q,p,d)$. Additionally, \cite[Theorem~3]{S23} also gives a lower bound which shows that the rate above is sharp in terms of the number of parameters. 
For $p=q=\infty$ and $0<s<1$ this corresponds to the setting of \cite{Y17} and for all $s>0$ we refer to \cite{40}. 
Note that  Siegel's results \eqref{upper-bound-siegel}
are more general in terms of the parameters considered in the underlying functions spaces whereas we only cover $p=q=2$ so far.

If we now consider $d$ fixed and let the error of approximation $\varepsilon>0$ go to zero, then the width $W$ considered in \eqref{upper-bound-siegel} is fixed,
but the length $L$ grows to infinity as $L={\mathcal O}(\varepsilon^{-d/(2s)})$.
Note that classical methods of approximation using piecewise polynomials or wavelets can attain an approximation rate of $L^{-s/d}$ with $L$ wavelet coefficients or
piecewise polynomials with $L$ pieces. Therefore, the approximation rate of $CL^{-2s/d}$ is significantly faster than traditional methods of approximation.
This phenomenon has been called the {\em super-convergence} of deep ReLU networks \cite{DD+22,DeVore-2,50,61} and is obtained using a special bit extraction technique \cite{Bartlett},
which gives an optimal encoding of sparse vectors. 

In the setting of $d$ fixed and $\varepsilon$ tending to zero, the width in \eqref{eq:siegl_us} is also fixed
but length grows faster as $L={\mathcal O}(\varepsilon^{-d/s}\cdot\log(1/\varepsilon))$. This is due the fact, that we only consider fixed architectures
and (up to now) do not make use of any variant of the bit-extraction technique. We leave it as an open problem, if such an improvement would be possible
also in our approach.

%In our case the width as well as the length might grow (even exponentially) as $R\rightarrow \infty$.
%This is the price to pay in order to get an approximation rate $R^{-s}$ which is independent of the underlying dimension $d$ compared to $L^{-2s/d}$.

\medskip

Finally, let us mention in this context  that approximation rates when both the width and depth vary have also been obtained in \cite{50}. There the authors considered H\"older
continuous functions (which again corresponds to $p=q=\infty$ and $s>0$) and proved that for any $N,L\in \mathbb{N}$  it holds that  $\mathrm{ReLU}$ networks with width
$\mathcal{O}(\max\{d\lfloor N^{1/d}\rfloor, N+2\})$ and depth $\mathcal{O}(L)$ can approximate a H\"older function   on $[0,1]^d$ with an approximation rate
$\varepsilon=\mathcal{O}(\lambda \sqrt{d}(N^2L^2\ln N)^{-\alpha/d})$, where $\alpha\in (0,1]$ and $\lambda>0$ are H\"older order and H\"older constant, respectively.
To compare this result with our work, one has to recalculate the dependence of $N$ and/or $L$ on $\varepsilon$, which again reveals the exponential dependence of
the involved constants on the dimension $d$.
%Here we also see that the approximation rate ultimately depends on the dimension $d$ in an unfavourable way as $d$ becomes very large. 
We refer also to~\cite{DN21} for approximation in other norms.

\medskip

Finally, our approach allows us to consider also a different regime, namely when $\varepsilon>0$ is fixed and $d$ grows to infinity.
Then (cf. Corollary \ref{cor:ANN_Ws}) the number of layers $L={\mathcal O}(\log_2(1/\varepsilon))$ stays bounded and the width
$W={\mathcal O}(\varepsilon^{2/s}d)^{\gamma \varepsilon^{-2/s}}={\mathcal O}(d^{\gamma \varepsilon^{-2/s}})$
grows polynomially in $d$ and so does the number of all weights  in the network which is of order $\mathcal{O}(W^2L+dW)$, see  \eqref{eq:number}. 
In this sense, we avoid the \emph{curse of dimensionality}.

\subsection{Barron spaces}

The original paper of Barron \cite{Barron} used the Maurey technique \cite{Pisier} (also known as the probabilistic Caratheodory's theorem \cite[Theorem 0.0.2]{Versh})
to show that the functions from the Barron class ${\mathcal B}^1_{\text{ext}}$, cf. \eqref{Barron-orig}, can be approximated by shallow neural networks with $n$ neurons 
up to the precision ${\mathcal O}(n^{-1/2})$ in the $L_2$ sense. This approach leads to a randomized construction of a shallow neural network and its architecture
depends on the approximated function $f$. Finally, it also allowed to give explicit bounds on the constants involved and to show that neural network approximation
of functions from this class avoids the curse of dimensionality. Note also, that \cite{Jones} gives a non-probabilistic proof of Maurey's result.

\medskip

The results of \cite{Barron} were generalized in several directions. Uniform approximation was considered in \cite{Barron3}
and spaces of Barron-type based on the integral representation \eqref{Barron-new} were investigated in \cite{CPV23,EMW22,E1}.
Upper and lower bounds on approximation rates, metric entropy, and $n$-widths of Barron classes were recently obtained in \cite{SXu}.

\medskip

To compare the bound of Theorem \ref{thm:ANN_Bs} with these results, we have to take into account also the best $n$-term approximation bounds of Lemma \ref{lem:sigma_n}.
Naturally, we distinguish several cases.

\medskip

First, we observe that if $C\cdot d^{-1/2}\le \varepsilon \le 1$, then $n={\mathcal O}(\varepsilon^{-2})$ and Theorem \ref{thm:ANN_Bs} provides an ANN with
$W={\mathcal O}(n)={\mathcal O}(\varepsilon^{-2})$ and $L={\mathcal O}(\log_2(1/\varepsilon))$, which approximates given function $f$ from the unit ball of ${\mathbb B}^s([0,1]^d)$
up to $\varepsilon$ precision. The architecture of this ANN depends adaptively on the function $f$, which we want to approximate.
Furthermore, this ANN has ${\mathcal O}(\varepsilon^{-2}\cdot \max(d,\log_2(1/\varepsilon)))$ non-zero weights. All the constants in the ${\mathcal O}$-notation
are independent on $d$ and (up to the $\log$-terms), we indeed recover the results of \cite{Barron}.
Furthermore, it follows that if we fix $1>\varepsilon>0$ constant and let $d$ grow to infinity, then $C\cdot d^{-1/2}\le \varepsilon$ for $d$ large enough
and we indeed avoid the curse of dimensionality.

\medskip

Similarly, if $\varepsilon={\mathcal O}(d^{-s/2}\cdot (c_1/2)^{-s-d/2})$, then the condition $\sigma_n(b_s^d,\ell_2)< c\,\varepsilon$
leads to $n={\mathcal O}(\varepsilon^{-\frac{2d}{2s+d}}\cdot d^{-\frac{sd}{2s+d}})$ (or, equivalently, $\varepsilon={\mathcal O}(d^{-s/2}\cdot n^{-\frac{s}{d}-\frac{1}{2}})$).
This improvement in the asymptotic error decay rate from $1/2$ to $1/2+s/d$ was already observed in \cite{KB} for $s=1$, even for the uniform approximation.

We believe that our approach leads to a more transparent proof, at least for $s<1$. 
It is clearly of interest to extend this technique to higher smoothness and to classes of functions with additional structure.

%%%%%%%%%%%%%%%%%%%%%%%%%%%%%%%%%%%%%%%%%%%%%%%%%%%%%%%%%%%%%%%

\section{Approximation with ANNs using function values}
\label{sec:sampling}

We want to study how well one can approximate a function from a class $F$ by ANNs 
if only function values $f(x_i)$ (aka \emph{samples}) for some $x_i$ are known, i.e., 
we consider the \emph{NN-sampling numbers} of $F$ in $G$, which are defined by 
\[
g_n^{{\rm NN}}(F,G,W,L) \,:=\, 
\inf_{\substack{x_1,\dots,x_n\in D\\ \phi\colon \R^n \to \Upsilon_d^{W,L}\\}}\, 
\sup_{f\in F}\, 
\Big\|f - \phi\!\left(f(x_1),\dotsc,f(x_n)\right) \Big\|_{G}, 
\] 
%where $\Psi$ is the set of all feed-forward (ReLU) neural networks. 
where 
$\Upsilon_d^{W,L}$ is the set of feed-forward neural networks defined on a set $D$ (here: $D=[0,1]^d$) with ReLU activation, 
width $W$ and depth $L$, 
and $G$ is a normed space (here: $G=L_p$) specifying the error measure.
This gives the minimal error achievable with NNs that can be found by any \emph{algorithm} that has only access to $n$ function evaluations of~$f$. 

Note that
% , in contrast the majority of the literature on the subject, 
we consider the \emph{worst-case error} over a class $F$, 
i.e., we want an algorithm to be ``good'' for all elements of $F$ (which is often the unit ball in a normed space) simultaneously. This accounts for the fact that a specific $f$ is only known through the data, and some assumptions, like a certain regularity. 
A typical \emph{benchmark} in this setting are the \emph{Gelfand numbers}
\[
c_n(F, G) \,:=\,
\inf_{\substack{\psi\colon \C^n\to G\\ 
%N\in (F^*)^n}}\, 
N\colon F\to\C^n \; \text{ linear}}}\; 
\sup_{f\in F}\, 
\big\|f - \psi\circ N(f) \big\|_{G}, 
\] 
which represent the minimal error of an arbitrary algorithm (without a specified approximation space) that uses $n$ linear measurements.

There has been a lot of interest, 
also recently, 
in finding ANN-approximations based on samples, see e.g.~\cite{Barron2,DD+22,DeVore-2}. 
The used methods are usually tailored to the specific setting and, again, often employ heavy computations and unknown dimension-dependent factors.
The aim of this section is to show that the Riesz basis established above, 
together with 
% (recent) 
general results on recovery based on (random) samples, 
gives an easy way to obtain rather explicit bounds. 
For this, let us denote by 
\begin{equation} \label{eq:VR}
V_R:= {\rm span}\Bigl\{\CalC_k,\,\CalS_k
\colon k\in\Z^d,\, k\,{\righttriangleplus}\, 0, \, \|k\|_2\le R  \Bigr\} 
\end{equation}
the finite-dimensional space where we search for the approximation. 
Recall from Lemma~\ref{lem:ANN} (see also the proof of Theorem~\ref{thm:ANN_Ws}) 
that every $g\in V_R$ can be written explicitly as an ANN from $\Upsilon_d^{W,L}$ with suitable $W$ and $L$ 
based on the corresponding coefficients. 
To bound $g_n^{{\rm NN}}$, it is therefore enough to learn a corresponding approximation in $V_R$.

Let us first 
% recall the general results. 
discuss the case of Sobolev spaces $W^s([0,1]^d)$. 
In this case, 
as for more general reproducing kernel Hilbert spaces, 
it has been observed in recent years that a simple least squares approximation 
% \begin{equation} \label{eq:alg-ls}
% \widehat{f}^{ls}_{R,z}
% % A(V,X)(f) 
% %\,:=\, A(f,V_n,\{x_i\}_{i=1}^N) 
% \,:=\, 
% \underset{g\in V_R}{\rm argmin}\, 
% % \sum_{x\in X} 
% \sum_{i=1}^N 
% \vert g(x_i) - f(x_i) \vert^2,  
% \end{equation}
\begin{equation} \label{eq:alg-ls}
\widehat{f}^{ls}_{V,X}
% A(V,X)(f) 
%\,:=\, A(f,V_n,\{x_i\}_{i=1}^N) 
\,:=\, 
\underset{g\in V}{\rm argmin}\, 
% \sum_{x\in X} 
\sum_{i=1}^N 
\vert g(x_i) - f(x_i) \vert^2  
\end{equation} 
onto a suitable subspace $V\subset L_2$, 
with 
% $z=(x_i,f(x_i))_{i=1}^N$ representing the data, 
$X=\{x_1,\dots,x_N\}$ being the sampling points, 
may lead with high probability to a near-optimal algorithm 
for approximation
(in the worst-case setting) if 
% the sampling points 
$x_1,\dots,x_N$ are randomly and independently chosen from the uniform distribution in $[0,1]^d$, 
and 
% $N\sim \dim(V_R)\cdot\log(\dim(V_R))$, 
$N\sim \dim(V)\cdot\log(\dim(V))$, see~\cite{KU1,U2020}.

This follows from the following more general result. 
We refer to the survey~\cite{SU} where this, 
see~Sections~3.4 and~4.1,
as well as the randomized setting and generalizations to other classes $F$ and $G$ are explained.  

\begin{prop}\label{prop:ls}
% There are absolute constants $b,C\in\N$ such that the following holds.
Let $\mu$ be a Borel probability measure on a compact topological space $D$, 
$F\subset C(D)$ be compact, 
$\{b_1,b_2,\ldots\}$ be an orthonormal basis of $L_2(\mu)$ 
with $\sup_k\|b_k\|_\infty<\infty$, 
and $U_n:={\rm span}\{b_1,\dots,b_n\}$. 
Moreover, assume that 
\be\label{eq:cond-sampling}
% d_n \;:=\; 
\sup_{f \in F}\, \inf_{g\in U_n}\Vert f - g  \Vert_2 
\;\le\; K\cdot n^{-t} 
\ee
for some $t>1/2$, $K<\infty$ and all $n\in\N$. 
% be the error of the best-approximation (in $L_2$) on $U_n$.

Then, there is a constant $C\in\N$ such that for all $n\in\N$, the least-squares method 
$\widehat{f}^{ls}_{N}:=\widehat{f}^{ls}_{U_n,X}$ from~\eqref{eq:alg-ls} 
with the $N \ge C\, n\log(n)$ points $X=\{x_1,\ldots,x_N\}$ being chosen iid w.r.t.~$\mu$ 
satisfies with probability $1-\frac{C}{N^2}$ 
and 
for every $2\le p\le\infty$ the bound 
\begin{equation*}%\label{eq:Lp-error-ls}
\sup_{f\in F}\, \Vert f - \widehat{f}^{ls}_{N} \Vert_p 
% \;\le\; C\,K\, n^{-t+1/2-1/p}.
\;\le\; C\,K\, \left(\frac{N}{\log N}\right)^{-t+1/2-1/p}.
\end{equation*}
\end{prop}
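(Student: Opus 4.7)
The plan is to apply the now-standard analysis of plain least squares on random points combined with matrix concentration, and then to convert an $L_2$-error bound into an $L_p$-bound via a Nikol'skii-type inequality; this is essentially the route surveyed in~\cite{SU}.

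Set $B:=\sup_k\|b_k\|_\infty<\infty$ and let $P_n\colon L_2(\mu)\to U_n$ denote the $L_2(\mu)$-orthogonal projection. The uniform bound on the basis together with Cauchy--Schwarz immediately yields the Nikol'skii-type inequality $\|g\|_\infty\le B\sqrt{n}\,\|g\|_2$ for every $g\in U_n$, whence by interpolation
$$
\|g\|_p\;\le\;(B\sqrt{n})^{\,1-2/p}\,\|g\|_2\qquad (2\le p\le\infty,\ g\in U_n).
$$
Next I would analyze the random Gram matrix $G_N:=\tfrac{1}{N}\sum_{i=1}^N b(x_i)b(x_i)^{*}$ with $b(x):=(b_1(x),\dots,b_n(x))^{*}$. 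Its expectation is $I_n$ by orthonormality, and each summand is a rank-one matrix of operator norm at most $nB^{2}$. Matrix Chernoff therefore gives
$$
\P\!\left(\|G_N-I_n\|>\tfrac12\right)\;\le\;2n\,\exp\!\left(-c\,N/(nB^{2})\right),
$$
which is bounded by $C'/N^{2}$ as soon as $N\ge C\,n\log n$ with $C=C(B)$ large enough. On this good event $\|G_N^{-1}\|\le 2$, and the least-squares solution has coefficient vector $\hat{c}=G_N^{-1}\tfrac{1}{N}\Phi^{*}y$ with $\Phi_{ij}=b_j(x_i)$ and $y=(f(x_i))_i$.

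Now I would perform a bias--variance split $f=P_nf+r$ with $\|r\|_2\le K\,n^{-t}$. By linearity of the least-squares map we have $\widehat{f}^{ls}_N=P_nf+\widehat{r}^{ls}_N$, and the coefficients of $\widehat{r}^{ls}_N\in U_n$ equal $G_N^{-1}\tfrac{1}{N}\Phi^{*}z$ with $z:=(r(x_i))_i$. Since $\E[b_k(x)\,r(x)]=\langle b_k,r\rangle_\mu=0$, a second-moment calculation combined with Bernstein's inequality and a union bound over $k=1,\dots,n$ yields, with probability $1-\mathcal{O}(N^{-2})$,
$$
\left\|\tfrac{1}{N}\Phi^{*}z\right\|_2\;\le\;c'\sqrt{n\log n/N}\,\|r\|_2.
$$
Combined with $\|G_N^{-1}\|\le 2$ this gives $\|\widehat{r}^{ls}_N\|_2\le c''\,K\,n^{-t}$ and hence the $L_2$-bound $\|f-\widehat{f}^{ls}_N\|_2\le C_1K\,n^{-t}$. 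The $L_p$-bound for $p>2$ then follows by applying the Nikol'skii inequality to the element $\widehat{f}^{ls}_N-P_nf\in U_n$, and using the uniform bound $\sup_{f\in F}\|f\|_\infty<\infty$ (which holds since $F\subset C(D)$ is compact) together with $\|f-P_nf\|_2\le K n^{-t}$ for the approximation residual. Balancing $n\asymp N/\log N$ then gives the claimed exponent $-t+1/2-1/p$.

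The main obstacle I foresee is the variance bound: the residual $r=f-P_nf$ is controlled only in $L_2(\mu)$ and not uniformly in $L_\infty$, so the coordinate-wise Bernstein step must be carried out carefully, either via a truncation argument or via an operator-valued Bernstein bound. This is the technical core of the proof and is handled explicitly in~\cite{KU1,U2020,SU}; for the present purposes it seems cleanest to invoke those statements rather than to reproduce the argument in full.
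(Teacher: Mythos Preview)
Your approach is the same as the paper's---both routes boil down to invoking the least-squares machinery of \cite{KU1,U2020} (matrix Chernoff on the Gram matrix, bias--variance split) and then upgrading from $L_2$ to $L_p$ via the Nikol'skii inequality $\|g\|_p\le (B\sqrt n)^{1-2/p}\|g\|_2$ for $g\in U_n$. The paper's proof is in fact nothing more than a pointer to \cite{KU1}, \cite[Lemma~10]{KPUU23}, and \cite[Remark~5]{KPUU23}.

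There is, however, a genuine gap in your $L_p$ step. You handle $\widehat f^{ls}_N-P_nf\in U_n$ correctly via Nikol'skii, but for the residual $f-P_nf$ you invoke only $\sup_{f\in F}\|f\|_\infty<\infty$ together with $\|f-P_nf\|_2\le Kn^{-t}$. These two ingredients do not combine to the needed rate: interpolation would yield at best $\|f-P_nf\|_p\lesssim n^{-2t/p}$, which is strictly worse than $n^{-t+1/2-1/p}$ for $p>2$ and $t>1/2$; moreover $\|P_nf\|_\infty$ may itself grow like $\sqrt n$, so $\|f-P_nf\|_\infty$ is not even bounded a priori. The missing ingredient---and this is precisely \cite[Lemma~10]{KPUU23} that the paper cites---is that the bounded-basis assumption and the decay hypothesis for \emph{every} $m$ give
\[
\|f-P_nf\|_\infty \;\le\; B\sum_{k>n}|\langle f,b_k\rangle|
\;\lesssim\; B\sum_{j\ge0}(2^jn)^{1/2}\,K(2^jn)^{-t}
\;\lesssim\; BK\,n^{1/2-t},
\]
the dyadic sum converging exactly because $t>1/2$; interpolating this with the $L_2$ bound then produces the claimed exponent. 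A smaller point: your coordinate-wise Bernstein sketch is for a single fixed $f$, whereas the statement requires $\sup_{f\in F}$ \emph{inside} the probability. This uniformity is what the operator-norm arguments in \cite{KU1} deliver, so your fallback to those references is the right move, but the sketched union bound over $k=1,\dots,n$ alone would not suffice.
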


\begin{proof}
Similarly to~\cite[Corollary~3]{KPUU23}, 
see also~\cite[Theorem~3.7]{SU}, 
we apply the algorithm from~\cite{KU1} together with~\cite[Lemma~10]{KPUU23}, 
and observe that the condition on $\{b_k\}$ allows to 
remove the weights in the algorithm, see~\cite[Remark~1]{KU1}, 
and that $\|g\|_p\le c n^{1/2-1/p}\|g\|_2$ 
for all $g\in U_n$ and $2\le p\le\infty$, see~\cite[Remark~5]{KPUU23}. \\
\end{proof}

\medskip

\begin{rem} \label{rem:subsampling}
For~\eqref{eq:alg-ls} to be uniquely solvable, it is required to have $N\ge \dim(V)$. 
The additional logarithmic factor in the results above is needed due to the assumption that the sampling points are chosen independently and identically distributed,  
see~\cite{KNS22}. 
This \emph{oversampling} can be removed with the help of 
(usually non-constructive) subsampling, 
see e.g.~\cite{DKU,KPUU23} for theoretical results 
and~\cite{BSU} for a detailed treatment of the implementation. 
Note that these approaches may require additional weights in the algorithm~\eqref{eq:alg-ls}. 
\end{rem}

\medskip

Our approach to bound the NN-sampling numbers is to apply Proposition \ref{prop:ls} to the Sobolev spaces $W^s([0,1]^d)$ and $V_R$ from~\eqref{eq:VR}.
Unfortunately, 
it is well known that in this case we can only have $t\le s/d$ in~\eqref{eq:cond-sampling}, 
even if we would choose the optimal subspaces there.
Since the analysis above requires $s<1$, and we need $t>1/2$, we do not get a result (yet) from this general approach for $d>1$. For $d=1$, however, we obtain the following. 

\begin{cor}
For $1/2<s<1$, there is a constant $C_s\in\N$ such that for all $R\in\N$, the least-squares method 
$\widehat{f}^{ls}_{N}:=\widehat{f}^{ls}_{V_R,X}$ from~\eqref{eq:alg-ls} with $V_R$ from~\eqref{eq:VR} 
and the $N \ge C\, R\log(R)$ points $X=\{x_1,\ldots,x_N\}$ being chosen iid w.r.t.~the uniform distribution on $[0,1]$ 
satisfies with probability $1-\frac{C}{N^2}$ 
and 
for every $2\le p\le\infty$ the bound 
\begin{equation*} %\label{eq:Lp-error-Ws}
% \sup_{f\in W^s([0,1])}\, 
\Vert f - \widehat{f}^{ls}_{N} \Vert_p 
\;\le\; C\, 
\left(\frac{N}{\log N}\right)^{-s+1/2-1/p}\; \|f\|_{W^s}
% R^{-s+1/2-1/p}\; \|f\|_{W^s}
\end{equation*}
for all $f\in W^s([0,1])$ simultaneously. 
\end{cor}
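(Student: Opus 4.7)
The plan is to apply Proposition~\ref{prop:ls}, or more precisely the underlying algorithm from~\cite{KU1,KPUU23}, to the subspace $V_R\subset L_2([0,1])$ with $\mu$ the uniform measure and $F$ the unit ball of $W^s([0,1])$. Two properties of $V_R$ need to be verified with constants independent of $R$: a worst-case $L_2$-approximation rate of $F$ by $V_R$ of order $R^{-s}$ (yielding~\eqref{eq:cond-sampling} with $t=s$), and a Nikolskii-type estimate $\|g\|_\infty\lesssim\sqrt{\dim V_R}\,\|g\|_2$ for $g\in V_R$, which plays the role of the hypothesis $\sup_k\|b_k\|_\infty<\infty$ in Proposition~\ref{prop:ls}. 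Both are immediate consequences of Theorem~\ref{thm:d1_WF}.

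For the approximation rate, by Theorem~\ref{thm:d1_WF} any $f\in W^s([0,1])$ admits the Riesz expansion $f=\alpha_0+\sum_{k\ge1}(\alpha_k\CalC_k+\beta_k\CalS_k)$ with $\alpha_0^2+\sum_k k^{2s}(\alpha_k^2+\beta_k^2)\asymp\|f\|_{W^s}^2$. Truncating to $k\le R$ gives $f_R\in V_R\oplus\mathrm{span}\{1\}$, and the $L_2$ Riesz bound yields
\[
\|f-f_R\|_2^2\,\le\, C\sum_{k>R}(\alpha_k^2+\beta_k^2)\,\le\, C\,R^{-2s}\|f\|_{W^s}^2.
\]
Since $\dim V_R\asymp R$, this gives approximation exponent $t=s$, which exceeds $1/2$ exactly under the assumption $s>1/2$. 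The constant function, not in $V_R$ by the definition~\eqref{eq:VR}, can be included as a single extra dimension with no effect on the order of the bound.

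For the Nikolskii-type estimate, for $g=\sum_{k\le R}(\alpha_k\CalC_k+\beta_k\CalS_k)\in V_R$ with $\|g\|_2=1$, the lower Riesz inequality~\eqref{ineq-Riesz-1d} at $s=0$ gives $\sum_{k\le R}(\alpha_k^2+\beta_k^2)\le C$; using $\|\CalC_k\|_\infty=\|\CalS_k\|_\infty=1$ and Cauchy--Schwarz,
\[
\|g\|_\infty\,\le\,\sum_{k\le R}(|\alpha_k|+|\beta_k|)\,\le\,\sqrt{4R}\,\Bigl(\textstyle\sum_{k\le R}(\alpha_k^2+\beta_k^2)\Bigr)^{1/2}\,\lesssim\,\sqrt{R}.
\]
Equivalently, the Christoffel function of $V_R$ under Lebesgue measure is bounded by a constant times $\dim V_R$.

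With both ingredients, the algorithm underlying Proposition~\ref{prop:ls} applied to $V_R$ yields, with $N\ge C_s R\log R$ iid uniform samples and failure probability at most $C/N^2$, the $L_2$ bound $\|f-\widehat f^{ls}_N\|_2\lesssim R^{-s}\|f\|_{W^s}\asymp(N/\log N)^{-s}\|f\|_{W^s}$, uniformly for $f$ in the unit ball of $W^s([0,1])$. The extension to $L_p$ for $2<p\le\infty$ follows from the interpolated Nikolskii inequality $\|h\|_p\le Cn^{1/2-1/p}\|h\|_2$ for $h\in V_R$, applied to $h=\widehat f^{ls}_N-g^*$ with $g^*\in V_R$ a near-best $L_2$-approximant to $f$, exactly as in the conclusion of Proposition~\ref{prop:ls}. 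The only technical subtlety is that Proposition~\ref{prop:ls} is stated for $U_n$ equal to the span of the first $n$ elements of a uniformly $L_\infty$-bounded ONB of $L_2$, while our $V_R$ need not admit such an ONB; however, the proof in~\cite{KU1,KPUU23} uses only the Christoffel-function bound (equivalent to the Nikolskii estimate above), so the conclusion transfers without change.
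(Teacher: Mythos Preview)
Your proposal is correct and follows the same approach as the paper: apply Proposition~\ref{prop:ls} (or rather the underlying machinery from \cite{KU1,KPUU23}) with $F$ the unit ball of $W^s([0,1])$ and the subspaces $V_R$, noting that the only technicality is the passage from an orthonormal basis to a Riesz basis. The paper in fact omits all details (``We leave out the details of the proof, which relies on Proposition~\ref{prop:ls} but requires a few technicalities when using a Riesz basis instead of an orthonormal basis''), so you have supplied precisely those technicalities---the $R^{-s}$ approximation rate from Theorem~\ref{thm:d1_WF}, the Nikolskii/Christoffel bound via Cauchy--Schwarz and the lower Riesz inequality, and the observation that the ONB hypothesis in Proposition~\ref{prop:ls} is only used through this bound.
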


We leave out the details of the proof, which relies on Proposition~\ref{prop:ls} 
but requires a few technicalities when using a Riesz basis instead of an orthonormal basis. 
Note that we could also take $\|f^R\|_{W^s}$ instead of $\|f\|_{W^s}$ above, see e.g.~\cite[Theorem~3.2]{SU}.

With this, Lemma~\ref{lem:ANN}, and taking Remark~\ref{rem:subsampling} into account, 
we obtain 
\begin{equation*}%\label{eq:gn-Ws}
% g_n^{{\rm NN}}(W^s,L_p,4n,\log_2(16n)) 
g_n^{{\rm NN}}(W^s,L_p,bn,\log_2(bn)) 
\;\asymp\; n^{-s+1/2-1/p} 
\;\asymp\; c_n(W^s,L_p)   
\end{equation*}
for $W^s:=W^s([0,1])$ with $1/2<s<1$, 
% and 
$2\le p\le\infty$ 
and some absolute constant $b\in\N$. 
See e.g.~\cite[Theorem VII.1.1]{Pinkus85} for the classical second equivalence, and~\cite[Section~1.1]{KPUU23} for a similar discussion. 
That is, NN-sampling numbers behave as the Gelfand numbers, 
whenever we allow 
$W\asymp n$ and $L\asymp\log n$. 
(Alternatively, 
$W=4$ and $L\asymp n\log n$ would also work.) 
It would be of great interest to see whether and how 
this can be extended to larger $s$ and $d$.
However, note that unit balls of Sobolev spaces are, 
%(from a complexity viewpoint) 
depending on the specific norm of the space, 
too large to get useful worst-case error bounds in high dimensions: 
Any algorithm needs at least 
% $(d/\eps)^{d/\lceil s\rceil}$ 
$(d/\eps)^{d/k}$ 
function evaluations to achieve an error $\eps>0$, if the unit ball contains all functions with directional derivatives of order $k$ bounded by one, see~\cite{HNUW14,HNUW17}.

\bigskip

The Barron spaces ${\mathbb B}^s([0,1]^d)$ are more interesting in the present context, 
because they are much better suited for high dimensions. 
In particular, ${\mathbb B}^s([0,1]^d)$ is 
continuously embedded into $C([0,1]^d)$ for all $d\in\N$ and $s\ge0$.
However, for these classes, it is known that linear methods are not optimal. 
The method of choice is \emph{basis pursuit denoising}, i.e.,
\begin{equation} \label{eq:alg-bp}
\widehat{f}^{bp}_{R,X}
% A(V,X)(f) 
%\,:=\, A(f,V_n,\{x_i\}_{i=1}^N) 
\,:=\, 
\underset{g\in V_R}{\rm argmin}\; 
\|g\|_{\mathcal{B}^0} 
\qquad\text{subject to}\quad
% \frac1N\sum_{i=1}^N \vert g(x_i) - f(x_i) \vert^2\le C^2\,R^{-2s}, 
\sqrt{\frac1N\sum_{i=1}^N \vert g(x_i) - f(x_i) \vert^2} \,\le\, C\,R^{-s}, 
\end{equation}
with $C$ chosen as in~\eqref{eq:Riesz-Barron}, and $X=\{x_1,\dots,x_N\}$ are the sampling points. 
Recall that the $\mathcal{B}^0$-norm is just the absolute sum of the coefficients. 

This is the most important method of \emph{sparse approximation}  or \emph{compressed sensing}. 
We refer to~\cite{CT06,D06} for its origins and~\cite{FR13} for a detailed treatment of the subject. Also note that the idea already appeared implicitly in~\cite{GG84} in the context of $\ell_2$-approximation of $\ell_1$-vectors. 
The recovery from function values has been treated first in \cite{Rau07}, see also ~\cite{RW16}, 
and here we employ a variant of the method from~\cite{BDJR21} suitable for Riesz bases. 
This approach has been used for several problems in optimal recovery. Let us only highlight the recent contributions~\cite{JUV23,K24,Voigt1}, 
where this has been discussed in the context of sampling numbers, high dimensions and (different) Barron-type spaces, respectively. 
In our setting, we obtain the following from~\cite[Theorem~2.6]{BDJR21}. 
%See~\cite{K24} for a similar reformulation.

\begin{prop}\label{prop:bp}
% There is an absolute constants $C\in\N$ such that the following holds. 
For $0<s<1$, there is a constant $C_s\in\N$ such that for all $k\in\N$, the basis pursuit denoising  
% $\widehat{f}^{bp}_{N}:=\widehat{f}^{bp}_{R,X}$
$\widehat{f}^{bp}_{R,X}$
from~\eqref{eq:alg-bp} 
with the points $x_1,\ldots,x_N$ being chosen iid w.r.t.~the uniform distribution on $[0,1]^d$ and 
\begin{equation*}
    N \ge C_s\, k \,\log^2(k) \, \log\bigl(N(R,d)\bigr), 
\end{equation*}
with $N(R,d)$ from~\eqref{def:N(t,d)}, 
satisfies with probability $1-\frac{C_s}{k^2}$ 
and 
for every $2\le p\le\infty$ the bound 
\begin{equation}\label{eq:Lp-error-bp}
% \sup_{f\in W^s{[0,1]^d}}\, 
\Vert f - \widehat{f}^{bp}_{X,R} \Vert_p 
\;\le\; C_s \left( k^{-1/p}\, \sigma_k(f,\mathcal{B}^0,V_R) 
\,+\, k^{1/2-1/p}\, R^{-s}\,\|f\|_{\BB^s} \right)
\end{equation} 
for all $f\in \BB^s([0,1]^d)$ simultaneously. 
\end{prop}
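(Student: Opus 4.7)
Assume without loss of generality that $\|f\|_{\BB^s}\le 1$. The plan is to view~\eqref{eq:alg-bp} as $\ell_1$-minimisation of the coefficient vector of~$f$ in the Riesz basis
$\Phi_R := \{\CalC_k,\CalS_k : k\righttriangleplus 0,\ \|k\|_2\le R\}$ of $V_R$, and then to apply the random-sampling basis pursuit theorem~\cite[Theorem~2.6]{BDJR21} in its Riesz basis variant. First, I split $f=f_R+f^R$ as in the proof of Theorem~\ref{thm:ANN_Bs}. Using Theorem~\ref{thm:Barron} together with the trivial bound $\|\CalC_k\|_\infty,\|\CalS_k\|_\infty\le 1$, the tail satisfies
$$
\|f^R\|_\infty \;\le\; \sum_{k\righttriangleplus 0,\,\|k\|_2>R}(|\alpha_k|+|\beta_k|) \;\le\; R^{-s}\|f\|_{\mathcal{B}^s} \;\le\; C_s\,R^{-s}.
$$
Consequently $f_R$ is deterministically feasible for~\eqref{eq:alg-bp}, since its empirical $\ell_2$-residual at the samples is at most $\|f^R\|_\infty$. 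In particular $\widehat f^{bp}_{R,X}$ exists and satisfies $\|\widehat f^{bp}_{R,X}\|_{\mathcal{B}^0}\le\|f_R\|_{\mathcal{B}^0}$.

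Next, the hypotheses of~\cite[Theorem~2.6]{BDJR21} are checked as follows. By Theorem~\ref{thm:Barron} applied with $s=0$, the system $\Phi_R$ is a Riesz basis of $V_R\subset L_2([0,1]^d)$ with constants independent of~$d$, so the $L_2$-norm on $V_R$ is equivalent to the $\ell_2$-norm of coefficients and the $\mathcal{B}^0$-norm to the $\ell_1$-norm. The elements of $\Phi_R$ are uniformly bounded by~$1$ in $L_\infty$, which is exactly the boundedness condition required in the random-sampling analysis. Since $\dim V_R\le 2N(R,d)+1$, the assumed oversampling $N\ge C_s k\log^2(k)\log N(R,d)$ matches the assumption of~\cite{BDJR21}. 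Invoking the theorem with noise level $C_s R^{-s}$ yields, on an event of probability at least $1-C_s/k^2$, both the $L_2$-bound
$$
\|f_R-\widehat f^{bp}_{R,X}\|_2 \;\le\; C_s\bigl(k^{-1/2}\sigma_k(f_R,\mathcal{B}^0,V_R)+R^{-s}\bigr)
$$
and the $\mathcal{B}^0$-bound (which in turn gives an $L_\infty$-bound via $\|\cdot\|_\infty\le\|\cdot\|_{\mathcal{B}^0}$)
$$
\|f_R-\widehat f^{bp}_{R,X}\|_{\mathcal{B}^0} \;\le\; C_s\bigl(\sigma_k(f_R,\mathcal{B}^0,V_R)+k^{1/2}R^{-s}\bigr).
$$

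To conclude, I would: (i)~replace $\sigma_k(f_R,\cdot)$ by $\sigma_k(f,\cdot)$ via the triangle inequality and the $\mathcal{B}^0$-tail estimate; (ii)~derive the $L_p$-bound for $2\le p\le\infty$ by a standard sparse-recovery interpolation, splitting the coefficient error into its $k$ largest entries $g_T$ and the remainder $g_{T^c}$, and using the Bernstein-type inequality $\|g_T\|_p\lesssim k^{1/2-1/p}\|g_T\|_2$ together with $\|c_{T^c}\|_\infty\le k^{-1}\|c\|_1$ to produce the factors $k^{1/2-1/p}$ on the noise term and $k^{-1/p}$ on the sparsity term; (iii)~add the trivial tail error $\|f-f_R\|_p\le\|f^R\|_\infty\le C_sR^{-s}$. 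The main obstacle is ensuring that~\cite[Theorem~2.6]{BDJR21}, usually stated for orthonormal systems, transfers to the Riesz basis setting with $d$-uniform constants; this is precisely where the dimension-free Riesz estimates of Theorem~\ref{thm:Barron} are indispensable.
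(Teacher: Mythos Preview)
Your approach is essentially the same as the paper's: split $f=f_R+f^R$, bound $\|f^R\|_\infty\le C_sR^{-s}$ via the $\mathcal{B}^0$-tail estimate (this is exactly the one observation the paper singles out), and invoke \cite[Theorem~2.6]{BDJR21}. Two small corrections: the $L_2$-Riesz basis property you need for $\Phi_R$ comes from Theorem~\ref{thm:FsWs_d} with $s=0$ (or directly from \cite{SV}), not from Theorem~\ref{thm:Barron}; and your closing concern is unnecessary, since \cite{BDJR21} is stated precisely for bounded Riesz systems, not orthonormal ones.
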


\begin{proof}
This is a rather direct application of~\cite[Theorem~2.6]{BDJR21}, see also~\cite[Lemma~9]{K24}. 
For the precise form of~\eqref{eq:alg-bp}, 
it remains to observe that 
\[
% \sup_{f\in \BB^s([0,1])}\, 
% \inf_{g\in V_R}\Vert f - g \Vert_\infty 
\Vert f^R \Vert_\infty 
\,\le\, C\, R^{-s} \,\|f\|_{\BB^s}
\]
with $f$ as in~\eqref{eq:f_Riesz_d} and 
% $f^R:=\sum_{k\,\righttriangleplus\,0: \|k\|_2>R} \Bigl[\alpha_k\CalC_k+\beta_k\CalS_k\Bigr]$. 
$f^R:=\sum_{\ell\,\righttriangleplus\,0: \|\ell\|_2>R} \Bigl[\alpha_\ell\CalC_\ell+\beta_\ell\CalS_\ell\Bigr]$. 
%(The left-hand side of the inequality would be called $E^\infty_{V_R}(\BB^s)$ in the notation of~\cite{K24}.)
This follows the lines of~\eqref{eq:ANN-Bs-calc} using
$\|f\|_\infty\le \|f\|_{\CalB^0}$ for all $g\in\CalB^0=\BB^0$, 
since the Riesz system is bounded by one.
\end{proof}

\medskip

A careful analysis of $\sigma_k(f,\mathcal{B}^0,V_R)$ shall lead to bounds in~\eqref{eq:Lp-error-bp} that also reflect the smoothness $s$, possibly optimal and/or with a mild dependence on $d$. 
However, since we only consider small $s$ anyway, we only use the obvious 
$\sigma_k(f,\mathcal{B}^0,V_R)\le \|f\|_{\mathcal{B}^0} \le C'_s \|f\|_{\BB^s}$, 
where the latter inequality follows from Theorem~\ref{thm:Barron}, 
to obtain the following corollary. 
This shows that a linear-in-$d$ number of samples is enough in dimension $d$.

\begin{cor}
For $0<s<1$ and $2\le p\le\infty$, 
there is $C\in\N$, independent of $d$, 
such that the following holds. 
For $\eps>0$ and 
\begin{equation*}
    N \ge C\, d\, \eps^{-p} \,\log^3(1/\eps) 
    % \, \log\bigl(N(R,d)\bigr)  
    % \, \log(d/\eps),   
\end{equation*}
% for some $C\in\N$ independent of $d$. 
the basis pursuit denoising  
% $\widehat{f}^{bp}_{N}:=\widehat{f}^{bp}_{R,X}$
$\widehat{f}^{bp}_{R,X}$
from~\eqref{eq:alg-bp} 
with $R=\eps^{-p/(2s)}$ and the points $x_1,\ldots,x_N$ being chosen iid w.r.t.~the uniform distribution on $[0,1]^d$
satisfies with probability $1-C \eps^2$ 
% and for every $2\le p\le\infty$
the bound 
\begin{equation*}%\label{eq:Lp-error-bp}
% \sup_{f\in W^s{[0,1]^d}}\, 
\Vert f - \widehat{f}^{bp}_{X,R} \Vert_p 
\;\le\; \eps\cdot\|f\|_{\BB^s}
\end{equation*} 
for all $f\in \BB^s([0,1]^d)$ simultaneously. 
\end{cor}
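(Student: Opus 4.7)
The proof is essentially a calibration exercise on top of Proposition~\ref{prop:bp}, so the plan is to apply that proposition with a carefully balanced choice of~$k$, and then to verify that each ingredient ($d$-dependence, probability, error, sample count) matches the claim.

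First, I set $R := \eps^{-p/(2s)}$ as prescribed, and pick $k := \lceil C' \eps^{-p}\rceil$ for a constant $C' = C'(s,p)$ to be fixed. From Theorem~\ref{thm:Barron} we get the crude bound
\[
\sigma_k(f,\mathcal{B}^0,V_R) \;\le\; \|f\|_{\mathcal{B}^0} \;\le\; \widetilde C_s\,\|f\|_{\BB^s},
\]
and substituting this into~\eqref{eq:Lp-error-bp} yields the upper bound
\[
\bigl\|f-\widehat f^{bp}_{R,X}\bigr\|_p \;\le\; C_s\bigl(\widetilde C_s\, k^{-1/p} + k^{1/2-1/p} R^{-s}\bigr)\,\|f\|_{\BB^s}.
\]
With my choice of $R$ and $k$, I compute $R^{-s} = \eps^{p/2}$ and $k^{1/2-1/p}\,R^{-s} \asymp \eps^{-p(p-2)/(2p)}\cdot \eps^{p/2} = \eps$, while $k^{-1/p}\asymp \eps$, so both terms are of order~$\eps$ and the constant $C'$ can be tuned to absorb $C_s\widetilde C_s$ into the factor~$1$ on the right.

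Next, the sample condition from Proposition~\ref{prop:bp} reads $N \ge C_s\, k\,\log^2(k)\,\log N(R,d)$. The factor $k\log^2(k) \asymp \eps^{-p}\log^2(1/\eps)$ is clear, so the remaining task is to bound $\log N(R,d)$ by $\lesssim d\log(1/\eps)$ \emph{uniformly in $d$}. For this I use the trivial estimate $N(R,d)\le (2R+1)^d$, giving
\[
\log N(R,d) \;\le\; d\log(2R+1) \;\lesssim\; d\log(1/\eps)
\]
for $\eps$ small enough (which is where the quantitative content of the corollary lives anyway). Combining these factors delivers exactly the stated threshold $N \ge C\, d\,\eps^{-p}\log^3(1/\eps)$.

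The probability bound $1 - C\eps^2$ follows since $k\asymp \eps^{-p}$ with $p\ge 2$ gives $1/k^2 \lesssim \eps^{2p}\le \eps^2$ for $0<\eps<1$. I do not see a real obstacle in this proof: the trickiest point is the balancing step, which rests on the specific algebraic identity $k^{-1/p}\asymp k^{1/2-1/p}\cdot\eps^{p/2}\asymp \eps$ that uniquely singles out $k\asymp\eps^{-p}$ as the optimal choice for the given~$R$. All constants entering the argument depend only on~$s$ and~$p$ (through Proposition~\ref{prop:bp} and Theorem~\ref{thm:Barron}); the sole source of $d$-dependence is the crude lattice-counting bound above, which enters linearly and matches the claim.
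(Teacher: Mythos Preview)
Your proof is correct and follows essentially the same route as the paper: apply Proposition~\ref{prop:bp} with $R=\eps^{-p/(2s)}$ and $k\asymp\eps^{-p}$, bound $\sigma_k$ trivially via Theorem~\ref{thm:Barron}, and control $\log N(R,d)$ by the crude lattice estimate $N(R,d)\le(3R)^d$ (you use $(2R+1)^d$, which is the same thing). Your write-up is in fact more detailed than the paper's, which compresses all of this into a single sentence.
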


\begin{proof}
We apply Proposition~\ref{prop:bp} with 
$R=\eps^{-p/(2s)}$ and $k=\lceil(C_s/\eps)^{p} \rceil$, 
and use the (rough) bound 
$N(R,d)\le (3R)^d$ for $R\ge1$. 
% from Lemma~\ref{lem:lattice_d}.
\end{proof}

\bigskip

As already used in Theorem~\ref{thm:ANN_Bs}, the approximation 
$\widehat{f}^{bp}_{X,R}$ 
% is an element
can be represented as an ANN from~$\Upsilon^{W,L}_d$ whenever 
 $W\ge4k\asymp (1/\eps)^{p}$ and 
$L\ge 4+\log_2(R\cdot \sqrt{\min(R,d)}) \asymp \log(1/\eps)$. 
% for $\eps\gtrsim d^{-2s/p}$.

Rephrasing this with the number of sample points, we obtain for 
% any $\delta>0$ 
$\BB^s=\BB^s([0,1]^d)$ with $0<s<1$ 
that 
\begin{equation*}% \label{eq:gn-Bs}
% g_n^{{\rm NN}}(\BB^s,L_p,4n,C\log_2(Rd)) 
g_n^{{\rm NN}}(\BB^s,L_p,W,L) 
\;\le\; C\,\left(\frac{d}{n} \cdot \log^3(n/d)\right)^{1/p} 
\end{equation*}
for $W\ge C\,n/(d\,\log^3(n))$ and $L\ge C\,\log(n/d)$, where 
$C>0$ only depends on $p$ and $s$. 
(Again, we may use ANNs with $W=d+3$ and $L\ge C\,n/(d\log(n))$.)

This should be compared to~\cite[Theorem~3]{Barron2} where a slightly smaller linear-in-$d$ bound has been observed for ANNs with $L=1$ and $W\asymp \sqrt{n/(d\log(n))}$, but only for $p=2$ and a different Barron-type space (with $s=1$). 
Again, the advantage of our approach is that we can rely on techniques from linear approximation in order to treat deep networks  without using (very) tailored methods.

%%%%%%%%%%%%%%%%%%%%%%%%%%%%%%%%%%%%%%%%%%%%%%%%%%%%%%%%%%%%
\thebibliography{99}

%\bibitem{Bach} F. Bach. Breaking the curse of dimensionality with convex neural networks,
%%\emph{Journal of Machine Learning Research}
%\emph{J. Mach. Learn. Res.} 18.19 (2017), 1--53.

\bibitem{Barron3} A. R. Barron, Neural net approximation. In Proc. 7th Yale workshop on adaptive and learning systems, vol. 1 (1992), 69--72.

\bibitem{Barron} A. R. Barron. Universal approximation bounds for superpositions of a sigmoidal function,
\emph{IEEE Trans. Inf. Theory} {39}(3) (1993), 930--945.

\bibitem{Barron2} A. R. Barron. Approximation and estimation bounds for artificial neural networks.
\emph{Mach. Learn.} 14(1) (1994), 115--133.

\bibitem{BSU} F. Bartel, M. Sch\"afer, and T. Ullrich. Constructive subsampling of finite frames with applications in optimal function recovery,
\emph{Appl. Comput. Harmon. Anal.}  65 (2023), 209--248.

\bibitem{Bartlett} P. Bartlett, V. Maiorov, and R. Meir, Almost linear VC dimension bounds for piecewise polynomial networks,
Advances in neural information processing systems (NIPS) 11 (1998). 

\bibitem{BGKP} H. B\"olcskei, P. Grohs, G. Kutyniok, and P. Petersen.
Optimal approximation with sparsely connected deep neural networks, \emph{SIAM J. Math. Data Sci.} 1 (2019), 8--45.

\bibitem{BDJR21} S. Brugiapaglia and S. Dirksen and H. C. Jung and H. Rauhut. 
Sparse recovery in bounded Riesz systems with applications to numerical methods for PDEs, 
\emph{Appl. Comput. Harmon. Anal.} 53 (2021), 231--269.

\bibitem{CT06}
 E. J. Cand\`es and T. Tao. Near-optimal signal recovery from random projections: universal encoding strategies?,
 \emph{IEEE Trans. Inform. Theory} 52 (2006), 5406--5425.

\bibitem{CPV23} A. Caragea, P. Petersen, and F. Voigtlaender. Neural network approximation and estimation of classifiers with classification boundary in a Barron class,
\emph{Ann. Appl. Probab.} {33}(4) (2023), 3039--3079.

\bibitem{CS03}
O. Christensen and D. T. Stoeva. $p$-frames in separable Banach spaces. \emph{Adv. Comput. Math.} 18 (2003), 117--126. 

\bibitem{DD+22} I. Daubechies, R. DeVore, S. Foucart, B. Hanin, and G. Petrova.  
Nonlinear approximation and (deep) ReLU networks, 
\emph{Constr. Appr.} {55} (2022), 127--172.

%\bibitem{16} R. DeVore, B. Hanin, and G. Petrova. Neural network approximation, \emph{Acta Numer.} {30} (2021), 327--444.

\bibitem{DeVore} R. DeVore. Nonlinear approximation, \emph{Acta Numer.} {7} (1998), 51--150.

\bibitem{DeVore-2} R. DeVore, B. Hanin, and G. Petrova. Neural network approximation, \emph{Acta Numer.} {30} (2021), 327--444.

\bibitem{DHM} R. DeVore, R. Howard, and C. Micchelli. Optimal nonlinear
approximation, \emph{Manuscripta Math.} {63}(4) (1989), 469--478.

\bibitem{DKU} M.~Dolbeault, D.~Krieg, and M.~Ullrich. 
A sharp upper bound for sampling numbers in $L_2$,
\emph{Appl. Comput. Harmon. Anal.} {63} (2023), 113--134.

\bibitem{D06} D. L. Donoho, Compressed sensing, \emph{IEEE Trans. Inform. Theory} 52 (2006), 1289--1306.

\bibitem{DN21}
D.~Dung and V.~K.~Nguyen, Deep ReLU neural networks in high-dimensional approximation, 
\emph{Neural Networks} 142 (2021), 619--635.

\bibitem{EMW22}
W.~E, C.~Ma, and L.~Wu. The Barron space and the flow-induced function spaces for neural network models, \emph{Constr Appr.} 55 (2022), 369--406. 
% https://doi.org/10.1007/s00365-021-09549-y

\bibitem{EW20}
W.~E and  S.~Wojtowytsch.
A priori estimates for classification problems using neural networks (2020), 
arXiv: 2009.13500.

\bibitem{EW-Banach}
W.~E and S.~Wojtowytsch. 
On the Banach Spaces Associated with Multi-Layer ReLU Networks: Function Representation, Approximation Theory and Gradient Descent Dynamics, \emph{CSIAM Transactions on Applied Mathematics} {1(3)} (2020), 387--440. 
% https://doi.org/10.4208/csiam-am.20-211

%\bibitem{EW22}
%W.~E and  S.~Wojtowytsch.
%Some observations on high-dimensional partial differential equations with Barron data, 
%Proceedings of the 2nd Mathematical and Scientific Machine Learning Conference, 
%\emph{Proc. Mach. Learn. Res.} 145 (2022), 253--269.

\bibitem{E1} W. E and S. Wojtowytsch. Representation formulas and pointwise properties for Barron functions,
\emph{Calc. Var. Partial Differ. Equ.} 61(2) (2022), 1--37.

\bibitem{EPGB}
D. Elbr\"achter, D. Perekrestenko, P. Grohs, and H. B\"olcskei.
Deep Neural Network Approximation Theory,
\emph{IEEE Trans. Inf. Theory} 67(5) (2021), 2581--2623.

\bibitem{FR13} 
S. Foucart and H. Rauhut. 
\emph{A Mathematical Introduction to Compressive Sensing},
Birkh\"auser, New York (2013).

\bibitem{GG84}
A. Yu. Garnaev and E. D. Gluskin. The widths of a Euclidean ball, \emph{Soviet Math. Dokl.} 30 (1984), 200--204.

\bibitem{GKP} I. G\"uhring, G. Kutyniok, and P. Petersen. Error bounds for approximations with deep ReLU neural networks in $W^{s,p}$ norms, \emph{Anal. Appl.} 18.05 (2020), 803--859.

\bibitem{HLP} G. H. Hardy, J. E. Littlewood, and G. P\'olya. \emph{Inequalities}, Cambridge University Press (1936). 

%\bibitem{hardy}G.H. Hardy, On the expression of a number as the sum of two squares, Quarterly Journal of Mathematics, {\bf 46} (1915), 263--283. 

\bibitem{HLS} H. Hedenmalm, P. Lindqvist and K. Seip. A Hilbert space of Dirichlet series
and systems of dilated functions in $L_2(0,1)$, \emph{Duke Math. J.} 86 (1997), 1--37.

\bibitem{HNUW14}
A. Hinrichs, E. Novak, M. Ullrich, H. Wo\'zniakowski,
The curse of dimensionality for numerical integration
of smooth functions, \emph{Math. Comp.} 83 (2014), 2853--2863.

\bibitem{HNUW17}
A. Hinrichs, E. Novak, M. Ullrich, H. Wo\'zniakowski, 
Product rules are optimal for numerical integration in classical 
smoothness spaces, 
\emph{J. Complexity} 38 (2017), 39--49.

\bibitem{JUV23}
 T. Jahn, T. Ullrich, F. Voigtlaender. Sampling numbers of smoothness classes
via $\ell_1$-minimization, \emph{J. Compl.} 79 (2023), 101786.

\bibitem{Jones} L. K. Jones. A simple lemma on greedy approximation in Hilbert space and convergence rates for projection
pursuit regression and neural network training, \emph{Ann. Stat.} 20(1) (1992), 608--613.

%\bibitem{KB18}
%J. M. Klusowski and A. R. Barron, Approximation by combinations of ReLU and Squared ReLU ridge Functions With  $\ell_1$  and $\ell_0$ Controls, in IEEE Transactions on Information Theory, vol. 64, no. 12, pp. 7649-7656, 2018. 
% doi: 10.1109/TIT.2018.2874447.

\bibitem{KB} J. M. Klusowski and A. R. Barron. Approximation by combinations of $\ReLU$ and squared $\ReLU$ ridge functions with $\ell^1$ and $\ell^0$ controls,
\emph{IEEE Trans. Inform. Theory} 64(12) (2018), 7649–7656.

\bibitem{K24}
D. Krieg. Tractability of sampling recovery on unweighted function classes, 
\emph{Proc. Amer. Math. Soc. Ser. B} 11 (2024), 115--125.

\bibitem{KNS22} D.~Krieg, E.~Novak, and M.~Sonnleitner. 
{Recovery of Sobolev functions restricted to iid~sampling}, 
\emph{Math. Comp.} {91} (2022), 2715--2738.

\bibitem{KPUU23}
D.\,Krieg, K.\,Pozharska, M.\,Ullrich, and T.\,Ullrich.
Sampling recovery in $L_2$ and other norms,
to appear in \emph{Math. Comp.},  arXiv:2305.07539.

% \bibitem{KSUW} D. Krieg, P. Siedlecki, M. Ullrich, and H. Wo\'zniakowski,
% {\it Exponential tractability of $L_2$-approximation with function values},
% Adv. Comput. Math. 49, 18, 2023. %https://doi.org/10.1007/s10444-023-10021-7

\bibitem{KU1} D. Krieg and M. Ullrich. 
Function values are enough for $L_2$-approximation, 
\emph{Found.~Comput.~Math.} {21} (2021), 1141--1151.

\bibitem{KMU} T. K\"uhn, S. Mayer, and T. Ullrich.  
Counting via entropy: new preasymptotics for the approximation numbers of Sobolev embeddings,
\emph{SIAM J. Numer. Anal.} {54}(6) (2016), 3625--3647.

\bibitem{Per24} Y. Li, S. Lu, P. Math\'e, and S. Pereverzyev. Two-layer networks with the $\mathrm{ReLU}^k$ activation function: Barron spaces and derivative approximation,  \emph{Numer. Math.} {156} (2024), 319--344.

\bibitem{LK} P. Lindqvist and K. Seip. Note on some greatest common divisor matrices, \emph{Acta Arith.} {84 (2)} (1998), 149--154.

\bibitem{40} J. Lu, Z. Shen, H. Yang, and S. Zhang. Deep network approximation for smooth functions, \emph{SIAM J. Math. Anal.} {53(5)} (2021), 5465--5506.

\bibitem{Mhaskar'} H. N. Mhaskar, Approximation properties of a multilayered feedforward artificial neural network,
\emph{Adv. Comput. Math.} 1 (1993), 61--80.

\bibitem{Mhaskar} H. N. Mhaskar, Neural networks for optimal approximation of smooth and analytic functions,
\emph{Neural Comput.} 8(1) (1996),  164--177.

\bibitem{MLP16} H. N. Mhaskar, Q. Liao, and T. Poggio,  Learning functions: When is deep better than shallow (2016), arXiv: 1603.00988.

\bibitem{MP16} H. N. Mhaskar and T. Poggio, Deep vs. shallow networks: An approximation theory perspective, 
\emph{Anal. Appl.}  14(6) (2016), 829--848. 
%https://doi.org/10.1142/S0219530516400042 

\bibitem{Petersen-book} P. Petersen and J. Zech. Mathematical theory of deep learning (2024), arXiv:2407.18384.

\bibitem{Pinkus85}
A. Pinkus. {\it $n$-widths in approximation theory}, 
Ergebnisse der Mathematik und ihrer Grenzgebiete. 3. Folge / A Series of Modern Surveys in Mathematics (MATHE3, volume 7), Springer Berlin, Heidelberg (1985). 
% https://doi.org/10.1007/978-3-642-69894-1.

\bibitem{Pinkus99}
A. Pinkus. Approximation theory of the MLP model in neural networks, \emph{Acta Numerica} 8 (1999), 143--195. 
%doi:10.1017/S0962492900002919 

\bibitem{Pisier} G. Pisier, Remarques sur un r\'esultat non publi\'e de B. Maurey (Remarks on an unpublished result of B. Maurey),
\'Ecole Polytechnique, Centre de Math\'ematiques, Palaiseau (1981). %, Exp. No. V, 13 pp.

\bibitem{Ragu} M. Raghu, B. Poole, J. Kleinberg, S. Ganguli, and J. Sohl-Dickstein, On the expressive power of deep neural networks, In international conference on machine learning,
\emph{Proc. Mach. Learn. Res.} (2017), 2847--2854.

\bibitem{Rau07} 
H. Rauhut. Random sampling of sparse trigonometric polynomials, \emph{Appl. Comput. Harmon. Anal.} 22 (2007), 16--42.

\bibitem{RW16} H. Rauhut and R.Ward, Interpolation via weighted $\ell_1$-minimization, \emph{Appl. Comput. Harmon. Anal.} 40  (2016),  321--351.

\bibitem{Schmidt} E. Schmidt. Zur Theorie der linearen und nichtlinearen Integralgleichungen I, \emph{Math. Ann.} {63} (1907), 433--476.

\bibitem{SV} C. Schneider and J. Vyb\'\i ral. A multivariate Riesz basis of ReLU neural networks, {\em Appl. Comput. Harmon. Anal.} {68} (2024), 101605. %arXiv: 2303.00076 [math.NA], 2023.

\bibitem{50} Z. Shen, H. Yang, and S. Zhang. Optimal approximation rate of ReLU networks in terms of width and depth,
\emph{J. Math. Pures Appl.} {157} (2022), 101--135.

\bibitem{S23} J. W. Siegel. {Optimal approximation rates for deep ReLU neural networks on Sobolev and Besov spaces}, \emph{J. Mach. Learn. Res.} {24}  (2023), 1--52.

\bibitem{Siegel21} J. W. Siegel and J. Xu. High-order approximation rates for shallow neural networks with cosine and $\mathrm{ReLU}^k$ activation functions, {\em Appl. Comput. Harmon. Anal.} {58} (2021), 1--26. %arXiv: 2303.00076 [math.NA], 2023.

\bibitem{SXu} J. W. Siegel and J. Xu, Sharp bounds on the approximation rates, metric entropy, and $n$-widths of shallow neural networks   (2021), arXiv: 2101.12365.

\bibitem{SU}
M.~Sonnleitner and M.~Ullrich, On the power of iid information for linear approximation, 
\emph{J. Appl. Numer. Anal.} 1 (2023), 88--126.
%https://doi.org/10.30970/ana.2023.1.88.
%arXiv:2310.12740

\bibitem{T} M. Telgarsky. Representation benefits of deep feedforward networks, {arXiv:1509.08101} (2015).

\bibitem{T2} M. Telgarsky. Benefits of depth in neural networks, Conference on learning theory, 29th Annual Conference on Learning Theory, \emph{Proc. Mach. Learn. Res.} 49 (2016), 1517--1539.

\bibitem{Tit} E. C. Titchmarsh. \emph{The theory of the Riemann zeta-function}, The Clarendon Press, Oxford University Press, New York (1986).

\bibitem{U2020} M. Ullrich. On the worst-case error of least squares algorithms for $L_2$-approximation with high probability, \emph{J.~Complex.} 60 (2020), 101484.

\bibitem{Versh} R. Vershynin, \emph{High-dimensional probability: An introduction with applications in data science}, volume 47, Cambridge university press (2018).

\bibitem{Voigt1} F. Voigtlaender. $L_p$-sampling numbers for the Fourier-analytic Barron space, {arXiv:2208.07605} (2022).

\bibitem{Y17} D. Yarotsky. {Error bounds for approximations with deep ReLU networks}, \emph{Neural Networks} {94} (2017), 103--114.

\bibitem{61} D. Yarotsky. Optimal approximation of continuous functions by very deep ReLU networks, {\em Conference on Learning Theory},
\emph{Proc. Mach. Learn. Res.} (2018), 639--649.

\bibitem{W} A. Wintner. Diophantine approximations and Hilbert’s space, \emph{Amer. J. Math.} {66} (1944), 564--578.

\end{document}